\newcommand*{\Scale}[2][4]{\scalebox{#1}{$#2$}}%
\theoremstyle{plain}
\newtheorem{theorem}{Theorem}
\newtheorem{lemma}{Lemma}
\theoremstyle{definition}
\newtheorem{assumption}{Assumption}
\theoremstyle{remark}
\newtheorem{remark}{Remark}
\icmltitlerunning{Widening the Network Mitigates the Impact of Data Heterogeneity on FedAvg}
\def\th@remark{%
	\thm@headfont{\bfseries}
	\itshape
}
\begin{document}

\twocolumn[
\icmltitle{Widening the Network Mitigates the Impact of Data Heterogeneity on FedAvg}



\icmlsetsymbol{equal}{*}

\begin{icmlauthorlist}
\icmlauthor{Like Jian}{yyy}
\icmlauthor{Dong Liu}{yyy}
\end{icmlauthorlist}

\icmlaffiliation{yyy}{School of Cyber Science and Technology, Beihang University, Beijing, China}

\icmlcorrespondingauthor{Dong Liu}{dliu@buaa.edu.cn}

\icmlkeywords{Machine Learning, ICML}

\vskip 0.3in
]



\printAffiliationsAndNotice{}  

\begin{abstract}
Federated learning (FL) enables decentralized clients to train a model collaboratively without sharing local data. A key distinction between FL and centralized learning is that clients' data are non-independent and identically distributed, which poses significant challenges in training a global model that generalizes well across heterogeneous local data distributions. In this paper, we analyze the convergence of overparameterized FedAvg with gradient descent (GD). We prove that the impact of data heterogeneity diminishes as the width of neural networks increases, ultimately vanishing when the width approaches infinity. In the infinite-width regime, we further prove that both the global and local models in FedAvg behave as linear models, and that FedAvg achieves the same generalization performance as centralized learning with the same number of GD iterations. Extensive experiments validate our theoretical findings across various network architectures, loss functions, and optimization methods.
\end{abstract}

\section{Introduction}
Federated Learning (FL) is a distributed machine learning paradigm that enables collaborative model training across distributed clients while preserving data locality~\cite{mcmahan2017communication}, a critical feature for privacy-sensitive domains such as healthcare, finance, and mobile computing, where regulatory or infrastructural constraints prohibit data centralization. However, a fundamental challenge in FL arises  from the intrinsic non-independent and identically distributed (non-IID) nature of client data~\cite{li2021fedbn}, where local datasets exhibit significant distributional shifts due to user-specific behaviors, geographic variations, or device-specific usage patterns. Such statistical heterogeneity leads to divergent local optimizations, degrading model convergence and generalization~\cite{li2019convergence, zhao2018federated}.

To address the challenges posed by non-IID data in FL, numerous research efforts have emerged,  including client regularization~\cite{li2020federated}, adaptive optimization frameworks~\cite{karimireddy2020scaffold, reddi2020adaptive}, personalized model architectures~\cite{jeong2022factorized, t2020personalized}, and etc. While these approaches have demonstrated empirical success, they often require intricate hyperparameter tuning or restrictive assumptions about convexity, data similarity, or gradient boundedness, limiting their applicability in practical highly heterogeneous environments.

In parallel, overparameterized neural networks have garnered prominence in centralized learning for their remarkable ability to achieve strong generalization despite non-convex optimization landscapes, underpinned by theoretical frameworks such as the neural tangent kernel (NTK)~\cite{jacot2018neural}. These networks exhibit implicit regularization properties, enabling  interpolation of complex data distributions while maintaining robust generalization~\cite{neyshabur2014search, neyshabur2019role, neyshabur2019towards, lee2018deep}, motivating  a pivotal question: Can increasing the network width inherently mitigate the effects of data heterogeneity in FL?

In this work, we analyze the convergence of FedAvg with gradient descent (GD) for multi-layer overparameterized neural networks and establish that the impact of data heterogeneity can  indeed  be reduced by widening the network. Further, we prove that as the network width approaches infinity, both global and local models behave as linear models. Strikingly, in this regime, FedAvg and centralized GD yield identical model parameters and outputs under matched iterations, achieving equivalent generalization performance. To the best of our knowledge, this is the first work to provide a quantitative analysis explicitly linking the width of neural networks to the impact of data heterogeneity on both FL training and generalization. Our key contributions are:
\vspace{-2mm}
\begin{itemize}  
	\item \textbf{Theoretical guarantees for heterogeneity reduction:} We prove that the model divergence is bounded and decreases inversely proportional to the square root of the network width asymptotically, without relying on restrictive assumptions on convexity or gradient similarity/boundedness. This bound is vital in the convergence analysis of FedAvg, revealing that the impact of data heterogeneity slows the convergence rate, but vanishes when the width approaches infinity, allowing the convergence rate to recover linearity. 
	\item \textbf{Bridging federated and centralized learning:} We extend the NTK theory from centralized learning to FL with multi-layer networks, showing that infinite network width induces constant global and local NTKs, further linearizes both global and local models. Notably, we prove the equivalence between infinite-width FedAvg and centralized GD, thereby achieving the same generalization performance, bridging decentralized and centralized learning paradigms.
	\item \textbf{Empirical validation:} We conducted numerous experiments on MINST and CIFAR-10 datasets, spanning diverse network structures, loss functions, and optimizers to validate our theoretical analysis.  
\end{itemize}

\section{Related Work}
\paragraph{Data Heterogeneity.} While prior works have provided convergence analyses of federated learning with non-IID data, many rely on restrictive assumptions. For example, some studies assume convex loss functions \cite{cho2020client, khaled2019first, li2019convergence}, others require bounded gradient dissimilarity \cite{li2020federated, zhang2023fed, wang2020tackling}, and some assume bounded gradients \cite{li2019convergence, cho2020client}. These conditions are often difficult to satisfy in practice. In this paper, we analyze the convergence of FedAvg via NTK without imposing convexity, bounded gradients, or bounded gradient dissimilarity. 

To address data heterogeneity, various techniques have been proposed. Regularization-based methods \cite{li2020federated, durmus2021federated} introduce a regularization term during local updates to mitigate client drift. Client selection approaches \cite{cho2020client, goetz2019active, zhang2023fed} choose a subset of clients whose aggregated updates approximate those of all clients. Personalized federated learning  \cite{jeong2022factorized, jiangheterogeneous} allows clients to leverage aggregated knowledge while fine-tuning on their local data. Other methods, such as SCAFFOLD \cite{karimireddy2020scaffold} and FedNova \cite{wang2020tackling}, correct optimization bias between clients and the global model to improve convergence. FedMA \cite{wang2020federated} constructs the global model layer by layer to diminish the impact of heterogeneous data, and MOON \cite{li2021model} compares local and global models to correct client drift.

\paragraph{Overparameterized FL.} 
Recent works have made substantial progress in overparameterized federated learning. For instance, \citet{li2021fedbn} proposed FedBN, which applies local batch normalization to mitigate heterogeneity, and analyzed its convergence using NTK. However, their analysis is limited to two-layer networks, limiting the model capacity. Under the same two-layer assumption, \citet{jiangheterogeneous} proposed a local-global update mixing method and analyzed its convergence via NTK, while \citet{huang2021fl} showed that overparameterized FedAvg converges to the global optimal solution with linear convergence rate. 

Moving beyond the depth constraint on networks, \citet{deng2022local} proved that overparameterized FedAvg with ReLU activation converges in polynomial time with stochastic gradient descent (SGD). Fed-ensemble \cite{shi2021fed} employs model ensembling to enhance the generalization of FL, supported by an NTK-based convergence analysis. Yet, data heterogeneity is not considered. To address data heterogeneity, \citet{yue2022neural} proposed to let clients transmit Jacobian matrices rather than weights or gradients, providing a more expressive data representation. \citet{yu2022tct}  observed that the learning of final layers in FL is strongly influenced by non-convexity and propose the train-convexify-train (TCT) method to alleviate these issues. 

Among the most relevant works, \citet{song2023fedavg} established that overparameterized FedAvg achieves linear convergence to zero training loss, and empirically observed that wide neural networks achieve better and more stable performance in FL. By contrast, we theoretically prove that the model divergence in FL caused by data heterogeneity is bounded by $\mathcal{O}(n^{-\frac{1}{2}})$, where $n$ is the network width, thereby establishing the first quantitative relationship between network width and the mitigation of heterogeneity. Beyond this, we unveil the generalization performance of overparameterized FL by proving that infinite-width FedAvg and centralized learning yield identical model outputs under matched training iterations, while the theoretical analysis in \cite{song2023fedavg} solely focused on training loss.

\section{Notations and Problem Formulation}\label{section3}
In this section, we establish the basic notations, formulate the FL problem, and introduce the metric quantifying the impact of data heterogeneity. 

We consider a standard FL setup consisting of a server and $M$ clients. The local training dataset of client $i$ is denoted by $\mathcal{D}_i$ with $\mathcal{X}_i = \left\{x|(x,y)\in \mathcal{D}_i\right\}$ and $\mathcal{Y}_i = \left\{y|(x,y)\in \mathcal{D}_i\right\}$ representing the set of inputs and labels of client $i$, respectively, where $x\in\mathbb{R}^{n_0}$ and $y\in\mathbb{R}^{k}$. The global dataset is defined as the union of all clients dataset, i.e., $\mathcal{D}=\cup_{i=1}^{M} \mathcal{D}_i$ with $\mathcal{X}=\cup_{i=1}^{M} \mathcal{X}_i$ and $\mathcal{Y}=\cup_{i=1}^{M} \mathcal{Y}_i$.

Suppose that each client trains a $L$-layer fully-connected neural network (FNN), where the width of the $l$-th layer is denoted by $n_l$. Then,  the output of the $l$-th layer of client $i$'s model can be expressed as
\begin{equation}
	f_{i,l}(x)=\left\{
	\begin{aligned}
		& x, && l=0 \\
		& \sigma\left(W_{i,l} f_{i, l-1}(x)+b_{i, l}\right), && 0<l<L \\
		& W_{i, L}f_{i, L-1}(x)+b_{i, L}, && l=L
	\end{aligned}
	\right. 
\end{equation}
where $W_{i,l} \in \mathbb{R}^{n_{l}\times n_{l-1}}$ and $b_{i,l}\in \mathbb{R}^{n_l}$ are the weight and bias of the $l$-th layer of client $i$, respectively, and $\sigma(\cdot)$ is the activation function. 

We define $\theta_i={\rm vec} \left(\{W_{i,l}, b_{i,l} \}_{l=1}^{L}\right) \in \mathbb{R}^{w}$ as the vector of all trainable parameters of client $i$'s model. Then, the model output of a single input sample $x$ can be expressed as $f_i(x, \theta_i)$ and the concatenated output of all samples in $\mathcal X_i$ is denoted by $f_i(\mathcal{X}_i, \theta_i) = {\rm vec} \big(\left\{f(x,\theta_i)\right\}_{x\in \mathcal X_i}\big) \in \mathbb{R}^{k|\mathcal{D}_i|}$. Similarly, we further define the global model $f$ having the same structure as $f_i$ but with parameter $\theta \in \mathbb{R}^{w}$. Analogously, its concatenated output of all samples in $\mathcal X$ is  denoted by $f(\mathcal{X}, \theta) = {\rm vec} \big(\left\{f(x,\theta)\right\}_{x\in \mathcal X}\big) \in \mathbb{R}^{k|\mathcal{D}|}$.  To simplify the notations, we use the short hand $f_i(\theta_i) \triangleq f_i(\mathcal{X}_i, \theta_i)$ and $f(\theta) \triangleq f(\mathcal{X}, \theta)$  in the following.

We consider the mean square error (MSE) loss function, and hence the loss function of client $i$ is expressed as
\begin{equation}
	\Phi_i=\frac{1}{2|\mathcal{D}_i|}\sum_{\left(x,y\right)\in\mathcal{D}_i}\left\| f_i\left(x, \theta_i\right)-y \right\|_2^2,
\end{equation}
The goal is to minimize the global loss function defined by $\Phi=\sum_{i=1}^{M}p_i\Phi_i$, where $p_i = \frac{|\mathcal{D}_i|}{|\mathcal{D}|}$.

At initialization, each client's model parameters are sampled from the Gaussian distribution as follows
\begin{align}
	W_{1,l}^0 &= \cdots = W_{M,l}^0   \sim \mathcal{N}\big(0,\tfrac{\sigma_{W_l}^2}{n_l}\big), \\ 
	b_{1, l}^0 &= \cdots = b_{M, l}^0 \sim \mathcal{N}\left(0,\sigma_{b}^2\right).
\end{align}
Upon initialization, each client updates its local model minimizing the loss function by GD for $\tau$ iterations. For every $\tau$ local iterations, each client uploads its local model to the server for model aggregation, and then the server broadcasts the aggregated model to each client for the next round. Let $t$ denote the number of global rounds. Then, the model parameters of client $i$ after the $r$-th ($1\leq r \leq \tau $) local iteration in the $t$-th global round can be denoted by $\theta^{t\tau + r}_i$. 

Specifically, during the $t$-th and $(t+1)$-th global round, say in the $(t\tau +r + 1)$-th total iteration, the model parameters are updated by GD as
\begin{equation}\label{eq:equation13}
	\theta_i^{t\tau+r + 1} \leftarrow \theta_i^{t\tau+r }-\frac{\eta}{|\mathcal{D}_i|} J_i\left(\theta_i^{t\tau+r}\right)g_i\left(\theta_i^{t\tau+r}\right)
\end{equation}
where $\eta$ is the learning rate and
\begin{align}
J_i\left(\theta_i\right) &=  \nabla_{\theta_i} f_i\left(\theta_i\right) \in \mathbb{R}^{w\times k|\mathcal D_i|}, \\
g_i\left(\theta_i\right) &= f_i\left(\theta_i\right) - {\rm vec}(\mathcal Y_i) 
\end{align}
are the local Jacobian matrix and error vector, respectively. Similarly, we define the global Jacobian matrix and error vector respectively, as
\begin{align}
	J\left(\theta\right) & \triangleq  \nabla_{\theta} f\left(\theta\right) \in\mathbb{R}^{w\times k|\mathcal D|}, \\
	g\left(\theta\right) & \triangleq  f\left(\theta\right)-{\rm vec}(\mathcal{Y}).
\end{align}
In the $(t+1)$-th global round, the model is aggregated by FedAvg, i.e., 
\begin{equation}\label{eq:equation14}
	\theta^{\left(t+1\right)\tau}=\sum_{i=1}^{M}p_i\theta_i^{t\tau+\tau},
\end{equation}
where we let $t\tau+\tau$ and $(t+1)\tau$ to denote the time instants  before and after the $(t+1)$-th global aggregation, respectively. 
Then, the aggregated parameters $\theta^{(t+1)\tau}$ are broadcast to all clients, which yields $\theta_i^{(t+1)\tau} = \theta^{(t+1)\tau}, \forall i$.
Consequently, the relation between the global and local Jacobian and error at the $t$-th global round can be described by
\begin{align}
	g_i\left(\theta_i^{t\tau}\right)&  = P_ig\left(\theta^{t\tau}\right), \\
J\left(\theta^{t\tau}\right)& =\sum_{i=1}^{M}J_i\left(\theta_i^{t\tau}\right)P_i,
\end{align}
where $P_i\in \mathbb{R}^{k|\mathcal{D}_i|\times {k|\mathcal{D}|}}$ is a projection matrix defined as
\begin{equation}
	P_i =
	\begin{pNiceArray}{cccccccccc}[margin=1pt]
	0       & \cdots &       0 & 1 & 0 & \cdots & 0 & 0 & \cdots & 0 \\
    0       & \cdots &      0 & 0 & 1 & \cdots & 0 & 0 & \cdots & 0 \\
    \vdots  & \vdots & \vdots & \vdots & \vdots & \ddots & \vdots & \vdots & \vdots & \vdots \\
    0       & \cdots & 0 & 0 & 0 & \cdots & 1 & 0 & \cdots & 0 \\
    \CodeAfter
     \UnderBrace{4-1}{4-3}{ \Scale[0.6]{k|\mathcal D_i|\times} \Scale[0.5]{\sum_{j=1}^{i-1}} \Scale[0.6]{k|\mathcal D_j|}}[shorten]
    \UnderBrace{4-4}{4-7}{\Scale[0.6]{k|\mathcal D_i| \times k|\mathcal D_i|}}[shorten]
     \UnderBrace{4-8}{4-10}{ \Scale[0.6]{k|\mathcal D_i|\times} \Scale[0.5]{\sum_{j=i+1}^{M}} \Scale[0.6]{k|\mathcal D_j|}}[shorten]
	\end{pNiceArray},
	\vspace{3mm}
\end{equation}
whose operator norm $\|P_i\|_{\rm op} = 1$.

To facilitate convergence analysis, the following notations are also introduced: 
\begin{align}
	f\left(\theta_i^{t\tau+r}\right)&= f\left(\mathcal{X}, \theta_i^{t\tau+r}\right), \\
	J\left(\theta_i^{t\tau+r}\right)&= \nabla_{\theta_i^{t\tau+r}} f\left(\theta_i^{t\tau+r}\right), \\
	g\left(\theta_i^{t\tau+r}\right)&= f\left(\theta_i^{t\tau+r}\right)-{\rm vec}(\mathcal{Y}),
\end{align}
where $f\left(\theta_i^{t\tau+r}\right)$ denotes the output of the global model when its parameters are replaced with client $i$'s parameters in round $t\tau+r$.

In the $(t+1)$-th global round, the degree to which client $i$'s model deviates from the global model is characterized by
\begin{equation}
	\left\|\Delta \theta_i^{(t+1)\tau}\right\|_2= \left\|\theta_i^{t\tau+\tau} - \theta^{(t+1)\tau}\right\|_2 \label{eqn:di}
\end{equation}
Therefore, we use $\sum_{i=1}^{M}p_i\big\|\Delta  \theta_i^{(t+1)\tau}\big\|_2$  to quantify the degree of data heterogeneity and term it as \emph{model divergence}. Apparently, when the data is IID, $\sum_{i=1}^{M}p_i\big\|\Delta  \theta_i^{(t+1)\tau}\big\|_2$ approaches zero as the number of local data increases. By contrast, when the data is non-IID, $\sum_{i=1}^{M}p_i\big\|\Delta  \theta_i^{(t+1)\tau}\big\|_2$ remains non-zero and increases with the degree of data heterogeneity.

\section{Convergence Analysis}
In this section, we analyze the convergence of overparameterized FedAvg. We derive the  bound on the model divergence explicitly  and analyze how it influences the convergence rate and error.

We first introduce several notations regarding  overparameterized neural networks. Let $n =\min \{n_1,n_2,\cdots,n_L\}$ and define the global NTK matrix \cite{lee2019wide}  in the $t$-th global round as
\begin{equation} \label{eqn:ntk}
	\Theta^{t\tau}=\frac{1}{n}J\left(\theta^{t\tau}\right)^TJ\left(\theta^{t\tau}\right).
\end{equation}
Meanwhile, the analytic NTK matrix is defined as
\begin{equation}\label{eq:equation17}
	\Theta = \lim_{n\to\infty}\Theta^{0}.
\end{equation}
Analogously, the local NTK matrix of the standard parameterization in the $(t\tau+r)$-th iteration is defined as
\begin{equation}\label{equation21}
	\Theta_i^{t\tau+r}=\frac{1}{n}J\left(\theta_i^{t\tau+r}\right)^TJ_i\left(\theta_i^{t\tau+r}\right)P_i.
\end{equation}

The following assumptions are made to facilitate  the convergence analysis. 
\begin{assumption}\label{ass:wide}
    	The minimum width among all hidden layers $n$ is sufficiently large such that the terms of order $\mathcal{O}(n^{-1})$ and higher are omitted.
\end{assumption}
\begin{assumption}\label{ass:ntkfullrank}
	 The analytic NTK $\Theta$ is full rank, i.e., the minimum eigenvalue $\lambda_{m}$ of $\Theta$ satisfies $\lambda_m>0$.
\end{assumption}
\begin{assumption}\label{ass:compact}
	The norm of every input data is bounded, i.e., $\|x\|_2 \leq 1$.
\end{assumption}
\begin{assumption}\label{ass:activation}
	The activation function $\sigma$ satisfies \[
	|\sigma(0)|, \|\sigma'\|_{\infty}, \sup_{\substack{x \neq x'}} \frac{|\sigma(x)-\sigma(x')|}{|x-x'|} < \infty
	\]
\end{assumption}
Assumptions~\ref{ass:wide} $\sim$ \ref{ass:activation} are common in analyzing the overparameterized neural network  \cite{lee2019wide, shi2021fed}.

The learning rate is set to $\eta = \frac{\eta_0}{n}$ and $\eta_0$ is a constant independent of $n$, which results in infinitesimally updates during each gradient descent step when $n$ is sufficiently large. Consequently, we adopt gradient flow as an approximation of gradient descent, which can be expressed as
\begin{equation}
	\frac{d\theta_i^{t\tau+r}}{dr}=-\frac{\eta}{|\mathcal{D}_i|} J_i\left(\theta_i^{t\tau+r}\right) g_i\left(\theta_i^{t\tau+r}\right) .
\end{equation}
Next, we present the main theorem regarding the bound of model divergence and the convergence of overparameterized FedAvg.

\begin{theorem}\label{theorem1}
	Under Assumptions~\ref{ass:wide} to \ref{ass:activation}, for any small $\delta_0>0$, there exist $R_0>0$, $N>0$, $\eta_0>0$, $C>0$ and $C_1>0$, such that for any $n\geq N$, the following holds with probability at least $(1-\delta_0)$ over random initialization:
	\begin{align}
		&\sum_{i=1}^{M}p_i\left\|\Delta  \theta_i^{(t+1)\tau}\right\|_2 \leq \zeta \triangleq \frac{2\eta_0\tau CR_0}{\sqrt{n}\left(1-q\right)}, \label{eq:het}\\
		&\left\|g(\theta^{t\tau})\right\|_2\leq q^{t}R_0+\frac{2\eta_0\tau CC_1R_0\zeta\left(1-q^{t}\right)}{\left(1-q\right)^2}, \label{eq:rate}\\
		& \left\|\theta^{t\tau}-\theta^0\right\|_2 \leq \frac{\eta_0 \tau C R_0\left(1-q^t\right)}{\sqrt{n}\left(1-q\right)}, \label{eq:lazy}\\
		& \left\|\Theta^{t\tau}-\Theta^0\right\|_F\leq \frac{2\eta_0 \tau C^3 R_0\left(1-q^t\right)}{\sqrt{n}\left(1-q\right)}, \label{eq:ntk}\\
		& \left\|\Theta_i^{t\tau+r}\! - \Theta_i^0\right\|_F \nonumber\\
		&\!\leq\! \frac{2\eta_0 r q^{t}C^3R_0\sqrt{k}}{\sqrt{n|\mathcal{D}_i|}}+ \frac{2\eta_0 \tau C^3 R_0\left(1-q^t\right)\sqrt{k|\mathcal{D}_i|}}{\left(1-q\right)\sqrt{n}}\label{eq:local ntk}
	\end{align}
	where $q = 1-\frac{\eta_0\tau\lambda_m}{3|\mathcal{D}|}+\frac{\eta_0^2\tau^2C^4}{2}e^{\eta_0\tau C^2}$.
\end{theorem}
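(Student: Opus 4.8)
The plan is to prove the five inequalities simultaneously by induction on the global round index $t$, adapting the linearization argument for centralized wide networks \cite{lee2019wide} so as to absorb the extra effects of local training and FedAvg aggregation. Before starting the induction I would record the consequences of the random initialization and Assumption~\ref{ass:activation}: with probability at least $1-\delta_0$ over the Gaussian draw there is an $n$-independent constant $C$ controlling the conditioning of the global and local Jacobians, namely $\tfrac{1}{\sqrt{n}}\|J(\theta^0)\|_{\rm op}\le C$ and $\tfrac{1}{\sqrt{n}}\|J_i(\theta^0)\|_{\rm op}\le C$, together with a local Lipschitz estimate $\|J(\theta)-J(\theta')\|_{\rm op}\le C\sqrt{n}\,\|\theta-\theta'\|_2$ valid in the $O(1/\sqrt n)$-ball around initialization, and I would fix $R_0$ as a uniform bound on the initial error $\|g(\theta^0)\|_2$. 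These are the $n$-free constants in the statement, and the scaling $\eta=\eta_0/n$ is what turns $\|J\|_{\rm op}=O(\sqrt n)$ into the $O(1/\sqrt n)$ per-step displacement driving every bound.

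The core is the analysis of one global round under gradient flow. Writing the induced flow of the local error, $\tfrac{dg_i}{dr}=-\tfrac{\eta_0}{|\mathcal D_i|}\tilde\Theta_i^{t\tau+r}g_i$ with $\tilde\Theta_i=\tfrac1n J_i^T J_i$, I would bound the per-step displacement by $\|\tfrac{d\theta_i}{dr}\|_2\le \tfrac{\eta_0 C}{\sqrt n|\mathcal D_i|}\|g_i\|_2$, integrate over the $\tau$ local steps, and bound the running error $\|g_i(\theta_i^{t\tau+r})\|_2$ by the start-of-round error $\|P_i g(\theta^{t\tau})\|_2\le\|g(\theta^{t\tau})\|_2$. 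Differencing the client trajectory against its $p_i$-weighted average, both emanating from the shared $\theta^{t\tau}$, and bounding $\|g(\theta^{t\tau})\|_2$ uniformly over rounds (the geometric accumulation of the round map contributing the $1/(1-q)$ factor) yields the model-divergence bound \eqref{eq:het}. The same displacement estimate, now summed across rounds, gives the lazy-training bound \eqref{eq:lazy}; feeding \eqref{eq:lazy} through the Lipschitz estimate together with $\|\Theta^{t\tau}-\Theta^0\|_F\le\tfrac{2C}{\sqrt n}\|J(\theta^{t\tau})-J^0\|_F$ produces the NTK bounds \eqref{eq:ntk} and \eqref{eq:local ntk}, where the split into a within-round term (at error level $q^tR_0$, giving the $r$- and $\sqrt{k|\mathcal D_i|}$-dependent piece) and an across-round term accounts for the two summands of \eqref{eq:local ntk}.

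The error recursion \eqref{eq:rate} is then obtained by composing the round map. Because \eqref{eq:ntk} keeps $\tilde\Theta_i$ within $O(1/\sqrt n)$ of its full-rank initial value (Assumption~\ref{ass:ntkfullrank}), the local flow contracts the error at essentially the linear NTK rate, and a Gr\"onwall/second-order Taylor bound controls the deviation of the true nonlinear dynamics from the idealized linear one --- this is exactly the $\tfrac{\eta_0^2\tau^2C^4}{2}e^{\eta_0\tau C^2}$ correction in $q$, while $1-\tfrac{\eta_0\tau\lambda_m}{3|\mathcal D|}$ is the compounded linear contraction over $\tau$ steps (the factor $\tfrac13$ absorbing the NTK slack). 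The only obstruction to pure contraction is that the aggregate $\sum_i p_i\theta_i^{t\tau+\tau}$ is not the centralized GD iterate; transferring the parameter divergence \eqref{eq:het} to an output/error perturbation via the Lipschitz constant $C_1$ gives an additive per-round error $\propto C_1\zeta$, so one round obeys $\|g(\theta^{(t+1)\tau})\|_2\le q\,\|g(\theta^{t\tau})\|_2+O(C_1\zeta)$, and solving this affine recursion yields \eqref{eq:rate}.

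I expect the main obstacle to be closing the induction self-consistently, since the five bounds are mutually dependent: the contraction in \eqref{eq:rate} requires the NTK to stay near initialization \eqref{eq:ntk}, which requires small parameter movement \eqref{eq:lazy}, which in turn requires the error to remain bounded, i.e.\ \eqref{eq:rate} itself. The resolution is to carry a single combined induction hypothesis (uniform error bound $\lesssim R_0$, parameters in the $O(1/\sqrt n)$-ball, NTK within $O(1/\sqrt n)$) and verify that it is reproduced at round $t+1$, choosing $N$ large and $\eta_0$ small enough that $q<1$ and that the higher-order and heterogeneity terms stay inside the induction ball. The genuinely federated difficulty --- absent from \cite{lee2019wide} --- is that aggregation breaks the single gradient trajectory, so \eqref{eq:het} must be established and propagated into the error recursion within the same inductive step rather than read off from a centralized linearization; keeping the $\zeta$-dependence explicit is precisely what exposes the $\mathcal O(n^{-1/2})$ decay of the heterogeneity effect.
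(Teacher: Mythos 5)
Your scaffolding — simultaneous induction over global rounds, the gradient-flow displacement estimate driving \eqref{eq:het} and \eqref{eq:lazy}, the Jacobian-Lipschitz arguments for \eqref{eq:ntk} and \eqref{eq:local ntk}, and the final affine recursion $\|g(\theta^{(t+1)\tau})\|_2\leq q\|g(\theta^{t\tau})\|_2+\mathcal{O}(C_1\zeta)$ — matches the paper's proof. The genuine gap is in how you justify that one-round recursion, and it fails on scaling. You propose ``transferring the parameter divergence \eqref{eq:het} to an output/error perturbation via the Lipschitz constant $C_1$, giving an additive per-round error $\propto C_1\zeta$.'' But the parameter-Lipschitz constant of $g$ is $\|J\|_F\leq C\sqrt{n}$, so the first-order effect of replacing $\theta_i^{t\tau+\tau}$ by $\theta^{(t+1)\tau}$ is $\mathcal{O}(\sqrt{n}\,\zeta)=\mathcal{O}(1)$ per client — an $n$-independent additive error that would destroy the theorem. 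The paper's mechanism, which your sketch never invokes, is the exact FedAvg cancellation: expanding each $g(\theta_i^{t\tau+\tau})$ at $\theta^{(t+1)\tau}$, the first-order terms vanish identically because $\sum_{i=1}^{M}p_i\Delta\theta_i^{(t+1)\tau}=0$, and only then does the Hessian bound $\frac{1}{\sqrt{n}}\|\nabla^2 g\|_{\rm op}\leq C_1$ (this is what $C_1$ actually is — a second-derivative constant, not a Lipschitz constant) yield $\|\sum_{i}p_i\Omega_i\|_2\leq C_1\sqrt{n}\,\zeta^2=\mathcal{O}(C_1\zeta)$. Without the cancellation your claimed bound is off by a factor of $\sqrt{n}$.

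The second, related defect is your source of contraction. You track the local errors $g_i$ under the purely local kernel $\tilde{\Theta}_i=\frac{1}{n}J_i^TJ_i$ and argue each client contracts at ``essentially the linear NTK rate.'' That is true but insufficient: after the cancellation one must bound $\|\sum_{i}p_i\,g(\theta_i^{t\tau+\tau})\|_2$, where each summand is the \emph{full global} error evaluated at client $i$'s parameters. Local training contracts only its $i$-th block; the cross-blocks $P_j g(\theta_i^{t\tau+\tau})$, $j\neq i$, are driven by cross-client kernels and can grow by $\mathcal{O}(1)$ under heterogeneity (this is exactly client drift), and your local flow says nothing about them. The paper instead tracks the global error along each local trajectory via the mixed, rank-deficient kernel $\Theta_i^{t\tau+r}=\frac{1}{n}J(\theta_i^{t\tau+r})^TJ_i(\theta_i^{t\tau+r})P_i$, writes $g(\theta_i^{t\tau+\tau})=e^{-\frac{\eta_0\tau}{|\mathcal{D}_i|}\Theta_i^{t\tau+\bar{r}_i}}g(\theta^{t\tau})$, and expands the exponentials to first order; contraction emerges only from the aggregation identity $\sum_i p_i\frac{\eta_0\tau}{|\mathcal{D}_i|}\Theta_i^{t\tau+\bar r_i}=\frac{\eta_0\tau}{|\mathcal{D}|}\sum_i\Theta_i^{t\tau+\bar r_i}\approx\frac{\eta_0\tau}{|\mathcal{D}|}\Theta$, i.e.\ the weighted first-order local updates reassemble one global GD step with the full-rank global kernel — note the $|\mathcal{D}|$ normalization and the global $\lambda_m$ in $q$, which a per-client contraction argument cannot produce. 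Relatedly, the $\frac{\eta_0^2\tau^2 C^4}{2}e^{\eta_0\tau C^2}$ term in $q$ is the Lemma-type remainder from the second-and-higher-order terms of the client exponentials (the price of the average of exponentials differing from the exponential of the average), not a nonlinear-versus-linear dynamics correction as you describe; that attribution is minor, but the two scaling issues above mean your Step for \eqref{eq:rate} would not close as written.
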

The  detailed proof is provided in appendix~\ref{appendixA} and we present the proof sketch in the following.

{\textbf{Proof Sketch.} We use mathematical induction to prove Theorem~\ref{theorem1} and the induction hypotheses are \eqref{eq:rate} and \eqref{eq:lazy}. It is trivial that \eqref{eq:rate} and \eqref{eq:lazy} hold when $t=0$, and our aim is to prove \eqref{eq:rate} and \eqref{eq:lazy} for $t+1$. 

[\textbf{Step 1}] We first present several essential lemmas in Appendix~\ref{lemmas}, including proving the Lipschitz continuity of the global and local Jacobians and some properties regarding the Taylor series expansion.

[\textbf{Step 2}] Prove  induction hypothesis \eqref{eq:lazy} holds for $t+1$.  Due to the small learning rate $\eta = \frac{\eta_0}{n}$ for large $n$, we treat time as continuous and use gradient flow to approximate GD, which yields
\begin{equation}\label{eq:equatin29}
	\frac{dg\left(\theta_i^{t\tau+r}\right)}{dr} = -\frac{\eta_0\Theta_i^{t\tau+r}}{|\mathcal{D}_i|} g\left(\theta_i^{t\tau+r}\right).
\end{equation}
Applying the mean value theorem of integral, we can obtain
\begin{equation}\label{eq:equation30}
	g\left(\theta_i^{t\tau+\tau}\right) = e^{-\frac{\eta_0\tau}{|\mathcal{D}_i|}\Theta_i^{t\tau+\bar{r}_i}} g\left(\theta_i^{t\tau}\right),\bar{r}_i\in\left(0,\tau\right).
\end{equation} 
Based on \eqref{eq:equation30} and the induction hypotheses, the local model parameters variation can be bounded by:
\begin{equation}\label{eq:equation31}
	\left\|\theta_i^{t\tau+\tau}-\theta_i^0\right\|_2 \leq \frac{\eta_0 \tau C \left\|g\left(\theta^{t\tau}\right)\right\|_{2}}{\sqrt{n}|\mathcal{D}_i|} + \left\|\theta^{t\tau}-\theta^0\right\|_2.
\end{equation}
Notably, \eqref{eq:equation31} captures the relationship between the dynamics of local model and global model, based on which we can obtain the recursive relationship between $\|\theta^{t\tau}-\theta^0\|_2$ and $\|\theta^{\left(t+1\right)\tau}-\theta^0\|_2$ from  
\begin{multline}\label{eq:equation29}
	\big\|\theta^{\left(t+1\right)\tau}-\theta^0\big\|_2 
	\leq \sum_{i=1}^{M}p_i\left\|\theta_i^{t\tau+\tau}-\theta^0\right\|_2\\
	\leq \sum_{i=1}^{M}\frac{\eta_0 \tau C \left\|g\left(\theta^{t\tau}\right)\right\|_{2}}{\sqrt{n}} + \left\|\theta^{t\tau}-\theta^0\right\|_2
\end{multline}
Using \eqref{eq:equation29}, we can prove \eqref{eq:lazy} holds for $t+1$.

[\textbf{Step 3}] Based on the inductions hypotheses and a variation of \eqref{eq:equation31}, we are able to obtain \eqref{eq:het}.

[\textbf{Step 4}] Prove induction hypothesis \eqref{eq:rate} holds for $t+1$. By taking the Taylor series expansion of  $\sum_{i=1}^{M}p_ig\left(\theta_i^{t\tau+\tau}\right)$ at $\theta^{\left(t+1\right)\tau}$, we are able to derive
\begin{equation}\label{eqn:taylor}
	\!\left\|g\big(\theta^{\left(t+1\right)\tau}\big)\right\|_2\! \leq  \left\|\sum_{i=1}^{M}p_ig\left(\theta_i^{t\tau+\tau}\right)\right\|_2\! + \left\|\sum_{i=1}^{M}p_i\Omega_i\right\|_2,\!
\end{equation}
where $\Omega_i$ represents the remainder terms. Based on the results of [\textbf{Step 1}] $\sim$ [\textbf{Step 3}], we can further bound  $\|\sum_{i=1}^{M}p_ig(\theta_i^{t\tau+\tau})\|_2$ and $\|\sum_{i=1}^{M}p_i\Omega_i\|$, respectively, as
\begin{align} 
	\left\|\sum_{i=1}^{M}p_ig\left(\theta_i^{t\tau+\tau}\right)\right\|_2 & \leq q \left\|g\left(\theta_i^{t\tau}\right)\right\|_2 \label{eqn:bound1} \\ 
	\left\|\sum_{i=1}^{M}p_i\Omega_i\right\|_2 & \leq \frac{2\eta_0\tau CC_1R_0\zeta}{\left(1-q\right)} \label{eqn:bound2}
\end{align} 
Plugging \eqref{eqn:bound1} and \eqref{eqn:bound2} into \eqref{eqn:taylor}, \eqref{eq:rate} holds for $t+1$. 

[\textbf{Step 5}] Based on the Lipschitzness of the global and local jacobians as well as the results in [\textbf{Step 2}]$\sim$[\textbf{Step 4}], we prove \eqref{eq:ntk} and \eqref{eq:local ntk}.

\begin{remark}[\bf Bound on the model divergence]
	  Inequation \eqref{eq:het} establishes an upper bound $\zeta$ on the model divergence caused by data heterogeneity. Since $\zeta = \mathcal{O}(n^{-\frac{1}{2}})$, increasing the network width can reduce the effect of data heterogeneity. Note that we do not impose any strict assumptions on the convexity of the loss function \cite{cho2020client, khaled2019first, li2019convergence}, the bound of local gradients \cite{li2019convergence, cho2020client} or the divergence between local and global gradient \cite{li2020federated, zhang2023fed, wang2020tackling}. Instead, we prove that model the divergence is indeed bounded as long as the network is sufficiently wide.
\end{remark}
\begin{remark}[\bf Impact of data heterogeneity on the convergence rate]
	 Inequation \eqref{eq:rate} characterizes  the evolution of training error across the global aggregation rounds. Different from \citet{song2023fedavg, huang2021fl}, the presence of $\zeta>0$ here slows down the convergence, making the convergence rate no longer linear and the convergence error no longer zero. Recalling that $\zeta = \mathcal{O}(n^{-\frac{1}{2}})$, widening the network enhances the convergence rate by mitigating the model divergence. When $n\to\infty$, we have $\zeta\to 0$ and the impact of data heterogeneity vanishes, resulting in a linear convergence rate and zero training error as shown in \eqref{eq:rate}. 
\end{remark}
\begin{remark}[\bf Lazy training]
	Inequality (24) shows that as the network width increases, each global update in overparameterized FedAvg remains confined within an increasingly smaller neighborhood of size $\mathcal{O}(n^{-\frac{1}{2}})$ around its initialization, thereby extending the lazy-training phenomenon observed in centralized settings (Chizat et al., 2019) to FL settings.
\end{remark}
\begin{remark}[\bf Constant global and local NTKs]	 
Inequation \eqref{eq:ntk} shows that as the network width increases, the global and local NTK experiences less variation during training. When the width approaches infinity, the both the global and local NTKs are constant, which extends the findings in centralized learning \cite{jacot2018neural} to FL settings.
\end{remark}

Next, we will investigate the training dynamics of FedAvg in the infinite-width regime and compare it with centralized learning to further investigate the generalization performance of overparameterized FedAvg.

\section{Generalization Performance}
In this section, we analyze the training dynamics and generalization performance of overparameterized FedAvg as the network width $n\to\infty$. First, we prove that both the global and local models behave as linear models during the training process. Then, we derive the closed-form expression of those linear models and establish the equivalence between infinite-width FedAvg and centralized GD.

We define the linear models $f^{\rm lin}\left(\theta^{t\tau}\right)$ and $f_i^{\rm lin}\left(\theta_i^{t\tau+r}\right)$ as the first-order Taylor expansion of the global model $f\left(\theta^{t\tau}\right)$ and local model $f_i\left(\theta_i^{t\tau+r}\right)$, respectively:
\begin{align}\label{eq:equation25}
	f^{\rm lin} \left(\theta^{t\tau}\right)& = f\left(\theta^0\right) + J\left(\theta^0\right)^T\left(\theta^{t\tau}-\theta^0\right) \\
	f_i^{\rm lin} \left(\theta_{i}^{t\tau+r}\right)& = f_i\left(\theta^0\right) + J_i\left(\theta^0\right)^T\left(\theta_i^{t\tau+r}-\theta^0\right)
\end{align}
Our main results are as follows.
\begin{theorem}\label{theorem2}
	Under Assumptions~\ref{ass:ntkfullrank} to \ref{ass:activation}, when $n\to\infty$, we have
	\begin{align}
			& \sup_{t\geq 0}\left\|f^{\rm lin}(\theta^{t\tau})-f(\theta^{t\tau})\right\|_2 = \mathcal{O}\big(n^{-\frac{1}{2}}\big), \\
			& \!\!\sup_{\substack{t\geq 0,\\1\leq  r \leq \tau}}\left\|f_i^{\rm lin}\left(\theta_i^{t\tau+r}\right)-f_i\left(\theta_i^{t\tau+r}\right)\right\|_2 = \mathcal{O}\big(n^{-\frac{1}{2}}\big), \forall i \!\!
	\end{align}
\end{theorem}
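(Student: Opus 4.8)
The plan is to recognize that $f(\theta^{t\tau})-f^{\mathrm{lin}}(\theta^{t\tau})$ is exactly the second-order Taylor remainder of $f$ expanded about the shared initialization $\theta^0$, and to control it by pairing the local Lipschitz continuity of the Jacobian (established among the lemmas invoked in Step~1 of the proof of Theorem~\ref{theorem1}) with the lazy-training estimate \eqref{eq:lazy}. Setting $\theta_s=\theta^0+s(\theta^{t\tau}-\theta^0)$, Taylor's theorem with integral remainder gives
\begin{equation*}
	f(\theta^{t\tau})-f^{\mathrm{lin}}(\theta^{t\tau})=\int_0^1\left[J(\theta_s)^T-J(\theta^0)^T\right]\left(\theta^{t\tau}-\theta^0\right)ds,
\end{equation*}
so that $\|f(\theta^{t\tau})-f^{\mathrm{lin}}(\theta^{t\tau})\|_2\le \tfrac12\sup_{s\in[0,1]}\|J(\theta_s)-J(\theta^0)\|_{\mathrm{op}}\,\|\theta^{t\tau}-\theta^0\|_2$. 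In the standard parameterization used here the Step~1 Lipschitz bound has the form $\|J(\theta)-J(\theta^0)\|_{\mathrm{op}}\le c\sqrt n\,\|\theta-\theta^0\|_2$ for a constant $c$ independent of $n$, valid on the ball of radius $R_0$ around $\theta^0$ in which the iterates stay.

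Combining these two facts yields $\|f(\theta^{t\tau})-f^{\mathrm{lin}}(\theta^{t\tau})\|_2\le \tfrac12 c\sqrt n\,\|\theta^{t\tau}-\theta^0\|_2^2$. Substituting \eqref{eq:lazy}, which gives $\|\theta^{t\tau}-\theta^0\|_2\le \frac{\eta_0\tau C R_0}{\sqrt n(1-q)}=\mathcal O(n^{-1/2})$ uniformly in $t$ since $1-q^t\le 1$, the squared displacement $\mathcal O(n^{-1})$ cancels one power of $\sqrt n$ and leaves $\mathcal O(n^{-1/2})$. Because every quantity entering the bound is uniform in $t$, taking $\sup_{t\ge0}$ is immediate and the first claim follows.

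For the local models the argument is identical once a uniform bound on $\|\theta_i^{t\tau+r}-\theta^0\|_2$ is in hand. This is supplied by the intermediate-iteration version of \eqref{eq:equation31}, namely $\|\theta_i^{t\tau+r}-\theta^0\|_2\le \frac{\eta_0 r C\|g(\theta^{t\tau})\|_2}{\sqrt n|\mathcal D_i|}+\|\theta^{t\tau}-\theta^0\|_2$; both terms are $\mathcal O(n^{-1/2})$ because $r\le\tau$, $\|g(\theta^{t\tau})\|_2$ is bounded uniformly in $t$ by \eqref{eq:rate}, and the second term is controlled by \eqref{eq:lazy}. Since $\theta_i^0=\theta^0$, the local linearization expands about the same point, so applying the Taylor-remainder estimate to $f_i$ and $J_i$ (with the Step~1 Lipschitz bound for $J_i$) gives $\|f_i(\theta_i^{t\tau+r})-f_i^{\mathrm{lin}}(\theta_i^{t\tau+r})\|_2\le\tfrac12 c\sqrt n\,\|\theta_i^{t\tau+r}-\theta^0\|_2^2=\mathcal O(n^{-1/2})$, and $\sup_{t\ge0,\,1\le r\le\tau}$ is again uniform.

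The one genuinely delicate point is ensuring the iterates never leave the ball of radius $R_0$ on which the Jacobian Lipschitz estimate (a local, not global, property) is admissible, since it is precisely this locality that makes the constant $c\sqrt n$ acceptable; this requires the uniform-in-$t$ control of $\|g(\theta^{t\tau})\|_2$ from \eqref{eq:rate} and of the displacements from \eqref{eq:lazy}, all holding on the same high-probability event furnished by Theorem~\ref{theorem1}. Everything else reduces to the routine observation that a Lipschitz constant of order $\sqrt n$ acting on a squared displacement of order $n^{-1}$ yields the advertised $\mathcal O(n^{-1/2})$ rate.
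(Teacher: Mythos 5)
There is a genuine gap, and it concerns what the two objects in Theorem~\ref{theorem2} actually are. Your argument bounds the Taylor remainder of $f$ along the \emph{actual} FedAvg trajectory, i.e., it compares $f(\theta^{t\tau})$ with the tangent model evaluated at the same parameters $\theta^{t\tau}$. That argument is internally sound (modulo a small slip: Lemma~\ref{lem:lemma1} holds on a ball of radius $C'n^{-1/2}$, not radius $R_0$ --- $R_0$ bounds $\|g(\theta^0)\|_2$; the lazy-training bound \eqref{eq:lazy} is exactly what places the iterates and the segment $\theta_s$ in such a ball). But in the paper, $f^{\rm lin}(\theta^{t\tau})$ denotes the output of the linearized model trained by its \emph{own} federated dynamics: the local flow uses the frozen Jacobian, $\frac{d\theta_i^{t\tau+r}}{dr}=-\frac{\eta}{|\mathcal{D}_i|}J_i(\theta_i^0)g_i^{\rm lin}(\theta_i^{t\tau+r})$, producing a parameter trajectory distinct from the nonlinear one. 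This is unambiguous from the proof: in \eqref{eq:equation63} the derivative of $g^{\rm lin}$ is taken to be $-\frac{\eta_0}{|\mathcal{D}_i|}\Theta_i^0 g^{\rm lin}$, which is false if $\theta_i^{t\tau+r}$ were the nonlinear iterates (one would instead get $J(\theta^0)^T\dot\theta_i \propto J(\theta^0)^TJ_i(\theta_i^{t\tau+r})g_i(\theta_i^{t\tau+r})$). It is also forced by how the theorem is used: Theorem~\ref{theorem3} derives the closed form of the \emph{linearized} trajectory, and the equivalence \eqref{eqn:cen} with centralized GD needs Theorem~\ref{theorem2} to transfer that closed form to the actual FedAvg iterates --- a tangent-model bound along one trajectory provides no such transfer.

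The gap cannot be patched cheaply: the two trajectories differ in output by $\|J(\theta^0)^T(\theta^{t\tau}-\theta^{{\rm lin},t\tau})\|_2 \leq C\sqrt{n}\,\|\theta^{t\tau}-\theta^{{\rm lin},t\tau}\|_2$, and since each trajectory is only known to stay within $\mathcal{O}(n^{-1/2})$ of $\theta^0$, the naive estimate gives $\mathcal{O}(1)$, not $\mathcal{O}(n^{-1/2})$. What your proposal is missing is precisely the paper's trajectory-comparison machinery: (i) the integrating-factor (Gr\"onwall-type) identity \eqref{eq:equation63}--\eqref{eq:equation55}, which shows the gap $g^{\rm lin}-g$ within each local phase is driven by the NTK drift $\Theta_i^{t\tau+r}-\Theta_i^0$, itself $\mathcal{O}(n^{-1/2})$ by \eqref{eq:equation75}; (ii) the aggregation step, where the linear model aggregates exactly but the nonlinear model incurs the second-order remainders $\Omega_i$ of \eqref{eq:equation20}, bounded via the model-divergence estimate in \eqref{eq:equation300}; and (iii) a recursion over global rounds with contraction factor $q<1$ summing these per-round errors, as in \eqref{eq:equation78}--\eqref{eq:equation79}. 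Your Taylor-remainder computation does appear implicitly in the paper (the $\Omega_i$ terms are handled exactly by such a second-order bound with $\frac{1}{\sqrt{n}}\|\nabla^2 g\|_{\rm op}\leq C_1$), but by itself it proves a strictly weaker statement than the one the paper states, proves, and relies on.
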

The detailed proof is provided in appendix~\ref{appendixB}.

\begin{remark}[\bf Infinite-width FedAvg induces linearized global/local models ]
Theorem \ref{theorem2} suggests that as the network width approaches infinity, the global and local models become linear models. This extends the findings of \citet{shi2021fed}, which demonstrated that the global model can be approximated by a linear model, to show that both local and global models can be well approximated by linear models.
\end{remark}
Therefore, we can analyze the training dynamic of those linear models instead. The main theorem describing their training dynamics is presented as follows.
\begin{theorem}\label{theorem3}
	Under Assumptions ~\ref{ass:ntkfullrank} to \ref{ass:activation},  when $n\to \infty$ and $\eta_0\tau$ is sufficiently small such that the terms of $\mathcal{O}\left(\eta_0^2\tau^2\right)$ and higher are neglected, the linear model has closed-form expressions for the global parameters and outputs throughout the training process:
	\begin{align}
		&\theta^{t\tau}\! =\! -\frac{1}{n}J(\theta^0)(\Theta^0)^{-1}\!\!\left(I-e^{-\frac{\eta_0t\tau}{|\mathcal{D}|}\Theta^0}\right)g(\theta^0)+\theta^0, \label{eqn:theta}\\
		&f^{\rm lin}(x, \theta^{t\tau})=  f(x, \theta^0)\nonumber\\
		& \quad\quad\quad \quad - \Theta^0(x)(\Theta^0)^{-1}\left(I-e^{-\frac{\eta_0t\tau}{|\mathcal{D}|}\Theta^0}\right)g(\theta^0), \label{eqn:f}
	\end{align}
	where $\Theta^0(x) \triangleq \frac{1}{n}J(x, \theta^0)^TJ(\theta^0)$.
\end{theorem}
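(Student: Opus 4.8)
The plan is to turn the two structural facts proved earlier into a reduction: in the limit $n\to\infty$ the models are linear (Theorem~\ref{theorem2}) and every kernel is frozen at initialization (Theorem~\ref{theorem1}, \eqref{eq:ntk}--\eqref{eq:local ntk}), so a full communication round of FedAvg should collapse to a single step of centralized gradient flow on the linearized model, after which I accumulate these steps into the claimed closed forms. Concretely, I would work throughout with the linear surrogates from \eqref{eq:equation25}, replacing the training-time Jacobians $J(\theta_i^{t\tau+r})$, $J_i(\theta_i^{t\tau+r})$ by the constants $J(\theta^0)$, $J_i(\theta^0)$, and $\Theta_i^{t\tau+r}$ by $\Theta_i^0$; the gaps incurred are $\mathcal{O}(n^{-1/2})$ and vanish in the limit. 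I would also record the two identities I need, namely $\sum_{i}J_i(\theta^0)P_i = J(\theta^0)$ (the $t=0$ instance of the stated Jacobian relation) and $g_i(\theta_i)=P_ig(\theta_i)$ (the same-architecture/common-initialization analogue of $g_i(\theta^{t\tau}_i)=P_ig(\theta^{t\tau})$).

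First I would solve the intra-round dynamics. Since all clients share $\theta^{t\tau}$ at the start of round $t$, the frozen-kernel form of \eqref{eq:equatin29} integrates to $g(\theta_i^{t\tau+r}) = e^{-\frac{\eta_0 r}{|\mathcal{D}_i|}\Theta_i^0}g(\theta^{t\tau})$, and feeding this into the parameter flow and using $g_i=P_ig$ gives
\[
\theta_i^{t\tau+\tau}-\theta^{t\tau} = -\frac{\eta_0}{n|\mathcal{D}_i|}J_i(\theta^0)P_i\left(\int_0^\tau e^{-\frac{\eta_0 r}{|\mathcal{D}_i|}\Theta_i^0}\,dr\right)g(\theta^{t\tau}).
\]
Because $\Theta_i^0$ is rank-deficient I cannot evaluate the integral by inverting it; instead I would Taylor-expand the matrix exponential and, invoking the hypothesis that $\mathcal{O}(\eta_0^2\tau^2)$ terms are negligible, keep only $\int_0^\tau e^{-\frac{\eta_0 r}{|\mathcal{D}_i|}\Theta_i^0}dr = \tau I + \mathcal{O}(\eta_0\tau^2)$, so that $\theta_i^{t\tau+\tau}-\theta^{t\tau} \approx -\frac{\eta_0\tau}{n|\mathcal{D}_i|}J_i(\theta^0)P_i\,g(\theta^{t\tau})$ up to the neglected order.

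Next I would aggregate. Summing the per-client updates with weights $p_i$, using $p_i/|\mathcal{D}_i| = 1/|\mathcal{D}|$ and $\sum_i J_i(\theta^0)P_i = J(\theta^0)$, telescopes everything into one centralized step,
\[
\theta^{(t+1)\tau}-\theta^{t\tau} = -\frac{\eta_0\tau}{n|\mathcal{D}|}J(\theta^0)\,g(\theta^{t\tau}),
\]
and left-multiplying by $J(\theta^0)^T$ together with $\Theta^0=\frac{1}{n}J(\theta^0)^TJ(\theta^0)$ yields the error recursion $g(\theta^{(t+1)\tau}) = \left(I-\frac{\eta_0\tau}{|\mathcal{D}|}\Theta^0\right)g(\theta^{t\tau})$. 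In the small-$\eta_0\tau$ regime this is the Euler step of $\dot g = -\frac{1}{|\mathcal{D}|}\Theta^0 g$, so $\left(I-\frac{\eta_0\tau}{|\mathcal{D}|}\Theta^0\right)^t$ matches $e^{-\frac{\eta_0 t\tau}{|\mathcal{D}|}\Theta^0}$ to the retained order, giving $g(\theta^{t\tau}) = e^{-\frac{\eta_0 t\tau}{|\mathcal{D}|}\Theta^0}g(\theta^0)$. Summing the global increments and evaluating the matrix geometric series $\sum_{s=0}^{t-1}e^{-\frac{\eta_0 s\tau}{|\mathcal{D}|}\Theta^0}$, where $\Theta^0$ is invertible by Assumption~\ref{ass:ntkfullrank} and $I-e^{-\frac{\eta_0\tau}{|\mathcal{D}|}\Theta^0}\approx \frac{\eta_0\tau}{|\mathcal{D}|}\Theta^0$, produces exactly \eqref{eqn:theta}; substituting this into the definition \eqref{eq:equation25} of $f^{\rm lin}(x,\theta^{t\tau})$ and recognizing $\Theta^0(x)=\frac{1}{n}J(x,\theta^0)^TJ(\theta^0)$ gives \eqref{eqn:f}.

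The main obstacle is bookkeeping the first-order $\eta_0\tau$ approximation consistently at the two places it is invoked — the intra-round integration against the singular local kernel, and the passage from the discrete inter-round recursion to the matrix exponential — since the latter incurs an $\mathcal{O}(\eta_0^2\tau^2)$ error per round that could in principle compound over $t$ rounds. I would need to verify that, for each fixed $t$, these accumulated corrections together with the $\mathcal{O}(n^{-1/2})$ linearization and kernel-freezing errors from Theorems~\ref{theorem1} and~\ref{theorem2} all vanish in the joint limit $n\to\infty$, $\eta_0\tau\to 0$, so that \eqref{eqn:theta}--\eqref{eqn:f} hold exactly. A secondary, easier point is justifying $\sum_i J_i(\theta^0)P_i=J(\theta^0)$ and $g_i=P_ig$ for the linearized models, which I would state explicitly as consequences of the shared architecture and common initialization.
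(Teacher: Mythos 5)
Your proposal is correct at the same level of formal rigor as the paper, but it takes a genuinely different route through the middle of the argument. The paper (Appendix~\ref{appendixD}) evaluates the intra-round integral \emph{exactly}: it shows $\int_0^\tau e^{-\frac{\eta_0 r}{|\mathcal{D}_i|}\Theta_i^0}dr = \frac{|\mathcal{D}_i|}{\eta_0}\left(\Theta_i^0\right)^{-1}\left(I-e^{-\frac{\eta_0\tau}{|\mathcal{D}_i|}\Theta_i^0}\right)$ via the power series — and note that your stated obstacle here is avoidable: although $\Theta_i^0$ is singular, the combination $\left(\Theta_i^0\right)^{-1}\left(I-e^{-\frac{\eta_0\tau}{|\mathcal{D}_i|}\Theta_i^0}\right)$ is an \emph{entire} function of $\Theta_i^0$ (the series $\sum_{k\geq 0}\frac{(-1)^k\tau^{k+1}}{(k+1)!}\big(\tfrac{\eta_0}{|\mathcal{D}_i|}\Theta_i^0\big)^k$), so the formal inverse cancels and no truncation is needed inside the round. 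The paper then invokes the small-$\eta_0\tau$ hypothesis only once per round, in Lemma~\ref{lem:lemma4}, to merge the per-client exponentials into $e^{-\frac{\eta_0\tau}{|\mathcal{D}|}\Theta^0}$; the resulting recursion $g^{\rm lin}(\theta^{(t+1)\tau})=e^{-\frac{\eta_0\tau}{|\mathcal{D}|}\Theta^0}g^{\rm lin}(\theta^{t\tau})$ then composes exactly, and \eqref{eqn:theta} follows by telescoping exact exponential differences — no geometric series and no discrete-to-continuous matching is needed. You instead truncate the integral to $\tau I$ immediately, obtain the Euler recursion $g(\theta^{(t+1)\tau})=\left(I-\tfrac{\eta_0\tau}{|\mathcal{D}|}\Theta^0\right)g(\theta^{t\tau})$, and reconstruct the exponential at the end via $\left(I-\tfrac{\eta_0\tau}{|\mathcal{D}|}\Theta^0\right)^t\approx e^{-\frac{\eta_0 t\tau}{|\mathcal{D}|}\Theta^0}$ plus the geometric sum with $I-e^{-\frac{\eta_0\tau}{|\mathcal{D}|}\Theta^0}\approx \tfrac{\eta_0\tau}{|\mathcal{D}|}\Theta^0$. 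Both routes arrive at \eqref{eqn:theta} and \eqref{eqn:f}, and your honestly flagged worry about $\mathcal{O}(\eta_0^2\tau^2)$ errors compounding over $t$ rounds applies \emph{equally} to the paper, since Lemma~\ref{lem:lemma4} is itself only a first-order identity applied at every round; for fixed $t$ the accumulated error is $\mathcal{O}(t\,\eta_0^2\tau^2)$ in both treatments, so neither is uniform in $t$ and neither is more rigorous. What the paper's decomposition buys is cleaner bookkeeping — the approximation enters at exactly one identifiable place (client averaging of exponentials) and the cross-round algebra is exact — while your version localizes all exponential structure in a single reconstruction step, at the cost of invoking the first-order matching twice (intra-round truncation and re-exponentiation). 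Your auxiliary identities $\sum_i J_i(\theta^0)P_i=J(\theta^0)$ and $g_i=P_ig$ are indeed the $t=0$ instances of the relations stated in Section~\ref{section3} and are used the same way in the paper's derivation, and your final substitution into \eqref{eq:equation25} to obtain \eqref{eqn:f} matches the paper exactly.
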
 
The detailed proof is provided in appendix~\ref{appendixD}.

Suppose there is a model having the same structure that trains on the global dataset $\mathcal{D}$ via centralized GD, whose model parameters at the $t'$-th GD iteration is denoted by $\theta_{\rm cen}^{t'}$ and the model output is $f(x, \theta_{\rm cen}^{t'})$. When the initialization of $\theta_{\rm cen}$ and $\theta$ are the same, i.e., $\theta_{\rm cen}^{0} = \theta^{0}$, the following can be obtained according to  \citet[Equations (8), (10), (11)]{lee2019wide}:
	\begin{align}
	&\theta_{\rm cen}^{t'} = -\frac{1}{n}J(\theta^0)(\Theta^0)^{-1}\left(I-e^{-\frac{\eta_0\Theta^0 t'}{|\mathcal{D}|}}\right)g(\theta^0)+\theta^0,  \label{eqn:theta_c}\\
	&f_{\rm cen}\big(x, \theta_{\rm cen}^{t'}\big) = f\left(x, \theta^0\right) \nonumber\\
	& \quad\quad\quad\quad- \Theta^0(x)\big(\Theta^0\big)^{-1}\left(I-e^{-\frac{\eta_0\Theta^0t'}{|\mathcal{D}|}}\right)g(\theta^0). \label{eqn:f_c}
\end{align}
When $t' = t\tau$, by comparing \eqref{eqn:theta}, \eqref{eqn:f} with \eqref{eqn:theta_c}, \eqref{eqn:f_c} and employing Theorem \ref{theorem2},   we can obtain
\begin{align}
\theta_{\rm cen}^{t'} = \theta^{t\tau}, \quad f(x, \theta_{\rm cen}^{t'}) = f(x, \theta^{t\tau}) \label{eqn:cen}
\end{align}

\begin{remark}[\bf Infinite-width FedAvg generalizes the same as centralized GD]
Equations \eqref{eqn:cen} suggest that when the total number iterations of centralized GD and FedAvg are the same, both models share the same model parameters in the infinite-width regime, thereby producing the same output for an arbitrary test input and  achieving the same generalization performance. This means that the impacts of data heterogeneity on the generalization performance vanishes. 
\end{remark}

\section{Numerical Experiments}
In this section, we verify our theoretical findings by numerical experiments spanning various network architectures, loss functions, and optimization methods. Specifically, we evaluate the impact of data heterogeneity under different network widths, verify that both the local and global models of overparameterized FedAvg can be well approximated by linear models, and demonstrate that overparameterized FedAvg generalizes the same as centralized learning. The number of clients in our experiments are set to $M=10$, and the dataset as well as the model settings are provided as follows.\footnote{Codes to reproduce the main results are available at \url{https://github.com/kkhuge/ICML2025}.}

\begin{figure*}[tb]
	\centering
	\subfigure{
		\includegraphics[width=0.31\textwidth]{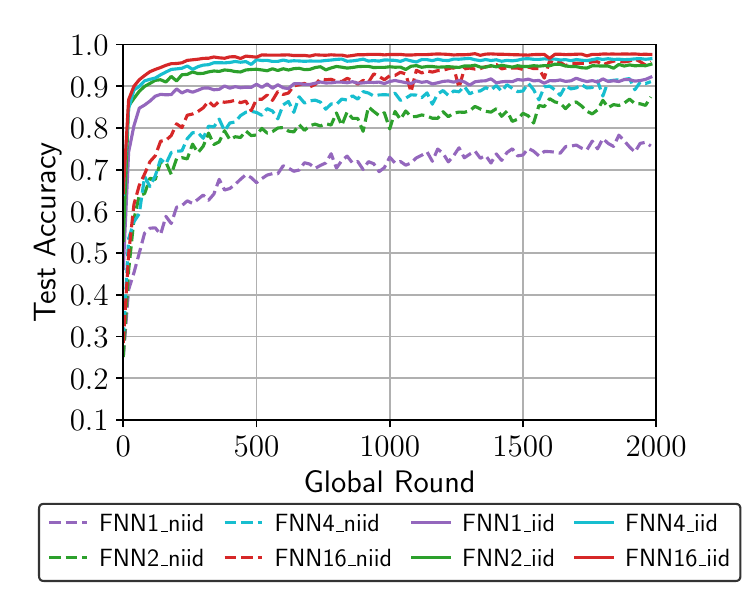}
		\label{fig:data_heterogeneity_mnist_fully_connected}
	}
	\subfigure{
		\includegraphics[width=0.31\textwidth]{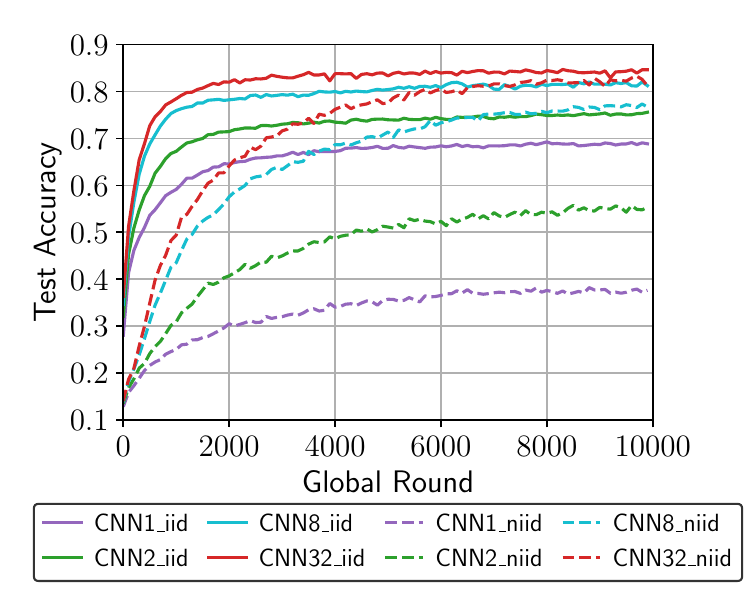}
		\label{fig:test_accuracy_CNN}
	}
	\subfigure{
		\includegraphics[width=0.31\textwidth]{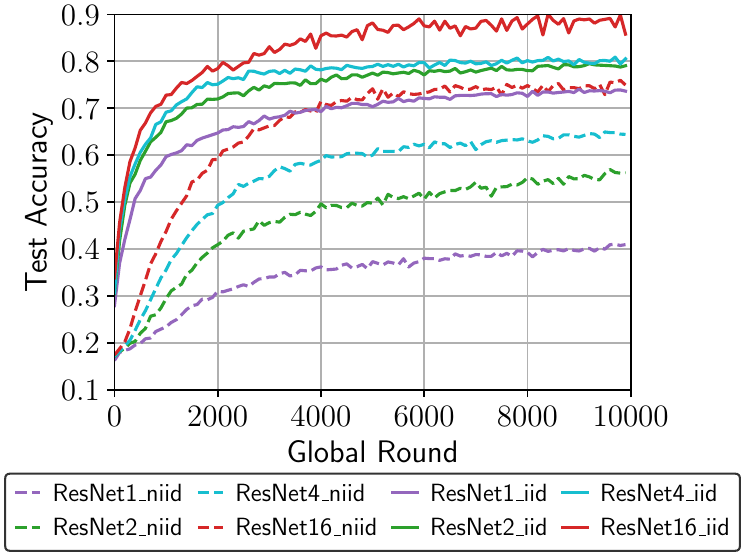}
		\label{fig:test_accuracy_ResNet}
	}
	\caption{Test accuracy of different network families. Each global round consists of $\tau = 5$ local SGD iterations. The left figure shows the test accuracy of FNNs on both IID and non-IID MNIST datasets. The middle and the right figures show the test accuracy of CNNs and ResNets, respectively, on both IID and non-IID CIFAR-10 datasets.}
	\label{fig:test_accuracy_networks}
\end{figure*}
\begin{figure*}[tb]
	\centering
	\subfigure{
		\includegraphics[width=0.31\textwidth]{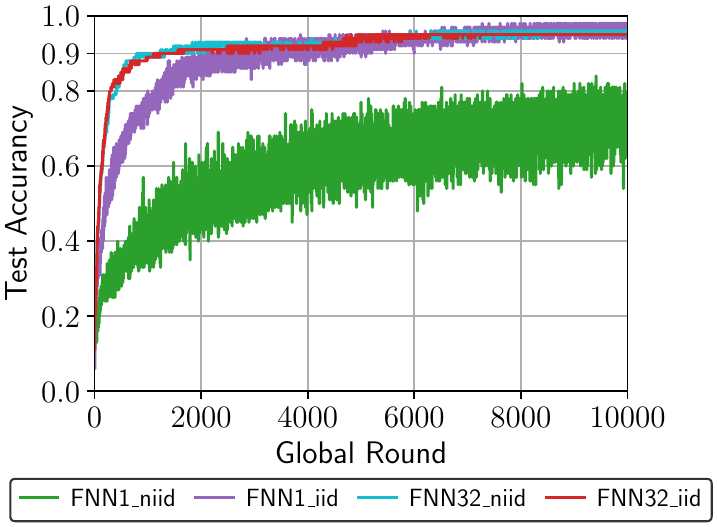}
		\label{fig:test acc fc large}
	}
	\hspace{5mm}
	\subfigure{
		\includegraphics[width=0.31\textwidth]{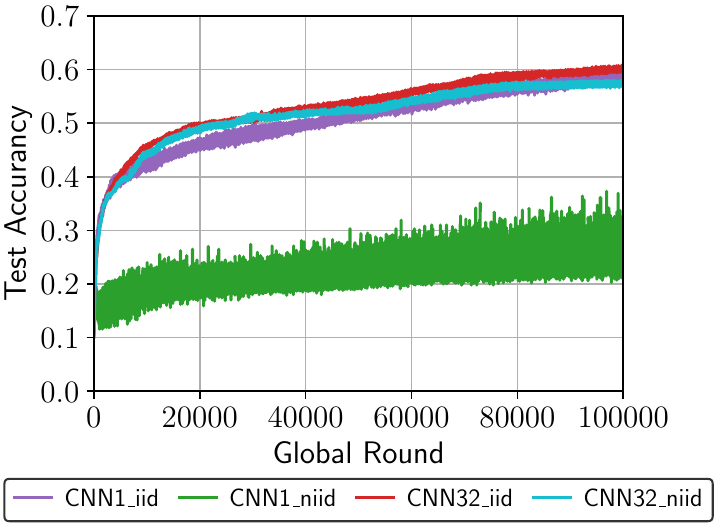}
		\label{fig:test acc cnn large}
	}
	\caption{Test accuracy of large networks. Each global round consists of $\tau = 5$ local SGD iterations, $\sigma_{W} = 1$, $\sigma_b = 0.1$. The left figure shows the test accuracy of the FNN$32$ and FNN$1$ on both IID and non-IID MNIST datasets. The right figure shows the test accuracy of the CNN$32$ and CNN$1$ on both IID and non-IID CIFAR-10 datasets.}
	\label{fig:Test accuracy of large networks}
\end{figure*}
\begin{figure*}[tb]
	\centering
	\subfigure{
		\includegraphics[width=0.31\textwidth]{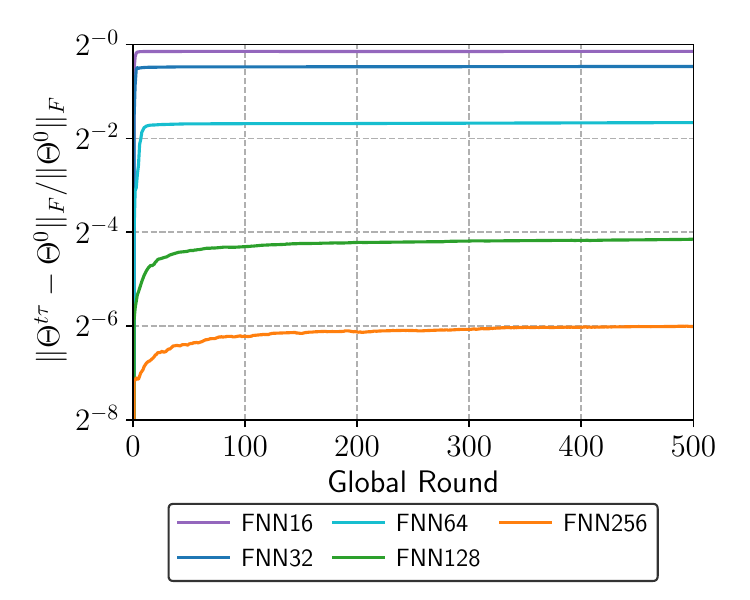}
		\label{fig: theta diff}
	}
	\subfigure{
		\includegraphics[width=0.31\textwidth]{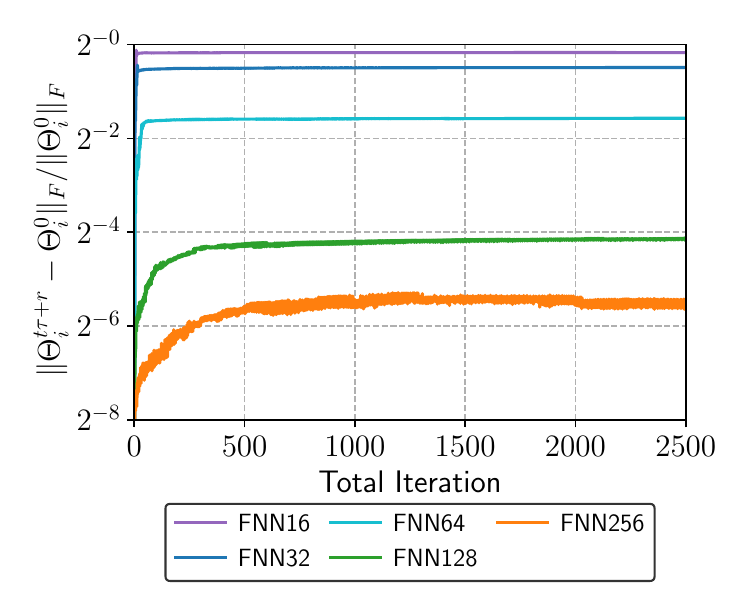}
		\label{fig: local theta diff}
	}
	\subfigure{
		\includegraphics[width=0.31\textwidth]{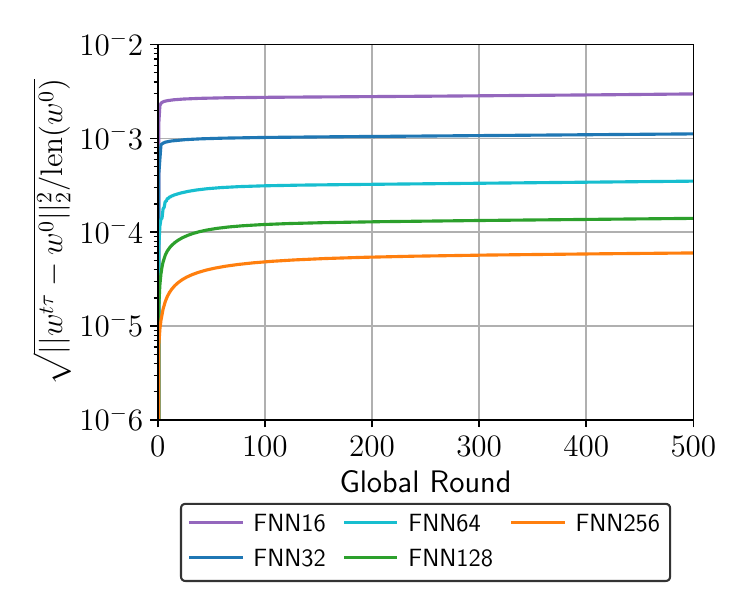}
		\label{fig:parameters change}
	}
	\caption{Training dynamic of NTK and model parameters. Each global round consists of $\tau=5$ local GD iterations, $\eta_0 = 1$, $\sigma_{W} = 1.5$, $\sigma_b = 0.1$.  The left figure shows the variation in the global NTK, the middle figure show the variation in a randomly chosen local NTK, while the right figure shows the model parameters' update during the training process.}
	\label{fig:training_dynamic}
\end{figure*}
\begin{figure}[ht]
	\centering
	\subfigure{
		\includegraphics[width=0.46\columnwidth]{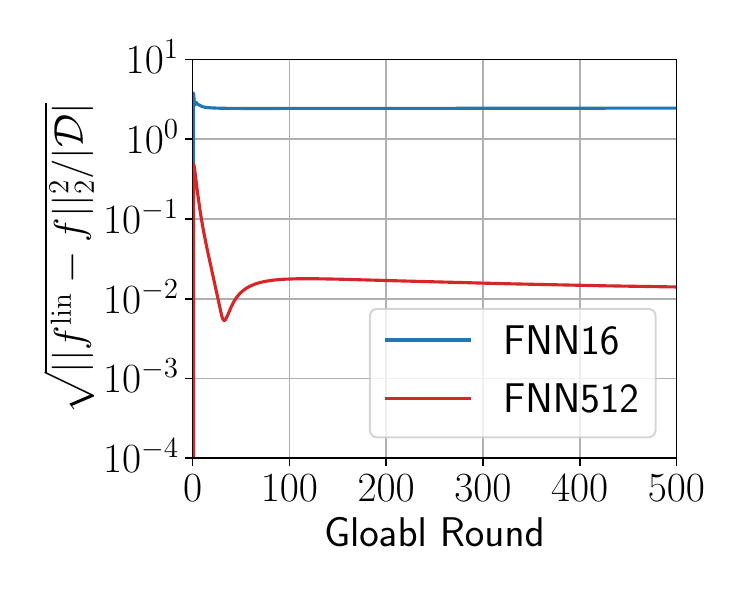}
		\label{fig:output diff fed and lin}
	}
	\subfigure{
		\includegraphics[width=0.46\columnwidth]{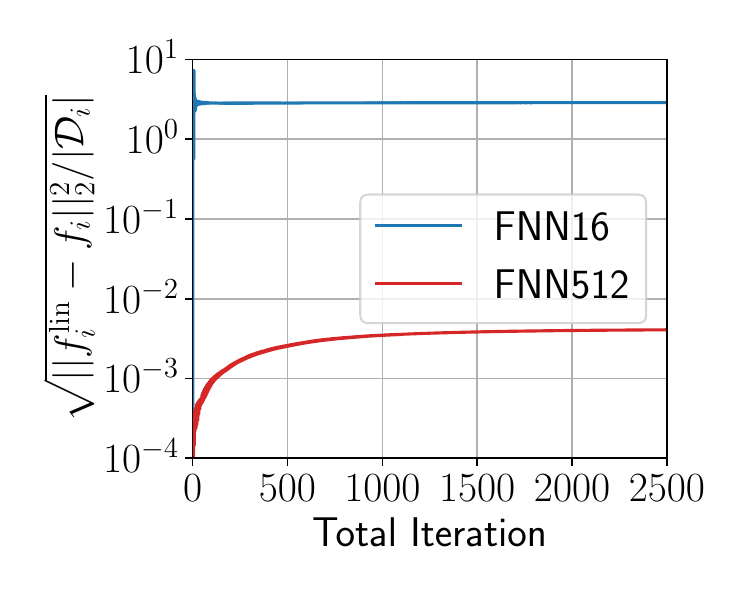}
		\label{fig:output diff fed and lin client 0}
	}
	\caption{Output difference between FedAvg and linear model. Each global round consists of $\tau=5$ local GD iterations, $\eta_0 = 1$, $\sigma_{W} = 1.5$, $\sigma_{b}=0.1$. }
	\label{fig:training_difference}
\end{figure}
\begin{figure}[ht]
	\centering
	\subfigure{
		\includegraphics[width=0.46\columnwidth]{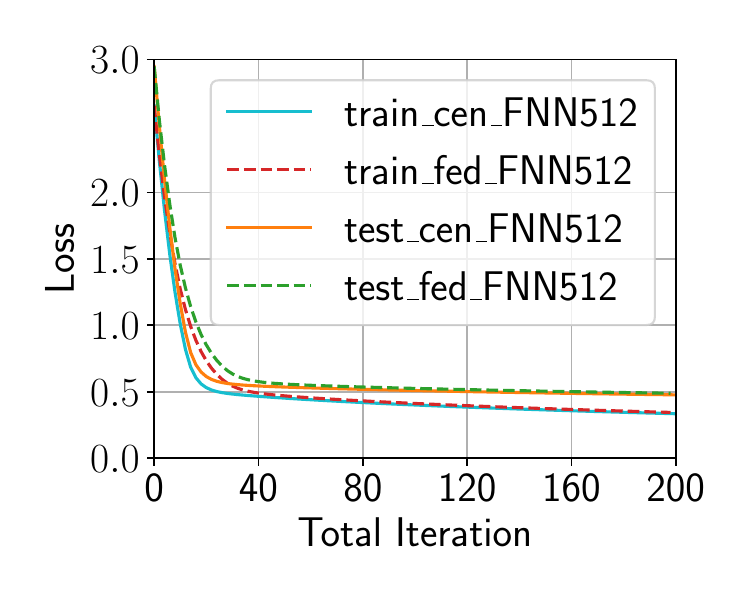}
		\label{fig:tau2}
	}
	\subfigure{
		\includegraphics[width=0.46\columnwidth]{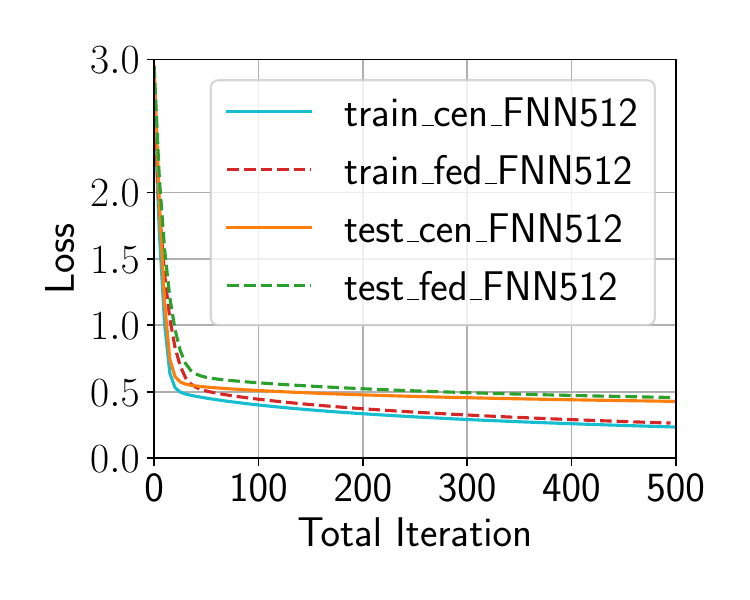}
		\label{fig:tau5}
	}
	\caption{Training and testing loss of FedAvg and centralized learning. $\eta_0 = 0.1$, $\sigma_{W} = 1.5$, $\sigma_{b}=0.1$. The global round of FedAvg consists of $\tau = 2$ and $\tau = 5$ local GD iterations in the left and right figures, respectively.}
	\label{fig:Difference between FedAvg with multiple local rounds and centralized learning}
\end{figure}

\paragraph{Non-IID Data Generation.} 1) Standard dataset: We conduct experiments on two widely used image classification datasets: MNIST \cite{lecun1998gradient} and CIFAR-10 \cite{cifar10}. To partition the datasets among different clients and generate non-IID data, we follow the approach proposed by \citet{hsu2019measuring}, which employs the Dirichlet distribution with a concentration parameter $\alpha$ controlling data heterogeneity. Specifically, a smaller value of $\alpha$ indicates a higher degree of data heterogeneity and we set $\alpha = 0.1$ throughout our experiments. 

2) Small dataset: To facilitate MSE loss minimization using gradient descent (GD) as required by our theoretical derivations, we also use the mini-MNIST and mini-CIFAR-10 datasets for binary image classification tasks. The mini-MNIST dataset is created by randomly selecting two classes from the MNIST dataset, followed by randomly sampling 50 images from each class for the training set and 10 images from each class for the test set. A similar approach is used to generate the mini-CIFAR-10 dataset, which contains 500 training images and 100 test images. To generate non-IID data, inspired by \citet{mcmahan2017communication, zhang2021fedpd}, we assign each class exclusively to specific clients: half of the clients receive all images from one class, while the remaining clients receive all images from the others.

\paragraph{Experimental Models.} We employ three types of models: FNN, convolutional neural networks (CNNs), and residual networks (ResNets). 

1) FNN: The structure of FNNs is detailed in Table~\ref{FC Networks} of Appendix~\ref{sec:network}, where a width factor $k$ is introduced to adjust the network width. By setting \(k = 1, 2, 4, 16\), we construct networks of varying widths, named FNN\(1\), FNN\(2\), FNN\(4\), and FNN\(16\).

2) CNN: We adopt the approach described by \citet{park2019effect} to obtain CNNs with varying widths having the base architecture in \citep[Figure 2]{lecun1998gradient}. The details of the architecture are presented in Table~\ref{Convolution Network} of Appendix~\ref{sec:network}, where the width factor \(k\) is used to scale the channel size. By setting \(k = 1, 2, 8, 32\), we generate CNN\(1\), CNN\(2\), CNN\(8\), and CNN\(32\). 

3) ResNet: The network architecture is based on the work of \citet{zagoruyko2016wide}, as shown in Table~\ref{Residual Network} of Appendix~\ref{sec:network}. The parameter \(\psi\) represents the number of blocks in each group, which is set to \(\psi = 1\) in our experiments. The channel size is fixed at 16, and we vary the width factor \(k = 1, 2, 4, 16\) to obtain ResNet\(1\), ResNet\(2\), ResNet\(4\), and ResNet\(16\).

\subsection{Impact of Non-IID Versus Network Width}
Although our theoretical analysis is based on GD with learning rate $\eta = \mathcal{O}(n^{-1})$, to show our conclusions can be extended to more practical settings, we use SGD with batch size $64$ and set a common learning rate $\eta=0.1$ with a weight decay of $0.0005$. Moreover, for MNIST and CIFAR-10, we use the practical cross-entropy loss  instead of the MSE loss required in the theoretical analysis. As shown in Figure~\ref{fig:test_accuracy_networks}, in the non-IID cases, the test accuracy of FNN\(1\), FNN\(2\), FNN\(4\), and FNN\(16\) decreases by \(17.4\%\), \(9.5\%\), \(6.3\%\), and \(2.0\%\), respectively, compared to the IID cases. Similarly, the test accuracy of CNN\(1\), CNN\(2\), CNN\(8\), and CNN\(32\) drops by \(44.9\%\), \(26.7\%\), \(5.1\%\), and \(2.4\%\), while the test accuracy of ResNet\(1\), ResNet\(2\), ResNet\(4\), and ResNet\(16\) decreases by \(44.6\%\), \(29.1\%\), \(18.7\%\), and \(14.8\%\), respectively. These results verify that the impact of data heterogeneity diminishes as the network width increases.

To further verify that the impact of data heterogeneity vanishes as the network width approaches infinity, we set the learning rate with $\eta = \frac{\eta_0}{n}$ in line with our theoretical analysis. As shown in Figure~\ref{fig:Test accuracy of large networks}, the convergence rate and final accuracy of FNN$32$ are nearly identical for both IID and non-IID data, and a similar trend is observed for CNN$32$. In contrast, a noticeable gap exists in FNN$1$ between IID and non-IID data, which is also evident in CNN$1$.

Additionally, to monitor the evolution of model parameters and the global/local NTKs during training, we train FNNs on the non-IID mini-MNIST dataset using GD for binary classification with MSE loss. As shown in Figure~\ref{fig:training_dynamic},  increasing the network width diminishes the variations in both the global/local NTKs and the model parameters. For sufficiently wide networks,  the global and local NTKs as well as the model parameters remain nearly constant, exhibiting a lazy training behavior.

\subsection{Linear Approximation of FedAvg}\label{experiment2}
To show that overparameterized FedAvg can be well approximated by linear models, we train FNN$16$ and FNN$512$ on the non-IID mini-MNIST dataset with MSE loss for binary image classification using GD. In Figure~\ref{fig:training_difference}, we analyze the difference in the outputs between the global model $f$ in FedAvg and the global linear model $f^{\rm lin}$ throughout the training process. Additionally, a randomly selected local model is examined by comparing its outputs with those of the corresponding linear model $f_i^{\rm lin}$.

As shown in Figure~\ref{fig:training_difference}, we can observe that, for FNN$512$, the outputs of the global model $f$ and the linear model $f^{\rm lin}$ remain nearly identical throughout  training. By contrast, the narrower FNN$16$ exhibits noticeable difference between $f$ and $f^{\rm lin}$. A similar trend is observed for the local models $f_i$ and $f_i^{\rm lin}$. These findings confirm Theorem 2 that wider networks enable linear approximations to align more closely with the dynamics of FedAvg.

\subsection{Comparison of FedAvg with Centralized Learning}\label{section6.3}
To compare overparameterized FedAvg with centralized learning, in Figure~\ref{fig:Difference between FedAvg with multiple local rounds and centralized learning} we evaluate the loss of $f_{\rm cen}(x, \theta^{t'})$ and  $f(x, \theta^{t\tau})$ on both the training and testing dataset of mini-CIFAR-10, by ensuring $t'=t\tau$ for a fair comparison. It can be observed that the outputs of FedAvg and centralized learning are almost identical under the same number of GD iterations,  empirically confirming Theorem \ref{theorem3} that overparameterized FedAvg generalized the same as centralized learning.


\section{Conclusion and Future Directions}
In this work, we established a quantitative relationship between neural network width and the impact of data heterogeneity in FedAvg.  We  proved that the impact of data heterogeneity on the convergence of FedAvg diminishes at a rate of $\mathcal{O}(n^{-\frac{1}{2}})$ with increasing network width $n$ and vanishes entirely in the infinite-width limit. In that regime, we extended NTK theory from centralized learning to FL, showing that both the global and local models in FedAvg are linear and have constant NTKs. Furthermore, we derived closed-form expressions for the model outputs of FedAvg, revealing the equivalence between infinite-width FedAvg and centralized GD in both training dynamics and generalization performance, under matched training iterations. Extensive experiments on MINST and CIFAR-10 datasets validated our conclusions across different network architectures, loss functions, and optimization methods.

These theoretical findings provide valuable insights for practical federated learning. Notably, the linear dependence between model outputs and parameters suggests a potential communication-efficient FL strategy: clients may transmit only the model outputs instead of the complete model parameters for aggregation. This approach may significantly reduce the communication overhead of FL and deserves further investigation. Another essential  direction for future research involves extending these analyses to more realistic FL settings by relaxing the idealized assumptions, such as infinite network width and continuous-time gradient flow.

\section*{Acknowledgments}

We sincerely thank the anonymous reviewers for their valuable comments and suggestions. This work is supported by National Natural Science
Foundation of China under Grant 62301015.
\section*{Impact Statement}

This paper presents work whose goal is to advance the field of 
Machine Learning. There are many potential societal consequences 
of our work, none which we feel must be specifically highlighted here.


\bibliography{icml25}
\bibliographystyle{icml2025}

\newpage
\appendix
\onecolumn
\section{Essential Lemmas}\label{lemmas}
In this section, we introduce the necessary lemmas used in deriving Theorems \ref{theorem1} $\sim$ \ref{theorem3}. 

\begin{lemma}[\bf Local Lipschitzness of the Jacobian]\label{lem:lemma1}
There exists a constant $C>0$, such that for any $C'>0$, with high probability over random initialization the following holds:
	\begin{align}
		\left\{
		\begin{aligned}
			&\frac{1}{\sqrt{n}}\left\|J(\theta)-J(\theta')\right\|_F\leq C \left\|\theta-\theta'\right\|_2, \\
			& \frac{1}{\sqrt{n}}\left\|J(\theta)\right\|_F\leq C,
		\end{aligned}
		\right. \quad 
		\forall \theta,\theta'\in B(\theta_0,C'n^{-\frac{1}{2}})
	\end{align}
	where $B(\theta_0,R)\triangleq\{\theta:\left\|\theta-\theta_0\right\|_2<R\}$.
\end{lemma}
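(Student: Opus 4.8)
The plan is to reduce both inequalities to layer-wise control of the forward activations and the backward sensitivities, exploiting that every block of $J(\theta)$ is, via backpropagation, an outer product of the two. Writing $\alpha_l = W_l f_{l-1} + b_l$ for the pre-activation of layer $l$ and $\delta_l \triangleq \partial f/\partial \alpha_l$ for the backpropagated sensitivity, we have $\partial f/\partial W_l = \delta_l f_{l-1}^{T}$ and $\partial f/\partial b_l = \delta_l$, so that (summing over the samples in $\mathcal X$) $\|J(\theta)\|_F^2 = \sum_{l} \|\delta_l\|_2^2(\|f_{l-1}\|_2^2 + 1)$. It therefore suffices to bound $\|f_l\|_2$ and $\|\delta_l\|_2$, together with their variation as $\theta$ ranges over $B(\theta_0, C'n^{-\frac12})$.

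First I would control these quantities at the base point $\theta_0$. Under the Gaussian initialization with the $\sigma_{W_l}^2/n_l$ scaling, standard non-asymptotic bounds on the operator norm of Gaussian random matrices give $\|W_l^0\|_{\rm op} = \mathcal O(1)$ for all $l$ simultaneously with probability at least $1-\delta_0$. Combining this with Assumption~\ref{ass:compact} ($\|x\|_2 \le 1$) and the growth bound $|\sigma(z)| \le |\sigma(0)| + \|\sigma'\|_\infty|z|$ afforded by Assumption~\ref{ass:activation}, a forward induction over $l$ yields $\|f_l\|_2 = \mathcal O(\sqrt{n_l})$, while a companion backward induction using $\|\sigma'\|_\infty < \infty$ bounds $\|\delta_l\|_2$. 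Substituting into the expression for $\|J\|_F$ gives $\tfrac1{\sqrt n}\|J(\theta_0)\|_F \le C$, the second inequality at $\theta_0$.

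Next I would propagate these bounds to the entire ball. The crucial observation is that for any $\theta \in B(\theta_0, C'n^{-\frac12})$ we have $\|W_l - W_l^0\|_{\rm op} \le \|W_l - W_l^0\|_F \le \|\theta - \theta_0\|_2 \le C'n^{-\frac12} \to 0$, so all operator-norm and activation bounds of the previous step persist over the ball after at most doubling the constants; this is precisely what lets $C$ be chosen independent of $C'$. For the Lipschitz estimate I would write $J(\theta) - J(\theta')$ blockwise and telescope each block via the product-rule split $\delta_l f_{l-1}^{T} - \delta_l'(f_{l-1}')^{T} = (\delta_l - \delta_l')f_{l-1}^{T} + \delta_l'(f_{l-1}-f_{l-1}')^{T}$. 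The factor differences $\|f_l - f_l'\|_2$ and $\|\delta_l - \delta_l'\|_2$ are then controlled by a second coupled induction: the forward differences propagate through the Lipschitz continuity of $\sigma$ and the bounded weight norms, while the backward differences additionally require controlling $\mathrm{diag}(\sigma'(\alpha_l)) - \mathrm{diag}(\sigma'(\alpha_l'))$, which is where the regularity of $\sigma$ in Assumption~\ref{ass:activation} enters. Collecting the per-layer contributions and using $\|\theta - \theta'\|_2 = \mathcal O(n^{-\frac12})$ to absorb lower-order terms delivers $\tfrac1{\sqrt n}\|J(\theta)-J(\theta')\|_F \le C\|\theta-\theta'\|_2$.

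I expect the main obstacle to be the recursive coupling across the $L$ layers: the forward activations feed the backward sensitivities, and a perturbation in an early layer compounds through both the forward and backward passes. Making the telescoping argument rigorous — so that the accumulated per-layer constants remain bounded and, most importantly, independent of the radius parameter $C'$ — is the delicate point, and it hinges on the ball having radius $\mathcal O(n^{-\frac12})$, which renders every weight perturbation asymptotically negligible against the $\mathcal O(1)$ operator norms at initialization.
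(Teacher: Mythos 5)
The first thing to note is that the paper offers no proof of this lemma to compare against: its entire ``proof'' is the one-line remark that the statement is \citet[Lemma 1]{lee2019wide}, applied directly. Your proposal is therefore, in effect, a reconstruction of the argument in that reference, and your skeleton matches it: decompose the Jacobian blocks via backpropagation as $\partial f/\partial W_l = \delta_l f_{l-1}^{T}$, bound the forward activations ($\|f_l\|_2 = \mathcal{O}(\sqrt{n_l})$) and backward sensitivities at $\theta_0$ by coupled inductions driven by $\mathcal{O}(1)$ operator norms of the Gaussian weight matrices, observe that $\|W_l - W_l^0\|_{\rm op} \le C'n^{-\frac{1}{2}} \to 0$ lets every bound persist over the ball with $C$ independent of $C'$, and telescope $\delta_l f_{l-1}^{T} - \delta_l'(f_{l-1}')^{T}$ for the Lipschitz half. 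You also correctly locate where regularity beyond $\|\sigma'\|_\infty$ enters, namely in controlling $\mathrm{diag}(\sigma'(\alpha_l)) - \mathrm{diag}(\sigma'(\alpha_l'))$; note that this needs Lipschitzness of $\sigma'$, which is what \citet{lee2019wide} assume --- Assumption~\ref{ass:activation} as printed asserts only Lipschitzness of $\sigma$ (redundant given $\|\sigma'\|_\infty < \infty$) and is evidently a typo for that condition, so your reading is the right one.

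There is, however, one genuine gap, precisely at the step you yourself flag as delicate: under the standard parameterization used here, the backward recursion does not close with naive per-layer $\ell_2$ bookkeeping. Since $\|f_{l-1}\|_2 = \mathcal{O}(\sqrt{n})$, a rank-one weight perturbation aligned with $f_{l-1}$ yields $\|\alpha_l - \alpha_l'\|_2 = \mathcal{O}(\sqrt{n}\,\|\theta-\theta'\|_2)$, hence $\|\mathrm{diag}(\sigma'(\alpha_l)) - \mathrm{diag}(\sigma'(\alpha_l'))\|_{\rm op} = \mathcal{O}(\sqrt{n}\,\|\theta-\theta'\|_2)$; feeding this into $\|\delta_l - \delta_l'\|_2$ and multiplying by $\|f_{l-1}\|_2$ leaves, after the $n^{-\frac{1}{2}}$ normalization, a Lipschitz constant growing like $\sqrt{n}$, even though every individual estimate you list is true. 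Closing the loop requires something finer than layer-wise norms: for instance, pairing the diagonal difference against $v = W_{l+1}^{T}\delta_{l+1}$ via $\|(D-D')v\|_2 \le L_{\sigma'}\|\alpha_l - \alpha_l'\|_2\,\|v\|_\infty$ together with concentration showing $\|v\|_\infty = \mathcal{O}(n^{-\frac{1}{2}})$ (up to logarithmic factors) on the ball, or invoking a Hessian spectral-norm bound of the type $\frac{1}{\sqrt{n}}\|\nabla^2 g(\theta)\|_{\rm op} \le C_1$ --- the very estimate this paper borrows from \citet{jacot2020asymptotic} elsewhere --- from which the Lipschitz inequality follows column-by-column over the finite dataset. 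So your proposal identifies the right obstacle but does not resolve it; as a pointer it agrees with the proof the paper invokes, but as a self-contained argument the induction would stall exactly there.
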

Lemma~\ref{lem:lemma1} has been proved by \citet[Lemma 1]{lee2019wide} and we will apply it directly.

\begin{lemma}[\bf Local Lipschitzness of the Local Jacobian]\label{lem:lemma2}
	There exists a constant $C>0$, such that for any $C'>0$, with high probability over random initialization the following holds:
	\begin{align}
		\left\{
		\begin{aligned}
			& \frac{1}{\sqrt{n}}\left\|J_i(\theta)-J_i(\theta')\right\|_F\leq C\left\|\theta-\theta'\right\|_2, \\
			& \frac{1}{\sqrt{n}}\left\|J_i(\theta)\right\|_F\leq C,
		\end{aligned}
		\right.
		\quad \forall \theta,\theta'\in B(\theta_0,C'n^{-\frac{1}{2}})
	\end{align}
	where $B(\theta_0,R)\triangleq \{\theta:\left\|\theta-\theta_0\right\|_2<R\}$.
\end{lemma}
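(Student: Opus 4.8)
The plan is to derive Lemma~\ref{lem:lemma2} as an immediate corollary of Lemma~\ref{lem:lemma1}, exploiting the fact that the local model $f_i$ and the global model $f$ share the same architecture and parameters and that $\mathcal{X}_i\subseteq\mathcal{X}$. The central observation is that the local Jacobian is merely a column-selection of the global Jacobian. Concretely, since $f_i(x,\theta)=f(x,\theta)$ for every input $x$ (identical network structure) and $\mathcal{X}_i\subseteq\mathcal{X}$, the columns of $J_i(\theta)$ associated with client $i$'s samples coincide with the corresponding columns of $J(\theta)$. Using the projection matrix $P_i$ introduced in Section~\ref{section3}, this is written compactly as $J_i(\theta)=J(\theta)P_i^{T}$, where $P_iP_i^{T}=I$ and $\|P_i\|_{\rm op}=1$.

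First I would establish the identity $J_i(\theta)=J(\theta)P_i^{T}$ rigorously by differentiating $f_i(\theta)=P_if(\theta)$ with respect to $\theta$ (the outputs of client $i$ are exactly the client-$i$ block of the global outputs), which directly yields the column-selection relationship. Next I would invoke submultiplicativity of the Frobenius norm under right multiplication, $\|AB\|_F\le\|A\|_F\|B\|_{\rm op}$. Applying this with $B=P_i^{T}$ and using $\|P_i^{T}\|_{\rm op}=1$ gives, for all $\theta,\theta'\in B(\theta_0,C'n^{-1/2})$,
\begin{equation*}
\tfrac{1}{\sqrt n}\|J_i(\theta)-J_i(\theta')\|_F=\tfrac{1}{\sqrt n}\|(J(\theta)-J(\theta'))P_i^{T}\|_F\le\tfrac{1}{\sqrt n}\|J(\theta)-J(\theta')\|_F,
\end{equation*}
and likewise $\tfrac{1}{\sqrt n}\|J_i(\theta)\|_F\le\tfrac{1}{\sqrt n}\|J(\theta)\|_F$. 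Equivalently, one may argue directly that $J_i(\theta)$ is a column-submatrix of $J(\theta)$ and that deleting columns cannot increase the Frobenius norm, since $\|A\|_F^2$ equals the sum of squared column norms. Both right-hand sides are bounded by $C\|\theta-\theta'\|_2$ and $C$, respectively, by Lemma~\ref{lem:lemma1}, on exactly the same ball and the same high-probability event; hence the same constant $C$ works for the local Jacobian.

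There is essentially no hard obstacle: the result is a corollary rather than an independent statement, and no new concentration argument is required because the high-probability event is inherited verbatim from Lemma~\ref{lem:lemma1}. The only point demanding care is verifying the column-selection identity — in particular that $J_i$ and $J$ are evaluated at the same $\theta$ and share the same architecture — so that no extra randomness or client-dependent constant creeps in. Because there are only finitely many clients, even if one instead applied Lemma~\ref{lem:lemma1} separately to each $\mathcal{X}_i$, a union bound over the $M$ events and the maximum of the finitely many constants would suffice; but the column-selection route is cleaner and preserves the identical constant $C$ appearing in both lemmas.
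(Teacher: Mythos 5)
Your proposal is correct and matches the paper's proof in essence: the paper simply observes that $J_i(\theta)-J_i(\theta')$ and $J_i(\theta)$ are sub-blocks of the concatenated matrices $J(\theta)-J(\theta')$ and $J(\theta)$, so their Frobenius norms are dominated by the global ones, and then invokes Lemma~\ref{lem:lemma1} with the same constant $C$ on the same high-probability event. Your $J_i(\theta)=J(\theta)P_i^{T}$ formalization (with $\|P_i^T\|_{\rm op}=1$, consistent with the paper's identity $J(\theta)=\sum_i J_i(\theta)P_i$) is just a tidier statement of that column-selection fact, and your remark that no union bound or client-dependent constant is needed is exactly right.
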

\begin{proof}
	Since $J(\theta)-J(\theta')$ is the concatenation of all $J_i(\theta)-J_i(\theta')$ for $i=1,\cdots, M$, we have
	\begin{equation}
		\frac{1}{\sqrt{n}}||J_i(\theta)-J_i(\theta')||_F \leq\frac{1}{\sqrt{n}}||J(\theta)-J(\theta')||_F \leq C||\theta-\theta'||_2,
	\end{equation}
	where the last step applies Lemma~\ref{lem:lemma1}. Similarly, since $J(\theta)$ is the concatenation of all $J_i(\theta)$, we have
	\begin{equation}
		\frac{1}{\sqrt{n}}||J_i(\theta)||_F\leq \frac{1}{\sqrt{n}}||J(\theta)||_F \leq C.
	\end{equation}

\end{proof}
\begin{lemma}\label{lem:lemma3}
	For a square matrix A whose norm satisfies $||A||\leq \rho_A$,  the remainder term $\Omega(e^{-A}) \triangleq \sum_{k=2}^{\infty}(-1)^k\frac{A^k}{k!}$, i.e., the sum of second-order and higher terms in the Taylor series expansion of $e^{-A}$, satisfies
	\begin{equation}
	\Omega\left(e^{-A}\right)	\leq \frac{\rho_A^2}{2}e^{\rho_A}.
	\end{equation}
\end{lemma}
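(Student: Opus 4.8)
The plan is to read the claimed inequality as a bound on the norm $\|\Omega(e^{-A})\|$ (the left-hand side is a matrix, so the intended statement is a norm estimate), and to reduce the matrix bound to an elementary scalar series comparison. The argument uses nothing about $A$ beyond the hypothesis $\|A\|\le\rho_A$ and submultiplicativity of the chosen matrix norm, so it applies verbatim whether $\|\cdot\|$ denotes the operator or the Frobenius norm.

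First I would apply the triangle inequality to the defining series $\sum_{k=2}^{\infty}(-1)^k\frac{A^k}{k!}$ and then submultiplicativity ($\|A^k\|\le\|A\|^k$) to obtain
\[
\|\Omega(e^{-A})\| \le \sum_{k=2}^{\infty}\frac{\|A\|^k}{k!} \le \sum_{k=2}^{\infty}\frac{\rho_A^k}{k!},
\]
where the last step uses $\rho_A\ge 0$ so that the bound is preserved under taking powers. This reduces the problem to the purely scalar estimate $\sum_{k=2}^{\infty}\frac{\rho_A^k}{k!}\le\frac{\rho_A^2}{2}e^{\rho_A}$.

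To finish, I would factor out $\rho_A^2$ and reindex by $j=k-2$, writing the tail as $\rho_A^2\sum_{j=0}^{\infty}\frac{\rho_A^j}{(j+2)!}$, and compare it term-by-term against $\frac{\rho_A^2}{2}e^{\rho_A}=\frac{\rho_A^2}{2}\sum_{j=0}^{\infty}\frac{\rho_A^j}{j!}$. The comparison reduces to the elementary inequality $\frac{1}{(j+2)!}\le\frac{1}{2\,j!}$, i.e.\ $\frac{1}{(j+1)(j+2)}\le\frac12$, which holds for every $j\ge 0$ (and is tight at $j=0$). Since each coefficient $\rho_A^j$ is nonnegative, summing the term-wise inequalities yields the desired bound.

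There is essentially no obstacle in this lemma; the only point requiring care is verifying the factorial comparison for all $j\ge 0$, with the $j=0$ case being the one that saturates the constant $\tfrac12$. I would also remark that the resulting estimate is dimension-free, which is exactly the feature that makes it usable inside the induction proving Theorem~\ref{theorem1}, where $\Omega(e^{-A})$ arises as the higher-order remainder of the matrix exponential in \eqref{eq:equation30}.
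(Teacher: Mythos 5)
Your proof is correct and takes essentially the same route as the paper's: both apply the triangle inequality and submultiplicativity to get $\sum_{k=2}^{\infty}\frac{\|A\|^k}{k!}$, then use the termwise factorial comparison $k!\geq 2\,(k-2)!$ to extract the factor $\frac{\rho_A^2}{2}$ and recover the exponential series — your reindexing by $j=k-2$ is only a notational variant of the paper's direct step. Your reading of the statement as a bound on the norm $\|\Omega(e^{-A})\|$ also matches what the paper's proof actually establishes.
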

\begin{proof}
By taking the norm on both sides of $\Omega(e^{-A}) = \sum_{k=2}^{\infty}(-1)^k\frac{A^k}{k!}$, we can readily obtain
 	\begin{equation}
		\left\|\Omega\left(e^{-A}\right)\right\|
		\leq \sum_{k=2}^{\infty}\frac{\left\|A\right\|^k}{k!}\leq\frac{\left\|A\right\|^2}{2}\sum_{k=2}^{\infty}\frac{\left\|A\right\|^{k-2}}{(k-2)!}\leq \frac{\rho_A^2}{2}e^{\left\|A\right\|}\leq \frac{\rho_A^2}{2}e^{\rho_A}.
	\end{equation}
\end{proof}

\begin{lemma}\label{lem:lemma4}
	For any non-negative aggregation weights combination $\{p_i\}_{i=1,\cdots, M}$, if $\eta_0\tau$ is sufficiently small such that the terms of $\mathcal{O}(\eta_0^2\tau^2)$ and higher are neglected,  the following equation holds:
	\begin{equation}
		\sum_{i=1}^{M}p_ie^{-\eta_0\tau\Theta_i} = e^{-\eta_0\tau\sum_{i=1}^{M}p_i\Theta_i}. \label{eqn:lemma4}
	\end{equation} 
\end{lemma}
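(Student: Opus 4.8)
The plan is to reduce the claimed matrix-exponential identity to a first-order Taylor comparison, exploiting that the regime $\eta_0\tau\to 0$ linearizes the exponential and that the aggregation weights form a convex combination, i.e.\ $\sum_{i=1}^{M}p_i = 1$ (which holds in our setting since $p_i = |\mathcal{D}_i|/|\mathcal{D}|$). The key observation is that the non-commutativity of the $\Theta_i$ is irrelevant here, because it first manifests at second order in $\eta_0\tau$, which is discarded by hypothesis.

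First I would Taylor-expand each local exponential on the left-hand side and invoke Lemma~\ref{lem:lemma3}, writing $e^{-\eta_0\tau\Theta_i} = I - \eta_0\tau\Theta_i + \Omega\bigl(e^{-\eta_0\tau\Theta_i}\bigr)$, where the remainder obeys $\bigl\|\Omega(e^{-\eta_0\tau\Theta_i})\bigr\| \leq \tfrac{1}{2}\bigl(\eta_0\tau\|\Theta_i\|\bigr)^2 e^{\eta_0\tau\|\Theta_i\|} = \mathcal{O}(\eta_0^2\tau^2)$. Summing against the weights $p_i$ and using $\sum_{i=1}^{M}p_i = 1$ gives
\[
\sum_{i=1}^{M}p_i e^{-\eta_0\tau\Theta_i} = I - \eta_0\tau\sum_{i=1}^{M}p_i\Theta_i + \mathcal{O}(\eta_0^2\tau^2).
\]

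Next I would expand the right-hand side using the same estimate from Lemma~\ref{lem:lemma3} applied to the single matrix $A = \eta_0\tau\sum_{i=1}^{M}p_i\Theta_i$, obtaining
\[
e^{-\eta_0\tau\sum_{i=1}^{M}p_i\Theta_i} = I - \eta_0\tau\sum_{i=1}^{M}p_i\Theta_i + \mathcal{O}(\eta_0^2\tau^2).
\]
Comparing the two displays, the zeroth- and first-order terms coincide term by term, so the difference between the two sides is of order $\mathcal{O}(\eta_0^2\tau^2)$, which is neglected under the stated hypothesis; this yields the claimed equality~\eqref{eqn:lemma4}.

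The computation is routine, and I do not expect a genuine obstacle; the only point requiring care is the bookkeeping of the remainders. Specifically, I would confirm that both the per-client remainders and the aggregate remainder are uniformly $\mathcal{O}(\eta_0^2\tau^2)$, which Lemma~\ref{lem:lemma3} delivers provided $\|\Theta_i\|$ and $\bigl\|\sum_{i=1}^{M}p_i\Theta_i\bigr\| \leq \sum_{i=1}^{M}p_i\|\Theta_i\|$ remain bounded — a fact that follows from the boundedness of the Jacobian established in Lemmas~\ref{lem:lemma1} and~\ref{lem:lemma2}. With that in hand, no structural argument about commutators of the $\Theta_i$ is needed.
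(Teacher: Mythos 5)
Your proposal is correct and follows essentially the same route as the paper's proof: expand both sides to first order in $\eta_0\tau$, use $\sum_{i=1}^{M}p_i = 1$ to match the identity and linear terms, and discard the $\mathcal{O}(\eta_0^2\tau^2)$ remainders under the stated hypothesis. Your explicit appeal to Lemma~\ref{lem:lemma3} (with norm bounds from Lemmas~\ref{lem:lemma1} and~\ref{lem:lemma2}) to control the remainders, and your remark that non-commutativity of the $\Theta_i$ only enters at second order, merely make precise what the paper's proof leaves implicit.
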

\begin{proof}
Employing the Taylor series expansion on both sides of \eqref{eqn:lemma4} yields
	\begin{equation}
		\sum_{i=1}^{M}p_ie^{-\eta_0\tau\Theta_i} = \sum_{i=1}^{M}p_i\left[I-\eta_0\tau\Theta_i+\mathcal{O}\left(\eta_0^2\tau^2\right)\right] = I-\eta_0\tau\sum_{i=1}^{M}p_i\Theta_i , \label{eqn:left}
	\end{equation}
and
	\begin{equation}
		e^{-\eta_0\tau\sum_{i=1}^{M}p_i\Theta_i} = I-\eta_0\tau\sum_{i=1}^{M}p_i\Theta_i+\mathcal{O}\left(\eta_0^2\tau^2\right) = I-\eta_0\tau\sum_{i=1}^{M}p_i\Theta_i, \label{eqn:right}
	\end{equation}
respectively. Comparing \eqref{eqn:left} with \eqref{eqn:right}, we can obtain
	\begin{equation}
		\sum_{i=1}^{M}p_ie^{-\eta_0\tau\Theta_i} = e^{-\eta_0\tau\sum_{i=1}^{M}p_i\Theta_i}.
	\end{equation}
\end{proof}

\section{Proof of Theorem~\ref{theorem1}}\label{appendixA}
Since the width of network is large, with the initialization described in Section~\ref{section3}, the output $f(\theta^0)$ converges to Gaussian process $\mathcal{N}(0,\mathcal{K}(\mathcal{X},\mathcal{X}))$ according to the central limit theorem, where  $\mathcal{K}(\mathcal{X},\mathcal{X})=\lim\limits_{n\to\infty}\mathbb{E}\left[f(\theta^0)f(\theta^0)^T\right]$. Therefore, for arbitrarily small $\delta_0>0$, there exist constants $R_0>0$ and $n'$ such that for any $n\geq n'$, with probability at least $1-\delta_0$ over random initialization, we have
\begin{equation}
	\left\|g\left(\theta^0\right)\right\|_2\leq R_0. \label{eqn:n1}
\end{equation}
In the following, we prove Theorem~\ref{theorem1} by mathematical induction, where the induction hypotheses are 
\begin{align}
	\left\|g(\theta^{t\tau})\right\|_2 & \leq q^{t}R_0+\frac{2\eta_0\tau CC_1R_0\zeta\left(1-q^{t}\right)}{\left(1-q\right)^2},\label{equation39} \\
	\left\|\theta^{t\tau}-\theta^0\right\|_2 & \leq \frac{\eta_0 \tau C R_0\left(1-q^t\right)}{\sqrt{n}\left(1-q\right)}. \label{equation40}
\end{align}
When $t=0$, \eqref{equation39} and \eqref{equation40} trivially hold. Then, we aim to prove the hypotheses hold in the $(t+1)$-th global round, i.e., 
\begin{align}
	\left\|g(\theta^{(t+1)\tau})\right\|_2 & \leq q^{t+1}R_0+\frac{2\eta_0\tau CC_1R_0\zeta\left(1-q^{t+1}\right)}{\left(1-q\right)^2}, \label{eq:equation64} \\ 
	\left\|\theta^{(t+1)\tau}-\theta^0\right\|_2 & \leq \frac{\eta_0 \tau C R_0\left(1-q^{t+1}\right)}{\sqrt{n}\left(1-q\right)}. \label{eq:equation65}
\end{align}
Referring to \eqref{eq:equation13}, the local update step of client $i$ in the $(t\tau+r+1)$-th iteration is
\begin{equation}
\theta_i^{t\tau+r+1} = \theta_i^{t\tau+r}-\frac{\eta}{|\mathcal{D}_i|} J_i\left(\theta_i^{t\tau+r}\right)g_i\left(\theta_i^{t\tau+r}\right).
\end{equation}
Since $\eta = \frac{\eta_0}{n}$ is small, we can approximate the local update with gradient flow by making time continuous, yielding
\begin{equation}
	\frac{d\theta_i^{t\tau+r}}{dr}=-\frac{\eta}{|\mathcal{D}_i|} J_i\left(\theta_i^{t\tau+r}\right) g_i\left(\theta_i^{t\tau+r}\right) .
\end{equation}
Further applying the chain rule, we can obtain
\begin{align}\label{eq:equation53}
	\frac{dg\left(\theta_i^{t\tau+r}\right)}{dr}
	&=  J\left(\theta_i^{t\tau+r}\right)^T  \frac{d\theta_i^{t\tau+r}}{dr}\nonumber\\
	& = -\frac{\eta}{|\mathcal{D}_i|} J\left(\theta_i^{t\tau+r}\right)^TJ_i\left(\theta_i^{t\tau+r}\right) g_i\left(\theta_i^{t\tau+r}\right)\nonumber\\
	& = -\frac{\eta_0}{n|\mathcal{D}_i|} J\left(\theta_i^{t\tau+r}\right)^TJ_i\left(\theta_i^{t\tau+r}\right) P_ig\left(\theta_i^{t\tau+r}\right)\nonumber\\
	&= -\frac{\eta_0\Theta_i^{t\tau+r}}{|\mathcal{D}_i|} g\left(\theta_i^{t\tau+r}\right),
\end{align}
where the last step follows from the definition of the local NTK in \eqref{equation21}. By integrating both sides of \eqref{eq:equation53} from $0$ to $r$ and applying the mean value theorem for integrals, we have
\begin{equation}\label{eq:equation49}
	g\left(\theta_i^{t\tau+r}\right) = e^{-\frac{\eta_0\tau}{|\mathcal{D}_i|}\Theta_i^{t\tau+\hat{r}_i}} g\left(\theta_i^{t\tau}\right),~\hat{r}_i\in\left(0,r\right)
\end{equation}
Replacing $r$ with $\tau$ yields
\begin{equation}
	g\left(\theta_i^{t\tau+\tau}\right) = e^{-\frac{\eta_0\tau}{|\mathcal{D}_i|}\Theta_i^{t\tau+\bar{r}_i}} g\left(\theta_i^{t\tau}\right),~\bar{r}_i\in\left(0,\tau\right) \label{eqn:gtheta}
\end{equation}

\subsection{Proof of \eqref{eq:equation65}}
We first prove \eqref{eq:equation65}. To bound $\big\|\theta^{(t+1)\tau}-\theta^0\big\|_2$, we proceed the following derivations.
\begin{align}
	\frac{d\left\|\theta_i^{t\tau+r}-\theta_i^0\right\|_2}{dr}
	&\leq \left\|\frac{d\theta_i^{t\tau+r}}{dr}\right\|_2 \nonumber\\
	& = \frac{\eta}{|\mathcal{D}_i|}\left\|J_i\left(\theta_i^{t\tau+r}\right)g_i\left(\theta_i^{t\tau+r}\right)\right\|_{2} \nonumber\\
	& = \frac{\eta}{|\mathcal{D}_i|}\left\|J_i\left(\theta_i^{t\tau+r}\right)P_ig\left(\theta_i^{t\tau+r}\right)\right\|_{2} \nonumber\\
	& \leq \frac{\eta}{|\mathcal{D}_i|}\left\|J_i\left(\theta_i^{t\tau+r}\right)\right\|_{F}\left\|P_i\right\|_{\rm op}\left\|g\left(\theta_i^{t\tau+r}\right)\right\|_{2}\nonumber\\
	&\overset{\sf (a)}{\leq} \frac{\eta C\sqrt{n}}{|\mathcal{D}_i|}\left\|g\left(\theta_i^{t\tau+r}\right)\right\|_{2}\nonumber\\
	&= \frac{\eta_0 C}{\sqrt{n}|\mathcal{D}_i|}\left\|e^{-\frac{\eta_0\tau}{|\mathcal{D}_i|}\Theta_i^{t\tau+\hat{r}_i}} g\left(\theta^{t\tau}\right)\right\|_{2}\nonumber\\
	&\leq \frac{\eta_0 C}{\sqrt{n}|\mathcal{D}_i|}\left\|e^{-\frac{\eta_0\tau}{|\mathcal{D}_i|}\Theta_i^{t\tau+\hat{r}_i}}\right\|_{\rm op} \left\|g\left(\theta^{t\tau}\right)\right\|_{2}\nonumber\\
	&\leq \frac{\eta_0 C}{\sqrt{n}|\mathcal{D}_i|} \left\|g\left(\theta^{t\tau}\right)\right\|_{2}, \label{eqn:dtheta}
\end{align}
where the first step is obtained by applying the chain rule and Cauchy-Schwarz inequality, step $\sf (a)$ holds because of Lemma~\ref{lem:lemma2} and $\left\|P_i\right\|_{\rm op} = 1$, and the last step holds because $\Theta_i^{t\tau+\hat{r}_i}$ is not full rank  from its definition \eqref{equation21}, yielding $\big\|e^{-\frac{\eta_0\tau}{|\mathcal{D}_i|}\Theta_i^{t\tau+\hat{r}_i}}\big\|_{\rm op} \leq e^{-\frac{\eta_0\tau}{|\mathcal{D}_i|}\lambda_{\min}(\Theta_i^{t\tau+\hat{r}_i})} = 1$. Integrating from $0$ to $r$ on both sides of \eqref{eqn:dtheta} yields
\begin{equation}\label{eq:equation100}
	\left\|\theta_i^{t\tau+r}-\theta_i^0\right\|_2 \leq \frac{\eta_0 rC }{\sqrt{n}|\mathcal{D}_i|}\left\|g\left(\theta^{t\tau}\right)\right\|_{2} + \left\|\theta^{t\tau}-\theta^0\right\|_2.
\end{equation}
Further replacing $r$ with $\tau$ yields
\begin{equation}\label{eq:equation62}
	\left\|\theta_i^{t\tau+\tau}-\theta_i^0\right\|_2 \leq \frac{\eta_0 \tau C }{\sqrt{n}|\mathcal{D}_i|}\left\|g\left(\theta^{t\tau}\right)\right\|_{2} + \left\|\theta^{t\tau}-\theta^0\right\|_2.
\end{equation}
According to \eqref{eq:equation14}, we can obtain
\begin{align}
	\left\|\theta^{\left(t+1\right)\tau}-\theta^0\right\|_2 
	&= \left\|\sum_{i=1}^{M}p_i\left(\theta_i^{t\tau+\tau}-\theta^0\right)\right\|_2\nonumber\\
	&\leq \sum_{i=1}^{M}p_i\left\|\theta_i^{t\tau+\tau}-\theta^0\right\|_2\nonumber\\
	&\overset{\sf (a)}{\leq} \sum_{i=1}^{M}p_i\left[\frac{\eta_0 \tau C }{\sqrt{n}|\mathcal{D}_i|}\left\|g\left(\theta^{t\tau}\right)\right\|_{2} + \left\|\theta^{t\tau}-\theta^0\right\|_2\right]\nonumber\\
	&= \frac{M\eta_0 \tau C }{\sqrt{n}|\mathcal{D}|}\left\|g\left(\theta^{t\tau}\right)\right\|_{2} + \left\|\theta^{t\tau}-\theta^0\right\|_2\nonumber\\ 
	&\overset{\sf (b)}{\leq} \frac{\eta_0 \tau C }{\sqrt{n}}\left\|g\left(\theta^{t\tau}\right)\right\|_{2} + \left\|\theta^{t\tau}-\theta^0\right\|_2\nonumber\\
	&\overset{\sf (c)}{\leq} \frac{\eta_0 \tau C}{\sqrt{n}} \left[q^{t}\left(R_0-\frac{2\eta_0\tau CC_1R_0\zeta}{\left(1-q\right)^2}\right) + \frac{2\eta_0\tau CC_1R_0\zeta}{\left(1-q\right)^2}\right] + \frac{\eta_0 \tau C R_0\left(1-q^t\right)}{\sqrt{n}\left(1-q\right)}\nonumber\\
	&= \frac{\eta_0 \tau C R_0\left(1-q^{t+1}\right)}{\sqrt{n}\left(1-q\right)},
\end{align}
where step $\sf (a)$ comes from \eqref{eq:equation62}, step $\sf (b)$ holds because $|\mathcal{D}|\geq M$, step $\sf (c)$ applies the induction hypothesis \eqref{equation39} and \eqref{equation40}, and the last step omits $\frac{\zeta}{\sqrt{n}}=\mathcal{O}(n^{-1})$  according to Assumption~\ref{ass:wide}. Therefore, \eqref{eq:equation65} is proved. 

\subsection{Bounding the Model Divergence}
To prove \eqref{eq:equation64}, we first bound $\big\|\Delta \theta_i^{\left(t+1\right)\tau}\big\|_2$. According to \eqref{eq:equation100}, we have:
\begin{align}\label{eq:equation56}
	\left\|\theta_i^{t\tau+r}-\theta_i^0\right\|_2
	&\leq \frac{\eta_0 rC }{\sqrt{n}|\mathcal{D}_i|}\left\|g\left(\theta^{t\tau}\right)\right\|_{2} + \left\|\theta^{t\tau}-\theta^0\right\|_2\nonumber\\
	&\leq \frac{\eta_0 rC}{\sqrt{n}|\mathcal{D}_i|} \left[q^{t}\left(R_0-\frac{2\eta_0\tau CC_1R_0\zeta}{\left(1-q\right)^2}\right) + \frac{2\eta_0\tau CC_1R_0\zeta}{\left(1-q\right)^2}\right]+ \frac{\eta_0 \tau C R_0\left(1-q^t\right)}{\sqrt{n}\left(1-q\right)}\nonumber\\
	&=\frac{\eta_0 rC q^{t}R_0}{\sqrt{n}|\mathcal{D}_i|}+ \frac{\eta_0 \tau C R_0\left(1-q^t\right)}{\sqrt{n}\left(1-q\right)},
\end{align}
where the second step employs the induction hypotheses \eqref{equation39} and \eqref{equation40}, and the last step omits $\frac{\zeta}{\sqrt{n}}=\mathcal{O}(n^{-1})$. Referring to \eqref{eqn:di}, we have
\begin{align}\label{eq:equation39}
	\left\|\Delta \theta_i^{\left(t+1\right)\tau}\right\|_2 
	&= \left\|\theta_i^{t\tau+\tau} - \theta^{\left(t+1\right)\tau}\right\|_2\nonumber\\
	& = \left\|\left(\theta_i^{t\tau+\tau}-\theta^0\right)-\left(\theta^{\left(t+1\right)\tau}-\theta^0\right)\right\|_2 \nonumber\\
	& = \left\|\left(\theta_i^{t\tau+\tau}-\theta^0\right)- \sum_{i=1}^{M}p_i\left(\theta_i^{t\tau+\tau}-\theta^0\right)\right\|_2\nonumber\\
	& \leq \left\|\left(\theta_i^{t\tau+\tau}-\theta^0\right)\right\|_2 + \sum_{i=1}^{M}p_i\left\|\left(\theta_i^{t\tau+\tau}-\theta^0\right)\right\|_2\nonumber\\
	& \overset{\sf(a)}{\leq} \frac{\eta_0 \tau C q^{t}R_0}{\sqrt{n}|\mathcal{D}_i|}+ \frac{\eta_0 \tau C R_0\left(1-q^t\right)}{\sqrt{n}\left(1-q\right)} +\sum_{i=1}^{M}p_i\left[\frac{\eta_0 \tau C q^{t}R_0}{\sqrt{n}|\mathcal{D}_i|}+ \frac{\eta_0 \tau C R_0\left(1-q^t\right)}{\sqrt{n}\left(1-q\right)}\right]\nonumber\\
	& = \frac{\eta_0 \tau C q^{t}R_0}{\sqrt{n}|\mathcal{D}_i|}+ \frac{\eta_0 \tau C R_0\left(1-q^t\right)}{\sqrt{n}\left(1-q\right)} +\frac{M\eta_0 \tau C q^{t}R_0}{\sqrt{n}|\mathcal{D}|}+ \frac{\eta_0 \tau C R_0\left(1-q^t\right)}{\sqrt{n}\left(1-q\right)}\nonumber\\
	&\overset{\sf(b)}{\leq} \frac{\eta_0 \tau C q^{t}R_0}{\sqrt{n}}+ \frac{\eta_0 \tau C R_0\left(1-q^t\right)}{\sqrt{n}\left(1-q\right)} +\frac{\eta_0 \tau C q^{t}R_0}{\sqrt{n}}+ \frac{\eta_0 \tau C R_0\left(1-q^t\right)}{\sqrt{n}\left(1-q\right)}\nonumber\\
	&= \frac{2\eta_0\tau CR_0}{\sqrt{n}}\left(q^t+\frac{1-q^t}{1-q}\right)\nonumber\\
	&= \frac{2\eta_0\tau CR_0\left(1-q^{t+1}\right)}{\sqrt{n}\left(1-q\right)}\nonumber\\
	&\leq \frac{2\eta_0\tau CR_0}{\sqrt{n}\left(1-q\right)},
\end{align}
where the step $(a)$ is from  \eqref{eq:equation56}, step $(b)$ holds because $|\mathcal{D}_i|\geq 1$ and $|\mathcal{D}| \geq M$, and the last step hold because  we require $0\leq q<1$.\footnote{We prove that there exists $\eta_0>0$ such that $0\leq q<1$ at the end of Section~\ref{sec:B3}.} Therefore, the data heterogeneity term can be bounded by
\begin{equation}\label{eq:equation165}
	\sum_{i=1}^{M}p_i\left\|\Delta  \theta_i^{(t+1)\tau}\right\|_2 \leq \sum_{i=1}^{M}p_i \frac{2\eta_0\tau CR_0}{\sqrt{n}\left(1-q\right)} = \frac{2\eta_0\tau CR_0}{\sqrt{n}\left(1-q\right)} = \zeta.
\end{equation}
Notably, we have proven \eqref{eq:het}. 

\subsection{Proof of  \eqref{eq:equation64}} \label{sec:B3}
Finally, we prove \eqref{eq:equation64} to finish the induction. Taking the Taylor series expansion of  $g\left(\theta_i^{t\tau+\tau}\right)$  at $\theta^{\left(t+1\right)\tau}$ yields
\begin{equation}
	g\left(\theta_i^{t\tau+\tau}\right) = g\left(\theta^{\left(t+1\right)\tau}\right) + J\left(\theta^{\left(t+1\right)\tau}\right)^T\Delta \theta_i^{\left(t+1\right)\tau} + \Omega_i,
\end{equation}
where $\Omega_i$ represents the remainder terms of order two and above. Taking the sum of both sides yields
\begin{align}
	\sum_{i=1}^{M}p_ig\left(\theta_i^{t\tau+\tau}\right)
	&= \sum_{i=1}^{M}p_ig\left(\theta^{\left(t+1\right)\tau}\right) + \sum_{i=1}^{M}p_i J\left(\theta^{\left(t+1\right)\tau}\right)^T\Delta \theta_i^{\left(t+1\right)\tau} + \sum_{i=1}^{M}p_i\Omega_i \nonumber\\
	&=\sum_{i=1}^{M}p_ig\left(\theta^{\left(t+1\right)\tau}\right)+ \sum_{i=1}^{M}p_i\Omega_i \nonumber\\
	&=g\left(\theta^{\left(t+1\right)\tau}\right)+ \sum_{i=1}^{M}p_i\Omega_i, \label{eqn:sum}
\end{align}
where the second step holds because $\sum_{i=1}^{M}p_iJ\left(\theta^{\left(t+1\right)\tau}\right)^T\Delta \theta_i^{\left(t+1\right)\tau} = 0$ according to the model aggregation \eqref{eq:equation14}. Rewriting \eqref{eqn:sum}, we have
\begin{equation}\label{eq:equation20}
	g\left(\theta^{\left(t+1\right)\tau}\right)=\sum_{i=1}^{M}p_ig\left(\theta_i^{t\tau+\tau}\right)-\sum_{i=1}^{M}p_i\Omega_i.
\end{equation}
Taking the norm of both sides yields
\begin{equation}\label{eq:equation41}
	\left\|g\left(\theta^{\left(t+1\right)\tau}\right)\right\|_2\leq\left\|\sum_{i=1}^{M}p_ig\left(\theta_i^{t\tau+\tau}\right)\right\|_2+\left\|\sum_{i=1}^{M}p_i\Omega_i\right\|_2.
\end{equation}
In the following, we bound $\left\|\sum_{i=1}^{M}p_ig\left(\theta_i^{t\tau+\tau}\right)\right\|_2$ and $\left\|\sum_{i=1}^{M}p_i\Omega_i\right\|_2$, respectively. 

\paragraph{1) Bounding $\left\|\sum_{i=1}^{M}p_ig\left(\theta_i^{t\tau+\tau}\right)\right\|_2$:}  According to \eqref{eqn:gtheta}, we have
\begin{align}\label{eq:equation42}
	\left\|\sum_{i=1}^{M}p_ig\left(\theta_i^{t\tau+\tau}\right)\right\|_2
	&=\left\|\sum_{i=1}^{M}p_ie^{-\frac{\eta_0\tau}{|\mathcal{D}_i|}\Theta_i^{t\tau+\bar{r}_i}} g\left(\theta_i^{t\tau}\right)\right\|_2\nonumber\\
	&=\left\|\sum_{i=1}^{M}p_i\left[I-\frac{\eta_0\tau}{|\mathcal{D}_i|}\Theta_i^{t\tau+\bar{r}_i} +\Omega\left(e^{-\frac{\eta_0\tau}{|\mathcal{D}_i|}\Theta_i^{t\tau+\bar{r}_i}}\right)\right]g\left(\theta_i^{t\tau}\right)\right\|_2\nonumber\\
	&\leq\left\|\sum_{i=1}^{M}p_i\left[I-\frac{\eta_0\tau}{|\mathcal{D}_i|}\Theta_i^{t\tau+\bar{r}_i}+\Omega\left(e^{-\frac{\eta_0\tau}{|\mathcal{D}_i|}\Theta_i^{t\tau+\bar{r}_i}}\right)\right]\right\|_{\rm op}\left\|g\left(\theta_i^{t\tau}\right)\right\|_2\nonumber\\
	&\leq\left\|\sum_{i=1}^{M}p_i\left(I-\frac{\eta_0\tau}{|\mathcal{D}_i|}\Theta_i^{t\tau+\bar{r}_i}\right)\right\|_{\rm op}\left\|g\left(\theta_i^{t\tau}\right)\right\|_2+\sum_{i=1}^{M}p_i\left\|\Omega\left(e^{-\frac{\eta_0\tau}{|\mathcal{D}_i|}\Theta_i^{t\tau+\bar{r}_i}}\right)\right\|_{\rm op}\left\|g\left(\theta_i^{t\tau}\right)\right\|_2\nonumber\\
	&=\left\|I-\frac{\eta_0\tau}{|\mathcal{D}|}\sum_{i=1}^{M}\Theta_i^{t\tau+\bar{r}_i}\right\|_{\rm op}\left\|g\left(\theta_i^{t\tau}\right)\right\|_2+\sum_{i=1}^{M}p_i\left\|\Omega\left(e^{-\frac{\eta_0\tau}{|\mathcal{D}_i|}\Theta_i^{t\tau+\bar{r}_i}}\right)\right\|_{\rm op}\left\|g\left(\theta_i^{t\tau}\right)\right\|_2,
\end{align}
where the second step is obtained by employing the Taylor series expansion and $\Omega(e^{-A}) = \sum_{k=2}^{\infty}\frac{(-1)^k}{k!}A^k$. Then, we bound $\left\|I-\frac{\eta_0\tau}{|\mathcal{D}|}\sum_{i=1}^{M}\Theta_i^{t\tau+\bar{r}_i}\right\|_{\rm op}$ and $\sum_{i=1}^{M}p_i\left\|\Omega\left(e^{-\frac{\eta_0\tau}{|\mathcal{D}_i|}\Theta_i^{t\tau+\bar{r}_i}}\right)\right\|_{\rm op}$, respectively.

 $\big\|I-\frac{\eta_0\tau}{|\mathcal{D}|}\sum_{i=1}^{M}\Theta_i^{t\tau+\bar{r}_i}\big\|_{\rm op}$ can be rewritten as
\begin{align}\label{eq:equation68}
	\left\|I-\frac{\eta_0\tau}{|\mathcal{D}|}\sum_{i=1}^{M}\Theta_i^{t\tau+\bar{r}_i}\right\|_{\rm op}
	&=\left\|I-\frac{\eta_0\tau\Theta}{|\mathcal{D}|}+\frac{\eta_0\tau\Theta}{|\mathcal{D}|}-\frac{\eta_0\tau\Theta^0}{|\mathcal{D}|}+\frac{\eta_0\tau\Theta^0}{|\mathcal{D}|}-\frac{\eta_0\tau}{|\mathcal{D}|}\sum_{i=1}^{M}\Theta_i^{t\tau+\bar{r}_i}\right\|_{\rm op}\nonumber\\
	&\leq\left\|I-\frac{\eta_0\tau\Theta}{|\mathcal{D}|}\right\|_{\rm op}+\left\|\frac{\eta_0\tau\Theta}{|\mathcal{D}|}-\frac{\eta_0\tau\Theta^0}{|\mathcal{D}|}\right\|_{\rm op}+\left\|\frac{\eta_0\tau\Theta^0}{|\mathcal{D}|}-\frac{\eta_0\tau}{|\mathcal{D}|}\sum_{i=1}^{M}\Theta_i^{t\tau+\bar{r}_i}\right\|_{\rm op}\nonumber\\
	&= \left(1-\frac{\eta_0\tau\lambda_{m}}{|\mathcal{D}|}\right)+\frac{\eta_0\tau}{|\mathcal{D}|}\left\|\Theta-\Theta^0\right\|_{\rm op}+\frac{\eta_0\tau}{|\mathcal{D}|}\left\|\Theta^0-\sum_{i=1}^{M}\Theta_i^{t\tau+\bar{r}_i}\right\|_{\rm op}\nonumber\\
	&\leq \left(1-\frac{\eta_0\tau\lambda_{m}}{|\mathcal{D}|}\right)+\frac{\eta_0\tau}{|\mathcal{D}|}\left\|\Theta-\Theta^0\right\|_{F}+\frac{\eta_0\tau}{|\mathcal{D}|}\left\|\Theta^0-\sum_{i=1}^{M}\Theta_i^{t\tau+\bar{r}_i}\right\|_{\rm op}.
\end{align}

According to \citet[Section G.1]{lee2019wide}, there exists $n''$, such that the following event
\begin{equation}
	\left\|\Theta^0-\Theta\right\|_{F} \leq \frac{\lambda_m}{3} \label{eqn:n2}
\end{equation}
with probability at least $1-\frac{\delta_0}{5}$ hold for any $n\geq n''$. As for $\left\|\sum_{i=1}^{M}\Theta_i^{t\tau+\bar{r}_i}-\Theta^0\right\|_{\rm op}$, according to the definition of local NTK \eqref{equation21}, we can obtain
\begin{align}\label{eq:equation70}
	\left\|\sum_{i=1}^{M}\Theta_i^{t\tau+\bar{r}_i}-\Theta^0\right\|_{\rm op}
	&=\frac{1}{n}\left\|\sum_{i=1}^{M}J\left(\theta_i^{t\tau+\bar{r}_i}\right)^TJ_i\left(\theta_i^{t\tau+\bar{r}_i}\right)P_i-\sum_{i=1}^{M}J\left(\theta^{0}\right)^TJ_i\left(\theta_i^{0}\right)P_i\right\|_{\rm op}\nonumber\\
	&=\frac{1}{n}\left\|\sum_{i=1}^{M}\left[J\left(\theta_i^{t\tau+\bar{r}_i}\right)^TJ_i\left(\theta_i^{t\tau+\bar{r}_i}\right)P_i-J\left(\theta^{0}\right)^TJ_i\left(\theta_i^{0}\right)P_i\right]\right\|_{\rm op}\nonumber\\
	&\leq \frac{1}{n}\sum_{i=1}^{M}\left\|J\left(\theta_i^{t\tau+\bar{r}_i}\right)^TJ_i\left(\theta_i^{t\tau+\bar{r}_i}\right)-J\left(\theta^{0}\right)^TJ_i\left(\theta_i^{0}\right)\right\|_{\rm op}\left\|P_i\right\|_{\rm op}\nonumber\\
	&= \frac{1}{n}\sum_{i=1}^{M}\left\|J\left(\theta_i^{t\tau+\bar{r}_i}\right)^TJ_i\left(\theta_i^{t\tau+\bar{r}_i}\right)-J\left(\theta_i^{0}\right)^TJ_i\left(\theta_i^{0}\right)\right\|_{\rm op}\nonumber\\
	&= \frac{1}{n}\sum_{i=1}^{M}\left\|J\left(\theta_i^{t\tau+\bar{r}_i}\right)^T\left[J_i\left(\theta_i^{t\tau+\bar{r}_i}\right)-J_i\left(\theta_i^0\right)\right]+\left[J\left(\theta_i^{t\tau+\bar{r}_i}\right)^T-J\left(\theta_i^{0}\right)^T\right]J_i\left(\theta_i^{0}\right)\right\|_{\rm op}\nonumber\\
	&\leq \frac{1}{n}\sum_{i=1}^{M}\left[\left\|J\left(\theta_i^{t\tau+\bar{r}_i}\right)\right\|_{\rm op}\left\|J_i\left(\theta_i^{t\tau+\bar{r}_i}\right)-J_i\left(\theta_i^0\right)\right\|_{\rm op}+\left\|J\left(\theta_i^{t\tau+\bar{r}_i}\right)^T-J\left(\theta_i^{0}\right)^T\right\|_{\rm op}\left\|J_i\left(\theta_i^{0}\right)\right\|_{\rm op}\right]\nonumber\\
	&\leq \frac{1}{n}\sum_{i=1}^{M}\left[\left\|J\left(\theta_i^{t\tau+\bar{r}_i}\right)\right\|_{F}\left\|J_i\left(\theta_i^{t\tau+\bar{r}_i}\right)-J_i\left(\theta_i^0\right)\right\|_{F}+\left\|J\left(\theta_i^{t\tau+\bar{r}_i}\right)^T-J\left(\theta_i^{0}\right)^T\right\|_{F}\left\|J_i\left(\theta_i^{0}\right)\right\|_{F}\right]\nonumber\\
	&\leq 2C^2\sum_{i=1}^{M}\left\|\theta_i^{t\tau+\bar{r}_i}-\theta_i^{0}\right\|_2,
\end{align}
where the last step holds because of Lemma~\ref{lem:lemma1} and Lemma~\ref{lem:lemma2}. Plugging \eqref{eq:equation56} into \eqref{eq:equation70} yields
\begin{align}
	\left\|\sum_{i=1}^{M}\Theta_i^{t\tau+\bar{r}_i}-\Theta^0\right\|_{\rm op} 
	&\leq \frac{2\eta_0 \bar{r}_i C^3 q^{t}R_0}{\sqrt{n}|\mathcal{D}_i|}+ \frac{2\eta_0 \tau C^3 R_0\left(1-q^t\right)}{\sqrt{n}\left(1-q\right)} \nonumber\\
	&\leq \frac{2\eta_0 \tau C^3 R_0}{\sqrt{n}}+ \frac{2\eta_0 \tau C^3 R_0}{\sqrt{n}\left(1-q\right)} \nonumber\\
	&=\frac{4\eta_0 \tau C^3 R_0\left(2-q\right)}{\sqrt{n}\left(1-q\right)}\nonumber\\
	&\leq  \frac{\lambda_m}{3}, \label{eqn:n3}
\end{align}
where the second step holds because $\bar{r}_i\leq \tau$, $|\mathcal{D}_i|\geq1$ and $0\leq q<1$, and the last step holds when $n\geq n'''$ with $n'''=\frac{144\eta_0^2 \tau^2 C^6 R_0^2\left(2-q\right)^2}{\lambda_m^2\left(1-q\right)^2}$. Then, plugging \eqref{eqn:n2} and \eqref{eqn:n3} into \eqref{eq:equation68}, we can obtain
\begin{equation}\label{eq:equation60}
	\left\|I-\frac{\eta_0\tau}{|\mathcal{D}|}\sum_{i=1}^{M}\Theta_i^{t\tau+\bar{r}_i}\right\|_{\rm op} \leq 1-\frac{\eta_0\tau\lambda_m}{3|\mathcal{D}|}.
\end{equation}
To bound $\sum_{i=1}^{M}p_i\big\|\Omega\big(e^{-\frac{\eta_0\tau}{|\mathcal{D}_i|}\Theta_i^{t\tau+\bar{r}_i}}\big)\big\|_{\rm op}$, we first bound $\left\|\Theta_i^{t\tau+\bar{r}_i}\right\|_{\rm op}$. According to \eqref{equation21}, we have
\begin{align}\label{eq:equation54}
	\left\|\Theta_i^{t\tau+\bar{r}_i}\right\|_{\rm op}
	&= \frac{1}{n}\left\|J\left(\theta_i^{t\tau+\bar{r}_i}\right)^TJ_i\left(\theta_i^{t\tau+\bar{r}_i}\right)P_i\right\|_{\rm op}\nonumber\\
	&\leq \frac{1}{n}\left\|J\left(\theta_i^{t\tau+\bar{r}_i}\right)\right\|_F\left\|J_i\left(\theta_i^{t\tau+\bar{r}_i}\right)\right\|_F\left\|P_i\right\|_{\rm op}\nonumber\\
	&= \frac{1}{n}\left\|J\left(\theta_i^{t\tau+\bar{r}_i}\right)\right\|_F\left\|J_i\left(\theta_i^{t\tau+\bar{r}_i}\right)\right\|_F\nonumber\\
	&\leq C^2,
\end{align}
where the last step holds beacuse of Lemma~\ref{lem:lemma1} and Lemma~\ref{lem:lemma2}. Then, according to Lemma~\ref{lem:lemma3}, we can obtain 
\begin{align}\label{eq:equation45}
	\sum_{i=1}^{M}p_i\left\|\Omega\left(e^{-\frac{\eta_0\tau}{|\mathcal{D}_i|}\Theta_i^{t\tau+\bar{r}_i}}\right)\right\|_{\rm op} 
	&\leq \sum_{i=1}^{M}\frac{p_i}{2}\left(\frac{\eta_0\tau C^2}{|\mathcal{D}_i|}\right)^2e^{\frac{\eta_0\tau C^2}{|\mathcal{D}_i|}}\nonumber\\
	&=\sum_{i=1}^{M}\frac{\eta_0^2\tau^2C^4}{2|\mathcal{D}_i||\mathcal{D}|}e^{\frac{\eta_0\tau C^2}{|\mathcal{D}_i|}}\nonumber\\
	&\leq \sum_{i=1}^{M}\frac{\eta_0^2\tau^2C^4}{2|\mathcal{D}|}e^{\eta_0\tau C^2}\nonumber\\
	&=\frac{M\eta_0^2\tau^2C^4}{2|\mathcal{D}|}e^{\eta_0\tau C^2}\nonumber\\
	&\leq \frac{\eta_0^2\tau^2C^4}{2}e^{\eta_0\tau C^2}.
\end{align}
Further plugging \eqref{eq:equation60} and \eqref{eq:equation45} to \eqref{eq:equation42}, we have
\begin{align}\label{eq:equation46}
	\left\|\sum_{i=1}^{M}p_ig\left(\theta_i^{t\tau+\tau}\right)\right\|_2\leq\left(1-\frac{\eta_0\tau\lambda_m}{3|\mathcal{D}|}+\frac{\eta_0^2\tau^2C^4}{2}e^{\eta_0\tau C^2}\right)\left\|g\left(\theta_i^{t\tau}\right)\right\|_2 \triangleq q\left\|g\left(\theta_i^{t\tau}\right)\right\|_2
\end{align}

\paragraph{2) Bounding $\big\|\sum_{i=1}^{M}p_i\Omega_i\big\|_2$:}  By defining $
	\Gamma\left(\beta\right) = g\big(\theta^{\left(t+1\right)\tau}+\beta\Delta \theta_i^{\left(t+1\right)\tau}\big), \beta \in \left[0,1\right]
$, 
we can obtain
\begin{equation}
	\Gamma\left(0\right) = g\left(\theta^{\left(t+1\right)\tau}\right), \quad  \Gamma'\left(0\right) = \nabla g\left(\theta^{\left(t+1\right)\tau}\right)\Delta \theta_i^{\left(t+1\right)\tau}, \quad 	\Gamma\left(1\right) = g\left(\theta_i^{t\tau+\tau}\right).
\end{equation}
We further define
\begin{align}\label{eq:equation200}
	u\left(\beta\right) = \Gamma\left(\beta\right) + \left(1-\beta\right)\Gamma'\left(\beta\right).
\end{align}
Then, we can obtain
\begin{align}\label{eq:equation777}
	\left\|u\left(1\right) - u\left(0\right)\right\|_2 
	&= \left\|\Gamma\left(1\right) - \Gamma\left(0\right) -\Gamma'\left(0\right)\right\|_2\nonumber\\
	&= \left\|g\left(\theta_i^{t\tau+\tau}\right) - g\left(\theta^{\left(t+1\right)\tau}\right) -\nabla g\left(\theta^{\left(t+1\right)\tau}\right)\Delta \theta_i^{\left(t+1\right)\tau}\right\|_2\nonumber\\
	&= \left\|\Omega_i\right\|_2.
\end{align}
According to mean value inequality, we have
\begin{align}\label{eq:equation77}
	\left\|u\left(1\right) - u\left(0\right)\right\|_2 \leq \sup_{\substack{0\leq\beta\leq1}}\left\|u'\left(\beta\right)\right\|_{2}\left(1-0\right).
\end{align}
Combining \eqref{eq:equation777} and \eqref{eq:equation77} yields
\begin{align}\label{eq:equation301}
	\left\|\Omega_i\right\|_2\leq \sup_{\substack{0\leq\beta\leq1}}\left\|u'\left(\beta\right)\right\|_2.
\end{align}
Taking the derivative on both side of \eqref{eq:equation200} yields
\begin{align} 
	u'\left(\beta\right) 
	&= \Gamma'\left(\beta\right) - \Gamma'\left(\beta\right) + \left(1-\beta\right)\Gamma''\left(\beta\right)\nonumber\\
	&= \left(1-\beta\right)\Gamma''\left(\beta\right) \nonumber\\
	&= \left(1-\beta\right)\left(\Delta \theta_i^{\left(t+1\right)\tau}\right)^T\nabla^2 g\left(\theta^{\left(t+1\right)\tau}+\beta\Delta  \theta_i^{\left(t+1\right)\tau}\right) \Delta \theta_i^{\left(t+1\right)\tau}. \label{eqn:u'}
\end{align}
By plugging~\eqref{eqn:u'} into \eqref{eq:equation301}, and then \eqref{eq:equation301} into $\big\|\sum_{i=1}^{M}p_i\Omega_i\big\|_2$, we can obtain
\begin{align}\label{eq:equation300}
	\left\|\sum_{i=1}^{M}p_i\Omega_i\right\|_2 
	&\leq \sum_{i=1}^{M}p_i\left\|\Omega_i\right\|_2 \nonumber\\
	&\leq \sum_{i=1}^{M}p_i\sup_{\substack{0\leq\beta\leq1}}\left\|u'\left(\beta\right)\right\|_2\nonumber\\
	&=\sum_{i=1}^{M}p_i\left(1-\beta\right)\sup_{\substack{0\leq\beta\leq1}}\left\|\left(\Delta \theta_i^{\left(t+1\right)\tau}\right)^T\nabla^2 g\left(\theta^{\left(t+1\right)\tau}+\beta\Delta  \theta_i^{\left(t+1\right)\tau}\right) \Delta \theta_i^{\left(t+1\right)\tau}\right\|_2\nonumber\\
	&\leq \sum_{i=1}^{M}p_i\left(1-\beta\right)\left\|\Delta \theta_i^{\left(t+1\right)\tau}\right\|_2\sup_{\substack{0\leq\beta\leq1}}\left\|\nabla^2 g\left(\theta^{\left(t+1\right)\tau}+\beta\Delta  \theta_i^{\left(t+1\right)\tau}\right)\right\|_{\rm op} \left\|\Delta \theta_i^{\left(t+1\right)\tau}\right\|_2\nonumber\\
	&\overset{\sf(a)}{\leq} \sum_{i=1}^{M}p_i\left(1-\beta\right)\left\|\Delta \theta_i^{\left(t+1\right)\tau}\right\|_2 C_1\sqrt{n} \left\|\Delta \theta_i^{\left(t+1\right)\tau}\right\|_2\nonumber\\
	&\overset{\sf(b)}{\leq} \sum_{i=1}^{M}p_i\left(1-\beta\right) \frac{2\eta_0\tau CR_0}{\sqrt{n}\left(1-q\right)} \cdot C_1\sqrt{n} \cdot\frac{2\eta_0\tau CR_0}{\sqrt{n}\left(1-q\right)}\nonumber\\
	&\overset{\sf(c)}{\leq} \frac{4\eta_0^2\tau^2 C^2R_0^2C_1}{\sqrt{n}\left(1-q\right)^2}\nonumber\\
	&= \frac{2\eta_0\tau CC_1R_0\zeta}{\left(1-q\right)}.
\end{align}
where step $\sf (a)$ applies $\frac{1}{\sqrt{n}}\left\|\nabla^2g\left(\theta\right)\right\|_{\rm op}\leq C_1$, which is proved by \citet[Lemma  1]{jacot2020asymptotic} under Assumption~\ref{ass:activation}, step $\sf (b)$ comes from \eqref{eq:equation39}, step $\sf (c)$ holds because $\beta\in[0,1]$, and the last step comes from \eqref{eq:equation165}. Plugging \eqref{eq:equation46} and \eqref{eq:equation300} into \eqref{eq:equation41}:
\begin{align}\label{eq:equation61}
	\left\|g\left(\theta^{\left(t+1\right)\tau}\right)\right\|_2
	&\leq q\left\|g\left(\theta^{t\tau}\right)\right\|_2+\frac{2\eta_0\tau CC_1R_0\zeta}{\left(1-q\right)}\nonumber\\
	&\overset{\sf (a)}{=} q\left[q^{t}\left(R_0-\frac{2\eta_0\tau CC_1R_0\zeta}{\left(1-q\right)^2}\right)+\frac{2\eta_0\tau CC_1R_0\zeta}{\left(1-q\right)^2}\right] + \left(1-q\right)\frac{2\eta_0\tau CC_1R_0\zeta}{\left(1-q\right)^2}\nonumber\\
	&=q^{t+1}\left(R_0-\frac{2\eta_0\tau CC_1R_0\zeta}{\left(1-q\right)^2}\right)+\frac{2\eta_0\tau CC_1R_0\zeta}{\left(1-q\right)^2},
\end{align}
where step $\sf (a)$ applies induction hypothesis \eqref{equation39}. At this point, we have proven \eqref{eq:equation64}. Recall that \eqref{eq:equation65} have also been proven and we require $n \geq n', n'', n'''$ during the derivations, and hence the induction hypotheses \eqref{equation39} and \eqref{equation40} hold for $n \geq N\triangleq\max \{n', n'', n'''\}$.


Noting that we also require $0\leq q<1$ to complete the proof, we prove there exists $\eta_0>0$ such that $0\leq q<1$ in the following. Referring to \eqref{eq:equation46}, we rewrite $q$ as a function of $\eta_0$ as
\begin{equation}
	q\left(\eta_0\right) = 1-\frac{\eta_0\tau\lambda_m}{3|\mathcal{D}|}+\frac{\eta_0^2\tau^2C^4}{2}e^{\eta_0\tau C^2}. \label{eqn:q}
\end{equation}
Taking the derivative of $q\left(\eta_0\right)$ with respect to $\eta_0$, we obtain
\begin{equation}
	q'\left(\eta_0\right)=-\frac{\tau\lambda_m}{3|\mathcal{D}|}+\eta_0\tau^2C^4e^{\eta_0\tau C^2}+\frac{\eta_0^2\tau^3C^6}{2}e^{\eta_0\tau C^2}.
\end{equation}
Further taking the second derivative, we have
\begin{equation}
	q''\left(\eta_0\right)=\tau^2C^4e^{\eta_0\tau C^2}+\eta_0\tau^3C^6e^{\eta_0\tau C^2}+\frac{\eta_0^2\tau^4C^8}{2}e^{\eta_0\tau C^2} > 0.
\end{equation}
Therefore, $q'\left(\eta_0\right)$ is monotonically increasing. According to Assumption \ref{ass:ntkfullrank}, we have $\lambda_m > 0$ and hence it obvious that $q'\left(0\right)<0$ and $\lim\limits_{\eta_0\to\infty} q'\left(\eta_0\right)> 0$, and hence there exists $\eta_0'>0$ such that $q'\left(\eta_0'\right)=0$ holds. Consequently, $q\left(\eta_0\right)$ is monotonically decreasing on $\left(0,\eta_0'\right]$, and monotonically increasing on $\left(\eta_0', \infty\right)$. Additionally, from \eqref{eqn:q}, we have $\lim\limits_{\eta_0\to 0} q\left(\eta_0\right) = 1$. Consequently, if $q\left(\eta_0'\right)\geq0$, $0\leq q\left(\eta_0\right)<1$ holds for $0<\eta_0\leq\eta_0'$. Otherwise, if $q\left(\eta_0'\right)<0$, there exists $\eta_0''\in \left(0, \eta_0'\right)$ such that $q\left(\eta_0''\right)=0$ holds. Then, $0\leq q\left(\eta_0\right)<1$ holds for $0<\eta_0\leq\eta_0''$. To sum up, as long as $0<\eta_0\leq\rm{min}\{\eta_0',\eta_0''\}$, $0\leq q\left(\eta_0\right)<1$ holds. 
\subsection{Bounding the Variation on Global and Local NTKs }
We continue to prove \eqref{eq:ntk} and \eqref{eq:local ntk} in Theorem~\ref{theorem1}. Referring to \eqref{eqn:ntk}, we have
\begin{align}
	\left\|\Theta^{t\tau}-\Theta^0\right\|_{F}
	&=\frac{1}{n}\left\|J\left(\theta^{t\tau}\right)^TJ\left(\theta^{t\tau}\right)-J\left(\theta^{0}\right)^TJ\left(\theta^{0}\right)\right\|_{F}\nonumber\\
	&=\frac{1}{n}\left\|\left[J\left(\theta^{t\tau}\right)^T-J\left(\theta^{0}\right)^T\right]J\left(\theta^{t\tau}\right)-J\left(\theta^{0}\right)^T\left[J\left(\theta^{t\tau}\right)-J\left(\theta^{0}\right)\right]\right\|_{F}\nonumber\\
	&\leq \frac{1}{n}\left\|J\left(\theta^{t\tau}\right)-J\left(\theta^{0}\right)\right\|_F\left\|J\left(\theta^{t\tau}\right)\right\|_F+\frac{1}{n}\left\|J\left(\theta^{0}\right)\right\|_F\left\|J\left(\theta^{t\tau}\right)-J\left(\theta^{0}\right)\right\|_{F}\nonumber\\
	&\leq 2C^2\left\|\theta^{t\tau}-\theta^{0}\right\|_2\nonumber\\
	&\leq \frac{2\eta_0 \tau C^3 R_0\left(1-q^t\right)}{\sqrt{n}\left(1-q\right)}.
\end{align}
where the fourth step holds because of Lemma~\ref{lem:lemma1} and the last step holds because of \eqref{equation40}. 

Referring to \eqref{equation21}, we have
\begin{align}
	\left\|\Theta_i^{t\tau+r} - \Theta_i^0\right\|_F &= \frac{1}{n}\left\|J\left(\theta_i^{t\tau+r}\right)^TJ_i\left(\theta_i^{t\tau+r}\right)P_i - J\left(\theta_i^{0}\right)^TJ_i\left(\theta_i^{0}\right)P_i\right\|_F\nonumber\\
	&\leq \frac{1}{n}\left\|J\left(\theta_i^{t\tau+r}\right)^TJ_i\left(\theta_i^{t\tau+r}\right) - J\left(\theta_i^{0}\right)^TJ_i\left(\theta_i^{0}\right)\right\|_F\left\|P_i\right\|_F\nonumber\\
	&\leq \frac{\sqrt{k|\mathcal{D}_i|}}{n}\left\|J\left(\theta_i^{t\tau+r}\right)^TJ_i\left(\theta_i^{t\tau+r}\right) - J\left(\theta_i^{0}\right)^TJ_i\left(\theta_i^{0}\right)\right\|_F\nonumber\\
	&= \frac{\sqrt{k|\mathcal{D}_i|}}{n}\left\|\left[J\left(\theta_i^{t\tau+r}\right)^T-J\left(\theta_i^{0}\right)^T\right]J_i\left(\theta_i^{t\tau+r}\right) + J\left(\theta_i^{0}\right)^T\left[J_i\left(\theta_i^{t\tau+r}\right)-J_i\left(\theta_i^{0}\right)\right]\right\|_F\nonumber\\
	&\leq \frac{\sqrt{k|\mathcal{D}_i|}}{n}\left\|J\left(\theta_i^{t\tau+r}\right)-J\left(\theta_i^{0}\right)\right\|_F\left\|J_i\left(\theta_i^{t\tau+r}\right)\right\|_F + \frac{\sqrt{k|\mathcal{D}_i|}}{n}\left\|J\left(\theta_i^{0}\right)\right\|_F\left\|J_i\left(\theta_i^{t\tau+r}\right)-J_i\left(\theta_i^{0}\right)\right\|_F\nonumber\\
	&\overset{\sf (a)}{\leq} 2C^2\sqrt{k|\mathcal{D}_i|}\left\|\theta_i^{t\tau+r} - \theta_i^{0}\right\|_2\nonumber\\
	&\leq 
	\frac{2\eta_0 r q^{t}C^3R_0\sqrt{k}}{\sqrt{n|\mathcal{D}_i|}}+ \frac{2\eta_0 \tau C^3 R_0\left(1-q^t\right)\sqrt{k|\mathcal{D}_i|}}{\left(1-q\right)\sqrt{n}},
\end{align}
where step $\sf (a)$ employs Lemma~\ref{lem:lemma1} and Lemma~\ref{lem:lemma2}, and the last step holds because of \eqref{eq:equation56}. We have now completed the proof of Theorem~\ref{theorem1}.

\section{Proof of Theorem~\ref{theorem2}}\label{appendixB}
To prove Theorem~\ref{theorem2}, we first bound $\left\|g^{\rm lin}\left(\theta^{\left(t+1\right)\tau}\right)-g\left(\theta^{\left(t+1\right)\tau}\right)\right\|_2$. According to \eqref{eq:equation25}, we have
\begin{equation}
	g^{\rm lin}\left(\theta_i^{t\tau+r}\right)=g\left(\theta^0\right)+J\left(\theta^0\right)^T\left(\theta_i^{t\tau+r}-\theta^0\right).
\end{equation}
Taking the derivative with respect to $\theta_i^{t\tau+r}$ on both sides yields
\begin{equation}
	J^{\rm lin}\left(\theta_i^{t\tau+r}\right) = J\left(\theta^0\right).
\end{equation}
Then, we consider
\begin{align}\label{eq:equation63}
	&\frac{d}{dr}\left(e^{\frac{\eta_0r}{|\mathcal{D}_i|}\Theta_i^0}\left[g^{\rm lin}\left(\theta_i^{t\tau+r}\right)-g\left(\theta_i^{t\tau+r}\right)\right]\right)\nonumber\\
	=~&\frac{\eta_0\Theta_i^0}{|\mathcal{D}_i|}e^{\frac{\eta_0r}{|\mathcal{D}_i|}\Theta_i^0}\left[g^{\rm lin}\left(\theta_i^{t\tau+r}\right)-g\left(\theta_i^{t\tau+r}\right)\right] \nonumber\\
	&+ \frac{\eta_0}{|\mathcal{D}_i|} e^{\frac{\eta_0r}{|\mathcal{D}_i|}\Theta_i^0}\left[-\frac{1}{n}J(\theta^0)^TJ_i\left(\theta_i^0\right)g_i^{\rm lin}\left(\theta_i^{t\tau+r}\right)+\frac{1}{n}J\left(\theta_i^{t\tau+r}\right)^TJ_i\left(\theta_i^{t\tau+r}\right)g_i\left(\theta_i^{t\tau+r}\right)\right]\nonumber\\
	=~&\frac{\eta_0\Theta_i^0}{|\mathcal{D}_i|}e^{\frac{\eta_0r}{|\mathcal{D}_i|}\Theta_i^0}\left(g^{\rm lin}\left(\theta_i^{t\tau+r}\right)-g\left(\theta_i^{t\tau+r}\right)\right) \nonumber\\
	&+ \frac{\eta_0}{|\mathcal{D}_i|} e^{\frac{\eta_0r}{|\mathcal{D}_i|}\Theta_i^0}\left[-\frac{1}{n}J(\theta^0)^TJ_i\left(\theta_i^0\right)P_ig^{\rm lin}\left(\theta_i^{t\tau+r}\right)+\frac{1}{n}J\left(\theta_i^{t\tau+r}\right)^TJ_i\left(\theta_i^{t\tau+r}\right)P_ig\left(\theta_i^{t\tau+r}\right)\right]\nonumber\\
	=~&\frac{\eta_0\Theta_i^0}{|\mathcal{D}_i|}e^{\frac{\eta_0r}{|\mathcal{D}_i|}\Theta_i^0}\left[g^{\rm lin}\left(\theta_i^{t\tau+r}\right)-g\left(\theta_i^{t\tau+r}\right)\right] +\frac{\eta_0}{|\mathcal{D}_i|} e^{\frac{\eta_0r}{|\mathcal{D}_i|}\Theta_i^0}\left[-\Theta_i^0g^{\rm lin}\left(\theta_i^{t\tau+r}\right)+\Theta_i^{t\tau+r}g\left(\theta_i^{t\tau+r}\right)\right]\nonumber\\
	=~&\frac{\eta_0}{|\mathcal{D}_i|}e^{\frac{\eta_0r}{|\mathcal{D}_i|}\Theta_i^0}\left(\Theta_i^{t\tau+r}-\Theta_i^0\right)g\left(\theta_i^{t\tau+r}\right).
\end{align}
By integrating both sides from $0$ to $\tau$, we obtain
\begin{equation}\label{eq:equation55}
	g^{\rm lin}\left(\theta_i^{t\tau+\tau}\right)-g\left(\theta_i^{t\tau+\tau}\right)=e^{-\frac{\eta_0\tau}{|\mathcal{D}_i|}\Theta_i^0}\left[g^{\rm lin}\left(\theta^{t\tau}\right)-g\left(\theta^{t\tau}\right)\right]+\frac{\eta_0}{|\mathcal{D}_i|}e^{-\frac{\eta_0\tau}{|\mathcal{D}_i|}\Theta_i^0}\int_{0}^{\tau}e^{\frac{\eta_0r}{|\mathcal{D}_i|}\Theta_i^0}\left(\Theta_i^{t\tau+r}-\Theta_i^0\right)g\left(\theta_i^{t\tau+r}\right)dr.
\end{equation}
The aggregation of linear local models yields
\begin{equation}
	g^{\rm lin}\left(\theta^{\left(t+1\right)\tau}\right)=\sum_{i=1}^{M}p_ig^{\rm lin}\left(\theta_i^{t\tau+\tau}\right).
\end{equation}
Further considering equation~\eqref{eq:equation20}, we have
\begin{equation}
	g^{\rm lin}\left(\theta^{\left(t+1\right)\tau}\right)-g\left(\theta^{\left(t+1\right)\tau}\right)=\sum_{i=1}^{M}p_i\left[g^{\rm lin}\left(\theta_i^{t\tau+\tau}\right)-g\left(\theta_i^{t\tau+\tau}\right)\right]+\sum_{i=1}^{M}p_i\Omega_i.
\end{equation}
By taking the norm on both sides, we obtain
\begin{align}\label{eq:equation58}
	&\left\|g^{\rm lin}\left(\theta^{\left(t+1\right)\tau}\right)-g\left(\theta^{\left(t+1\right)\tau}\right)\right\|_2\nonumber\\
	&=\left\|\sum_{i=1}^{M}p_i\left[g^{\rm lin}\left(\theta_i^{t\tau+\tau}\right)-g\left(\theta_i^{t\tau+\tau}\right)\right]+\sum_{i=1}^{M}p_i\Omega_i\right\|_2\nonumber\\
	&\leq\left\|\sum_{i=1}^{M}p_i\left[g^{\rm lin}\left(\theta_i^{t\tau+\tau}\right)-g\left(\theta_i^{t\tau+\tau}\right)\right]\right\|_2+\left\|\sum_{i=1}^{M}p_i\Omega_i\right\|_2\nonumber\\
	&\leq\left\|\sum_{i=1}^{M}p_ie^{-\frac{\eta_0\tau}{|\mathcal{D}_i|}\Theta_i^0}\left[g^{\rm lin}\left(\theta^{t\tau}\right)-g\left(\theta^{t\tau}\right)\right]\right\|_{2} + \left\|\sum_{i=1}^{M}p_i\frac{\eta_0}{|\mathcal{D}_i|}e^{-\frac{\eta_0\tau}{|\mathcal{D}_i|}\Theta_i^0}\int_{0}^{\tau}e^{\frac{\eta_0r}{|\mathcal{D}_i|}\Theta_i^0}\left(\Theta_i^{t\tau+r}-\Theta_i^0\right)g\left(\theta_i^{t\tau+r}\right)dr\right\|_{\rm op} \nonumber \\
	&\quad + \left\|\sum_{i=1}^{M}p_i\Omega_i\right\|_2,
\end{align}
where the last step is obtained by substituting~\eqref{eq:equation55}, and $\big\|\sum_{i=1}^{M}p_i\Omega_i\big\|_2$ is bounded in \eqref{eq:equation300}. In the following, we bound  $ \big\|\sum_{i=1}^{M}p_ie^{-\frac{\eta_0\tau}{|\mathcal{D}_i|}\Theta_i^0}\left(g^{\rm lin}\left(\theta^{t\tau}\right)-g\left(\theta^{t\tau}\right)\right)\big\|_{2}$ and $\big\|\sum_{i=1}^{M}p_i\frac{\eta_0}{|\mathcal{D}_i|}e^{-\frac{\eta_0\tau}{|\mathcal{D}_i|}\Theta_i^0}\int_{0}^{\tau}e^{\frac{\eta_0r}{|\mathcal{D}_i|}\Theta_i^0}\left(\Theta_i^{t\tau+r}-\Theta_i^0\right)g\left(\theta_i^{t\tau+r}\right)dr\big\|_{\rm op}$, respectively, and finally bound $\left\|f^{\rm lin}\left(\theta^{t\tau+r}\right)-f\left(\theta^{t\tau+r}\right)\right\|_2$ as well as $\left\|f_i^{\rm lin}\left(\theta_i^{t\tau+r}\right)-f_i\left(\theta_i^{t\tau+r}\right)\right\|_2$.

\subsection{Bounding $\big\|\sum_{i=1}^{M}p_ie^{-\frac{\eta_0\tau}{|\mathcal{D}_i|}\Theta_i^0}\left(g^{\rm lin}\left(\theta^{t\tau}\right)-g\left(\theta^{t\tau}\right)\right)\big\|_{2}$}
By taking the Taylor series expansion of $e^{-\frac{\eta_0\tau}{|\mathcal{D}_i|}\Theta_i^0}$, we obtain
\begin{align}\label{eq:equation59}
	&\left\|\sum_{i=1}^{M}p_ie^{-\frac{\eta_0\tau}{|\mathcal{D}_i|}\Theta_i^0}\left(g^{\rm lin}\left(\theta^{t\tau}\right)-g\left(\theta^{t\tau}\right)\right)\right\|_{2}\nonumber\\
	&\leq \left\|\sum_{i=1}^{M}p_ie^{-\frac{\eta_0\tau}{|\mathcal{D}_i|}\Theta_i^0}\right\|_{\rm op}\left\|g^{\rm lin}\left(\theta^{t\tau}\right)-g\left(\theta^{t\tau}\right)\right\|_{2}\nonumber\\
	&= \left\|\sum_{i=1}^{M}p_i\left[I-\frac{\eta_0\tau}{|\mathcal{D}_i|}\Theta_i^0+\Omega\left(e^{-\frac{\eta_0\tau}{|\mathcal{D}_i|}\Theta_i^0}\right)\right]\right\|_{\rm op}\left\|g^{\rm lin}\left(\theta^{t\tau}\right)-g\left(\theta^{t\tau}\right)\right\|_{2}\nonumber\\
	&= \left\|I-\frac{\eta_0\Theta^0\tau}{|\mathcal{D}|}\right\|_{\rm op}\left\|g^{\rm lin}\left(\theta^{t\tau}\right)-g\left(\theta^{t\tau}\right)\right\|_{2}+\sum_{i=1}^{M}p_i\left\|\Omega\left(e^{-\frac{\eta_0\tau}{|\mathcal{D}_i|}\Theta_i^0}\right)\right\|_{\rm op}\left\|g^{\rm lin}\left(\theta^{t\tau}\right)-g\left(\theta^{t\tau}\right)\right\|_{2}.
\end{align}
Next, we bound $\left\|I-\frac{\eta_0\Theta^0\tau}{|\mathcal{D}|}\right\|_{\rm op}$ and $\sum_{i=1}^{M}p_i\left\|\Omega\left(e^{-\frac{\eta_0\tau}{|\mathcal{D}_i|}\Theta_i^0}\right)\right\|_{\rm op}$, respectively.
\begin{align}\label{eq:equation84}
	\left\|I-\frac{\eta_0\Theta^0\tau}{|\mathcal{D}|}\right\|_{\rm op}
	&=\left\|I-\frac{\eta_0\Theta\tau}{|\mathcal{D}|}+\frac{\eta_0\Theta\tau}{|\mathcal{D}|}-\frac{\eta_0\Theta^0\tau}{|\mathcal{D}|}\right\|_{\rm op}\nonumber\\
	&\leq \left\|I-\frac{\eta_0\Theta\tau}{|\mathcal{D}|}\right\|_{\rm op}+\left\|\frac{\eta_0\Theta\tau}{|\mathcal{D}|}-\frac{\eta_0\Theta^0\tau}{|\mathcal{D}|}\right\|_{\rm op}\nonumber\\
	&= \left(1-\frac{\eta_0\tau\lambda_m}{|\mathcal{D}|}\right)+\left\|\frac{\eta_0\Theta\tau}{|\mathcal{D}|}-\frac{\eta_0\Theta^0\tau}{|\mathcal{D}|}\right\|_{\rm op}\nonumber\\
	&\leq \left(1-\frac{\eta_0\tau\lambda_m}{|\mathcal{D}|}\right)+\left\|\frac{\eta_0\Theta\tau}{|\mathcal{D}|}-\frac{\eta_0\Theta^0\tau}{|\mathcal{D}|}\right\|_{F}\nonumber\\
	&\overset{\sf (a)}{\leq} \left(1-\frac{\eta_0\tau\lambda_m}{|\mathcal{D}|}\right)+\frac{\eta_0\tau\lambda_m}{3|\mathcal{D}|}\nonumber\\
	&\leq 1-\frac{\eta_0\tau\lambda_m}{3|\mathcal{D}|}
\end{align}
where step $\sf (a)$ applies \eqref{eqn:n2}. To bound $\sum_{i=1}^{M}p_i\big\|\Omega\big(e^{-\frac{\eta_0\tau}{|\mathcal{D}_i|}\Theta_i^0}\big)\big\|_{\rm op}$, we first bound $\left\|\Theta_i^0\right\|_{\rm op}$  as
\begin{align}
	\left\|\Theta_i^0\right\|_{\rm op}
	&=\left\|\frac{1}{n}J(\theta^0)^TJ_i(\theta_i^0)P_i\right\|_{\rm op}\nonumber\\
	&\leq\frac{1}{n}\left\|J(\theta^0)\right\|_{F}\left\|J_i(\theta_i^0)\right\|_{F}\left\|P_i\right\|_{\rm op}\nonumber\\
	&=\frac{1}{n}\left\|J(\theta^0)\right\|_{F}\left\|J_i(\theta_i^0)\right\|_{F}\nonumber\\
	&\leq C^2,
\end{align}
where the last step applies Lemmas~\ref{lem:lemma1} and \ref{lem:lemma2}. Then, we have
\begin{align}\label{eq:equation86}
	\sum_{i=1}^{M}p_i\left\|\Omega\left(e^{-\frac{\eta_0\tau}{|\mathcal{D}_i|}\Theta_i^0}\right)\right\|_{\rm op}
	&\leq \sum_{i=1}^{M}\frac{p_i}{2}\left(\frac{\eta_0\tau C^2}{|\mathcal{D}_i|}\right)^2e^{\frac{\eta_0\tau C^2}{|\mathcal{D}_i|}}\nonumber\\
	&=\sum_{i=1}^{M}\frac{\eta_0^2\tau^2C^4}{2|\mathcal{D}_i||\mathcal{D}|}e^{\frac{\eta_0\tau C^2}{|\mathcal{D}_i|}}\nonumber\\
	&\leq \sum_{i=1}^{M}\frac{\eta_0^2\tau^2C^4}{2|\mathcal{D}|}e^{\eta_0\tau C^2}\nonumber\\
	&=\frac{M\eta_0^2\tau^2C^4}{2|\mathcal{D}|}e^{\eta_0\tau C^2}\nonumber\\
	&\leq \frac{\eta_0^2\tau^2C^4}{2}e^{\eta_0\tau C^2}.
\end{align}
where the first step employs Lemma~\ref{lem:lemma3}. Further, plugging \eqref{eq:equation84} and \eqref{eq:equation86} into \eqref{eq:equation59} yields
\begin{equation}\label{eq:equation87}
	\left\|\sum_{i=1}^{M}p_ie^{-\frac{\eta_0\tau}{|\mathcal{D}_i|}\Theta_i^0}\left(g^{\rm lin}\left(\theta^{t\tau}\right)-g\left(\theta^{t\tau}\right)\right)\right\|_{2} \leq q\left\|g^{\rm lin}\left(\theta^{t\tau}\right)-g\left(\theta^{t\tau}\right)\right\|_{2}.
\end{equation}

\subsection{Bounding $\big\|\sum_{i=1}^{M}p_i\frac{\eta_0}{|\mathcal{D}_i|}e^{-\frac{\eta_0\tau}{|\mathcal{D}_i|}\Theta_i^0}\int_{0}^{\tau}e^{\frac{\eta_0r}{|\mathcal{D}_i|}\Theta_i^0}\left(\Theta_i^{t\tau+r}-\Theta_i^0\right)g\left(\theta_i^{t\tau+r}\right)dr\big\|_{\rm op}$}
Considering $p_i=\frac{|\mathcal D_i|}{|\mathcal D|}$, we can obtain
\begin{align}\label{eq:equation76}
	&\left\|\sum_{i=1}^{M}p_i\frac{\eta_0}{|\mathcal{D}_i|}e^{-\frac{\eta_0\tau}{|\mathcal{D}_i|}\Theta_i^0}\int_{0}^{\tau}e^{\frac{\eta_0r}{|\mathcal{D}_i|}\Theta_i^0}\left(\Theta_i^{t\tau+r}-\Theta_i^0\right)g\left(\theta_i^{t\tau+r}\right)dr\right\|_{\rm op}\nonumber\\
	&=\left\|\sum_{i=1}^{M}\frac{\eta_0}{|\mathcal{D}|}e^{-\frac{\eta_0\tau}{|\mathcal{D}_i|}\Theta_i^0}\int_{0}^{\tau}e^{\frac{\eta_0r}{|\mathcal{D}_i|}\Theta_i^0}\left(\Theta_i^{t\tau+r}-\Theta_i^0\right)g\left(\theta_i^{t\tau+r}\right)dr\right\|_{\rm op}\nonumber\\
	&\overset{\sf (a)}{=}\left\|\sum_{i=1}^{M}\frac{\eta_0\tau}{|\mathcal{D}|}\left(\Theta_i^{t\tau+\tilde{r}_i}-\Theta_i^0\right)e^{\frac{\eta_0\left(\tilde{r}_i-\tau\right)}{|\mathcal{D}_i|}\Theta_i^0}g\left(\theta_i^{t\tau+\tilde{r}_i}\right)\right\|_{2}\nonumber\\
	&\leq\sum_{i=1}^{M}\frac{\eta_0\tau}{|\mathcal{D}|}\left\|\left(\Theta_i^{t\tau+\tilde{r}_i}-\Theta_i^0\right)\right\|_{\rm op}\left\|e^{\frac{\eta_0\left(\tilde{r}_i-\tau\right)}{|\mathcal{D}_i|}\Theta_i^0}\right\|_{\rm op}\left\|g\left(\theta_i^{t\tau+\tilde{r}_i}\right)\right\|_{2}\nonumber\\
	&\overset{\sf (b)}{\leq} \sum_{i=1}^{M}\frac{\eta_0\tau}{|\mathcal{D}|}\left\|\left(\Theta_i^{t\tau+\tilde{r}_i}-\Theta_i^0\right)\right\|_{\rm op}\left\|g\left(\theta_i^{t\tau+\tilde{r}_i}\right)\right\|_{2}\nonumber\\
	&\overset{\sf (c)}{\leq} \sum_{i=1}^{M}\frac{\eta_0\tau}{|\mathcal{D}|}\left\|\left(\Theta_i^{t\tau+\tilde{r}_i}-\Theta_i^0\right)\right\|_{\rm op}\left\|e^{-\frac{\eta_0\tau}{|\mathcal{D}_i|}\Theta_i^{t\tau+r'_i}}\right\|_{\rm op}\left\|g\left(\theta_i^{t\tau}\right)\right\|_{2}\nonumber\\
	&\leq \sum_{i=1}^{M}\frac{\eta_0\tau}{|\mathcal{D}|}\left\|\left(\Theta_i^{t\tau+\tilde{r}_i}-\Theta_i^0\right)\right\|_{\rm op}\left\|g\left(\theta_i^{t\tau}\right)\right\|_{2},
\end{align}
where step $\sf (a)$ holds according to the mean value theorem of integrals and $\tilde{r}_i \in \left(0,\tau\right)$, step $\sf (b)$ holds because $\Theta_i^{0}$ is not full rank according to the definition of local NTK \eqref{equation21} and hence $\big\|e^{-\frac{\eta_0\left(\tilde{r}_i-\tau\right)}{|\mathcal{D}_i|}\Theta_i^{0}}\big\|_{\rm op}\leq e^{-\frac{\eta_0\left(\tilde{r}_i-\tau\right)}{|\mathcal{D}_i|}\lambda_{\min}\left(\Theta_i^{0}\right)}= 1$, step $\sf(c)$ comes from \eqref{eq:equation49} and $r'_i\in \left(0,\tilde{r}_i\right)$, the last step holds because $\Theta_i^{t\tau+r'_i}$ is not full rank. Then, we proceed to bound $\left\|\Theta_i^{t\tau+\tilde{r}_i}-\Theta_i^0\right\|_{\rm op}$, which can be derived as
\begin{align}\label{eq:equation75}
	&\left\|\Theta_i^{t\tau+\tilde{r}_i}-\Theta_i^0\right\|_{\rm op}\nonumber\\
	&= \frac{1}{n}\left\|J\left(\theta_i^{t\tau+\tilde{r}_i}\right)^TJ_i\left(\theta_i^{t\tau+\tilde{r}_i}\right)-J\left(\theta_i^0\right)^TJ_i\left(\theta_i^0\right)\right\|_{\rm op}\nonumber\\
	&= \frac{1}{n}\left\|[J\left(\theta_i^{t\tau+\tilde{r}_i}\right)^T-J\left(\theta_i^0\right)^T]J_i\left(\theta_i^{t\tau+\tilde{r}_i}\right)+J\left(\theta_i^0\right)^T[J\left(\theta_i^{t\tau+\tilde{r}_i}\right)-J_i\left(\theta_i^0\right)]\right\|_{\rm op}\nonumber\\
	&\leq \frac{1}{n}\left\|J\left(\theta_i^{t\tau+\tilde{r}_i}\right)^T-J\left(\theta_i^0\right)^T\right\|_F\left\|J_i\left(\theta_i^{t\tau+\tilde{r}_i}\right)\right\|_F+\left\|J\left(\theta_i^0\right)^T\right\|_F\left\|J\left(\theta_i^{t\tau+\tilde{r}_i}\right)-J_i\left(\theta_i^0\right)\right\|_F\nonumber\\
	&\leq 2C^2\left\|\theta_i^{t\tau+\tilde{r}_i} - \theta^0\right\|_2\nonumber\\
	&\overset{\sf (a)}{\leq} \frac{2\eta_0 \tilde{r}_iC^3R_0 q^{t}}{\sqrt{n}|\mathcal{D}_i|}+ \frac{2\eta_0 \tau C^3R_0\left(1-q^t\right) }{\sqrt{n}\left(1-q\right)}\nonumber\\
	&\leq \frac{2\eta_0 \tau C^3 q^{t}R_0}{\sqrt{n}}+ \frac{2\eta_0 \tau C^3R_0\left(1-q^t\right) }{\sqrt{n}\left(1-q\right)},
\end{align}
where step (a) comes from \eqref{eq:equation56} and the last step holds because $\tilde{r}_i\leq\tau$ and $|\mathcal{D}_i|\geq1$. Plugging \eqref{eq:equation75} into \eqref{eq:equation76} yields
\begin{align}\label{eq:equation120}
	&\left\|\sum_{i=1}^{M}p_i\frac{\eta_0}{|\mathcal{D}_i|}e^{-\frac{\eta_0\tau}{|\mathcal{D}_i|}\Theta_i^0}\int_{0}^{\tau}e^{\frac{\eta_0r}{|\mathcal{D}_i|}\Theta_i^0}\left(\Theta_i^{t\tau+r}-\Theta_i^0\right)g\left(\theta_i^{t\tau+r}\right)dr\right\|_{\rm op}\nonumber\\
	&\leq \frac{M\eta_0\tau}{|\mathcal{D}|}\left[\frac{2\eta_0 \tau C^3 q^{t}R_0}{\sqrt{n}}+ \frac{2\eta_0 \tau C^3R_0\left(1-q^t\right) }{\sqrt{n}\left(1-q\right)}\right]\left\|g\left(\theta_i^{t\tau}\right)\right\|_{2}\nonumber\\
	&\overset{\sf (a)}{\leq} \frac{M\eta_0\tau}{|\mathcal{D}|}\left[\frac{2\eta_0 \tau C^3 q^{t}R_0}{\sqrt{n}}+ \frac{2\eta_0 \tau C^3R_0\left(1-q^t\right) }{\sqrt{n}\left(1-q\right)}\right]\left[q^{t}\left(R_0-\frac{2\eta_0\tau CC_1R_0\zeta}{\left(1-q\right)^2}\right) + \frac{2\eta_0\tau CC_1R_0\zeta}{\left(1-q\right)^2}\right]\nonumber\\
	&\overset{\sf(b)}{=} \frac{M\eta_0\tau}{|\mathcal{D}|}\left[\frac{2\eta_0 \tau C^3 q^{t}R_0}{\sqrt{n}}+ \frac{2\eta_0 \tau C^3R_0\left(1-q^t\right) }{\sqrt{n}\left(1-q\right)}\right]q^tR_0\nonumber\\
	&= \frac{2M\eta_0^2\tau^2 C^3R_0^2q^t\left(1-q^{t+1}\right)}{\sqrt{n}|\mathcal{D}|\left(1-q\right)}\nonumber\\
	&\leq \frac{2\eta_0^2\tau^2 C^3R_0^2q^t\left(1-q^{t+1}\right)}{\sqrt{n}\left(1-q\right)},
\end{align}
where step $\sf (a)$ comes from \eqref{eq:rate} in Theorem~\ref{theorem1} and step $(b)$ omits  $\frac{\zeta}{\sqrt{n}}=\mathcal{O}(n^{-1})$.

\subsection{Bounding $\left\|f^{\rm lin}\left(\theta^{t\tau+r}\right)-f\left(\theta^{t\tau+r}\right)\right\|_2$ and $\left\|f_i^{\rm lin}\left(\theta_i^{t\tau+r}\right)-f_i\left(\theta_i^{t\tau+r}\right)\right\|_2$}
Plugging \eqref{eq:equation120} and \eqref{eq:equation87} into \eqref{eq:equation58} yields
\begin{align}\label{eq:equation78}
	\left\|g^{\rm lin}\left(\theta^{\left(t+1\right)\tau}\right)-g\left(\theta^{\left(t+1\right)\tau}\right)\right\|_2
	&\leq q\left\|g^{\rm lin}\left(\theta^{t\tau}\right)-g\left(\theta^{t\tau}\right)\right\|_{2} +  \frac{2\eta_0^2\tau^2 C^3R_0^2q^t\left(1-q^{t+1}\right)}{\sqrt{n}\left(1-q\right)}+ \left\|\sum_{i=1}^{M}p_i\Omega_i\right\|_2\nonumber\\
	&\leq q\left\|g^{\rm lin}\left(\theta^{t\tau}\right)-g\left(\theta^{t\tau}\right)\right\|_{2} + \frac{2\eta_0^2\tau^2 C^3R_0^2q^t\left(1-q^{t+1}\right)}{\sqrt{n}\left(1-q\right)}+ \frac{2\eta_0\tau CC_1R_0\zeta}{\left(1-q\right)},
\end{align}
where the last step comes from \eqref{eq:equation300}. By recursively employing \eqref{eq:equation78} and considering the fact $\left\|g^{\rm lin}\left(\theta^{0}\right)-g\left(\theta^{0}\right)\right\|_2 = 0$, we obtain
\begin{align}\label{eq:equation79}
	\left\|g^{\rm lin}\left(\theta^{\left(t+1\right)\tau}\right)-g\left(\theta^{\left(t+1\right)\tau}\right)\right\|_2
	&\leq \frac{1-q^{t+1}}{1-q}\left[\frac{2\eta_0^2\tau^2 C^3R_0^2q^t\left(1-q^{t+1}\right)}{\sqrt{n}\left(1-q\right)}+ \frac{2\eta_0\tau CC_1R_0\zeta}{\left(1-q\right)}\right]\nonumber\\
	& = \frac{2\eta_0^2\tau^2 C^3R_0^2q^t\left(1-q^{t+1}\right)^2}{\sqrt{n}\left(1-q\right)^2}+ \frac{2\eta_0\tau CC_1R_0\zeta\left(1-q^{t+1}\right)}{\left(1-q\right)^2}.
\end{align}
Considering $\zeta = \frac{2\eta_0\tau CR_0}{\sqrt{n}\left(1-q\right)}$ and replacing $t+1$ with $t$, we have
\begin{align}
	\sup_{t\geq 0}\left\|g^{\rm lin}\left(\theta^{t\tau}\right)-g\left(\theta^{t\tau}\right)\right\|_2 = \sup_{t \geq 0}\left\|f^{\rm lin}\left(\theta^{t\tau}\right)-f\left(\theta^{t\tau}\right)\right\|_2 = \mathcal{O}\big(n^{-\frac{1}{2}}\big).
\end{align}

Finally, we bound $\left\|f_i^{\rm lin}\left(\theta_i^{t\tau+r}\right)-f_i\left(\theta_i^{t\tau+r}\right)\right\|_2$ as
\begin{align}
	\left\|f_i^{\rm lin}\left(\theta_i^{t\tau+r}\right)-f_i\left(\theta_i^{t\tau+r}\right)\right\|_2 
	&= \left\|g_i^{\rm lin}\left(\theta_i^{t\tau+r}\right)-g_i\left(\theta_i^{t\tau+r}\right)\right\|_2 \nonumber\\
	&= \left\|P_ig^{\rm lin}\left(\theta_i^{t\tau+r}\right)-P_ig\left(\theta_i^{t\tau+r}\right)\right\|_2\nonumber\\
	&\leq \left\|P_i\right\|_{\rm op}\left\|g^{\rm lin}\left(\theta_i^{t\tau+r}\right)-g\left(\theta_i^{t\tau+r}\right)\right\|_2\nonumber\\
	&=\left\|g^{\rm lin}\left(\theta_i^{t\tau+r}\right)-g\left(\theta_i^{t\tau+r}\right)\right\|_2\nonumber\\
	&\overset{\sf (a)}{\leq}  q\left\|g^{\rm lin}\left(\theta^{t\tau}\right)-g\left(\theta^{t\tau}\right)\right\|_{2} + \frac{2\eta_0^2\tau^2 C^3R_0^2q^t\left(1-q^{t+1}\right)}{\sqrt{n}\left(1-q\right)}\nonumber\\
	&\leq q\left[\frac{2\eta_0^2\tau^2 C^3R_0^2q^{t-1}\left(1-q^{t}\right)^2}{\sqrt{n}\left(1-q\right)^2}+ \frac{2\eta_0\tau CC_1R_0\zeta\left(1-q^{t}\right)}{\left(1-q\right)^2}\right] + \frac{2\eta_0^2\tau^2 C^3R_0^2q^t\left(1-q^{t+1}\right)}{\sqrt{n}\left(1-q\right)}\nonumber\\
	&=\frac{2\eta_0^2\tau^2 C^3R_0^2q^{t}\left(1-q^{t}\right)^2}{\sqrt{n}\left(1-q\right)^2}+ \frac{2\eta_0\tau CC_1R_0\zeta q\left(1-q^{t}\right)}{\left(1-q\right)^2} + \frac{2\eta_0^2\tau^2 C^3R_0^2q^t\left(1-q^{t+1}\right)}{\sqrt{n}\left(1-q\right)},
\end{align}
where step $\sf (a)$ holds because of \eqref{eq:equation55}, \eqref{eq:equation87} and \eqref{eq:equation120}. The third inequation holds because of \eqref{eq:equation79}. Further considering $\zeta = \frac{2\eta_0\tau CR_0}{\sqrt{n}\left(1-q\right)}$, we have
\begin{align}
	\sup_{t\geq0,~1\leq r \leq \tau} \left\|f_i^{\rm lin}\left(\theta_i^{t\tau+r}\right)-f_i\left(\theta_i^{t\tau+r}\right)\right\|_2 = \mathcal{O}\big(n^{-\frac{1}{2}}\big), ~\forall i
\end{align}
Thus, Theorem~\ref{theorem2} is proved.
\section{Proof of Theorem~\ref{theorem3}}\label{appendixD}

The Jacobians of the linear global and local models are
\begin{equation}
	J^{\rm lin}\left(\theta^{t\tau}\right)  = J\left(\theta^0\right), \quad 
	J_i^{\rm lin}\left(\theta_i^{t\tau+r}\right) = J_i\left(\theta_i^0\right).
\end{equation}
The local update process of the linear model can be expressed as
\begin{align}
	\theta_i^{t\tau+r+1} = \theta_i^{t\tau+r} - \eta J_i\left(\theta_i^0\right) g^{\rm lin}_i\left(\theta_i^{t\tau+r}\right).
\end{align}
Via continuous time gradient flow, we obtain
\begin{align}\label{eq:equation97}
	\frac{d\theta_i^{t\tau+r}}{dr}=-\frac{\eta}{|\mathcal{D}_i|} J_i\left(\theta_i^0\right) g^{\rm lin}_i\left(\theta_i^{t\tau+r}\right)=-\frac{\eta}{|\mathcal{D}_i|} J_i\left(\theta_i^0\right) P_ig^{\rm lin}\left(\theta_i^{t\tau+r}\right).
\end{align}
Employing the chain rule yields
\begin{align}\label{eq:equation98}
	\frac{dg^{\rm lin}\left(\theta_i^{t\tau+r}\right)}{dr} &= J\left(\theta_i^0\right)^T\frac{d\theta_i^{t\tau+r}}{dr} \nonumber\\
	&= -\frac{\eta}{|\mathcal{D}_i|} J\left(\theta_i^0\right)^TJ_i\left(\theta_i^0\right)P_ig^{\rm lin}\left(\theta_i^{t\tau+r}\right)\nonumber\\
	&=-\frac{\eta_0\Theta_i^0}{|\mathcal{D}_i|} g^{\rm lin}\left(\theta_i^{t\tau+r}\right).
\end{align}
Integrating from $0$ to $r$ on both sides yields
\begin{align}\label{eq:equation99}
	g^{\rm lin}\left(\theta_i^{t\tau+r}\right) = e^{\frac{-\eta_0r}{\mathcal{D}_i}\Theta_i^0}g^{\rm lin}\left(\theta^{t\tau}\right).
\end{align}
Plugging \eqref{eq:equation99} into \eqref{eq:equation97} yields
\begin{align}
	\frac{d\theta_i^{t\tau+r}}{dr}=-\frac{\eta}{|\mathcal{D}_i|} J_i\left(\theta_i^0\right)P_i e^{\frac{-\eta_0r}{\mathcal{D}_i}\Theta_i^0}g^{\rm lin}\left(\theta^{t\tau}\right).
\end{align}
Integrating from $0$ to $\tau$ on both sides yields
\begin{align}\label{eq:equation109}
	\theta_i^{t\tau+\tau} -\theta^{t\tau} = -\frac{\eta}{|\mathcal{D}_i|} J_i\left(\theta_i^0\right)P_i \left(\int_{0}^{\tau}e^{\frac{-\eta_0r}{\mathcal{D}_i}\Theta_i^0}dr\right) g^{\rm lin}\left(\theta^{t\tau}\right),
\end{align}
where $\int_{0}^{\tau}e^{\frac{-\eta_0r}{\mathcal{D}_i}\Theta_i^0}dr$ can be further derived as
\begin{align}\label{eq:equation110}
	\int_{0}^{\tau}e^{\frac{-\eta_0\Theta_i^0r}{\mathcal{D}_i}}dr
	&=\int_{0}^{\tau}\sum_{k=0}^{\infty}\frac{1}{k!}\left(-\frac{\eta_0r}{|\mathcal{D}_i|}\Theta_i^0\right)^kdr\nonumber\\
	&=\sum_{k=0}^{\infty}\frac{1}{k!}\left(-\frac{\eta_0\tau}{|\mathcal{D}_i|}\Theta_i^0\right)^k\frac{\tau}{k+1}\nonumber\\
	&=\frac{-|\mathcal{D}_i|}{\eta_0}\left(\Theta_i^0\right)^{-1}\sum_{k=0}^{\infty}\frac{1}{\left(k+1\right)!}\left(-\frac{\eta_0\tau}{|\mathcal{D}_i|}\Theta_i^0\right)^{k+1}\nonumber\\
	&=\frac{-|\mathcal{D}_i|}{\eta_0}\left(\Theta_i^0\right)^{-1}\sum_{k=1}^{\infty}\frac{1}{k!}\left(-\frac{\eta_0\tau}{|\mathcal{D}_i|}\Theta_i^0\right)^{k}\nonumber\\
	&=\frac{|\mathcal{D}_i|}{\eta_0}\left(\Theta_i^0\right)^{-1}\left(I-e^{-\frac{\eta_0\tau}{|\mathcal{D}_i|}\Theta_i^0}\right).
\end{align}
Plugging \eqref{eq:equation110} into \eqref{eq:equation109} yields
\begin{align}
	\theta_i^{t\tau+\tau} -\theta^{t\tau} 
	&= -\frac{\eta}{|\mathcal{D}_i|} J_i\left(\theta_i^0\right)P_i \frac{|\mathcal{D}_i|}{\eta_0}\left(\Theta_i^0\right)^{-1}\left(I-e^{-\frac{\eta_0\tau}{|\mathcal{D}_i|}\Theta_i^0}\right) g^{\rm lin}\left(\theta^{t\tau}\right)\nonumber\\
	&=-\frac{1}{n}\left(J\left(\theta^0\right)^T\right)^{-1}J\left(\theta^0\right)^TJ_i\left(\theta_i^0\right)P_i\left(\Theta_i^0\right)^{-1}\left(I-e^{-\frac{\eta_0\tau}{|\mathcal{D}_i|}\Theta_i^0}\right) g^{\rm lin}\left(\theta^{t\tau}\right)\nonumber\\
	&=-\left(J\left(\theta^0\right)^T\right)^{-1}\left(I-e^{-\frac{\eta_0\tau}{|\mathcal{D}_i|}\Theta_i^0}\right) g^{\rm lin}\left(\theta^{t\tau}\right).
\end{align}
Considering the model aggregation process, we have
\begin{align}\label{eq:equation104}
	\theta^{\left(t+1\right)\tau}-\theta^{t\tau} 
	&= \sum_{i=1}^{M}p_i\left(\theta_i^{t\tau+\tau} -\theta^{t\tau}\right)\nonumber\\
	&=-\left(J\left(\theta^0\right)^T\right)^{-1}\sum_{i=1}^{M}p_i\left(I-e^{-\frac{\eta_0\tau}{|\mathcal{D}_i|}\Theta_i^0}\right) g^{\rm lin}\left(\theta^{t\tau}\right)\nonumber\\
	&\overset{\sf(a)}{=}-\left(J\left(\theta^0\right)^T\right)^{-1}\left(I-e^{-\sum_{i=1}^{M}p_i\frac{\eta_0\tau}{|\mathcal{D}_i|}\Theta_i^0}\right) g^{\rm lin}\left(\theta^{t\tau}\right)\nonumber\\
	&=-\left(J\left(\theta^0\right)^T\right)^{-1}\left(I-e^{-\frac{\eta_0\tau}{|\mathcal{D}|}\Theta^0}\right) g^{\rm lin}\left(\theta^{t\tau}\right),
\end{align}
where step $(a)$ employs Lemma~\ref{lem:lemma4}. Considering the aggregation process of the linear model, we have
\begin{align}\label{eq:equation105}
	g^{\rm lin}\left(\theta^{\left(t+1\right)\tau}\right)
	&= \sum_{i=1}^{M} p_ig^{\rm lin}\left(\theta_i^{t\tau+\tau}\right)\nonumber\\
	&= \sum_{i=1}^{M} p_ie^{\frac{-\eta_0\tau}{\mathcal{D}_i}\Theta_i^0}g^{\rm lin}\left(\theta_i^{t\tau}\right)\nonumber\\
	&\overset{\sf (a)}{=} e^{-\eta_0\tau\sum_{i=1}^{M}\frac{p_i}{\mathcal{D}_i}\Theta_i^0}g^{\rm lin}\left(\theta_i^{t\tau}\right)\nonumber\\
	&=e^{-\frac{\eta_0\tau}{\mathcal{D}}\Theta^0}g^{\rm lin}\left(\theta^{t\tau}\right),
\end{align}
where step $\sf (a)$ employs Lemma~\ref{lem:lemma4}. By cursively employing \eqref{eq:equation105}, we can obtain
\begin{equation}
	g^{\rm lin}\left(\theta^{\left(t+1\right)\tau}\right) = e^{-\frac{\eta_0\left(t+1\right)\tau}{\mathcal{D}}\Theta^0}g^{\rm lin}\left(\theta^0\right).
\end{equation}
Replace $t+1$ with $t$ yields $	g^{\rm lin}\left(\theta^{t\tau}\right) = e^{-\frac{\eta_0\Theta^0t\tau}{\mathcal{D}}}g^{\rm lin}\left(\theta^0\right)$, plugging which  into \eqref{eq:equation104} further yields
\begin{align}\label{eq:equation125}
	\theta^{\left(t+1\right)\tau}-\theta^{t\tau}
	&= -\left(J\left(\theta^0\right)^T\right)^{-1}\left(I-e^{-\frac{\eta_0\tau}{|\mathcal{D}|}\Theta^0}\right) e^{\frac{-\eta_0t\tau}{\mathcal{D}}\Theta^0}g^{\rm lin}\left(\theta^0\right)\nonumber\\
	&=-\left(J\left(\theta^0\right)^T\right)^{-1}\left(e^{\frac{-\eta_0t\tau}{\mathcal{D}}\Theta^0}-e^{-\frac{\eta_0\left(t+1\right)\tau}{|\mathcal{D}|}\Theta^0}\right) g^{\rm lin}\left(\theta^0\right).
\end{align}
Employing \eqref{eq:equation125} iteratively yields
\begin{align}
	\theta^{\left(t+1\right)\tau}-\theta^{0} = -\left(J\left(\theta^0\right)^T\right)^{-1}\left(I-e^{-\frac{\eta_0\left(t+1\right)\tau\Theta^0}{|\mathcal{D}|}}\right) g^{\rm lin}\left(\theta^0\right).
\end{align}
Replacing $t+1$ with $t$ yields
\begin{align}
	\theta^{t\tau} - \theta^0 = -\frac{1}{n}J\left(\theta^0\right)\left(\Theta^0\right)^{-1}\left(I-e^{-\frac{\eta_0\Theta^0t\tau}{|\mathcal{D}|}}\right)g^{\rm lin}\left(\theta^0\right).
\end{align}
Therefore, for an arbitrary input $x$, we can obtain the closed form of $g^{\rm lin}\left( x, \theta^{t\tau}\right)$ and $f^{\rm lin}\left(x, \theta^{t\tau} \right)$ as
\begin{align}
	g^{\rm lin}\left(x, \theta^{t\tau}\right)
	&= g\left(x, \theta^0\right) + J\left(x, \theta^0\right)^T\left(\theta^{t\tau}-\theta^0\right)\nonumber\\
	&= g\left(x, \theta^0\right) - \Theta^0\left(x\right)\left(\Theta^0\right)^{-1}\left(I-e^{-\frac{\eta_0\Theta^0t\tau}{|\mathcal{D}|}}\right)g^{\rm lin}\left(\theta^0\right)
\end{align}
and
\begin{align}
	f^{\rm lin}\left(x, \theta^{t\tau}\right)= f\left(x, \theta^0\right) - \Theta^0\left(x\right)\left(\Theta^0\right)^{-1}\left(I-e^{-\frac{\eta_0\Theta^0t\tau}{|\mathcal{D}|}}\right)\left(f\left(\mathcal{X}, \theta^0\right)-\mathcal{Y}\right),
\end{align}
respectively.
\newpage
\section{Network Details} \label{sec:network}
\begin{table}[!htb]
	\caption{Fully-Connected Network}
	\label{FC Networks}
	\vskip 2mm
	\begin{center}
		\begin{small}
			\begin{sc}
				\begin{tabular}{lcc}
					\toprule
					Kayer & Input & Iutput\\
					\midrule
					FC1 & Input Size & $8\times k$ \\
					FC2 & $8\times k$ & $8\times k$\\
					FC3 & $8\times k$ & 
					Output Size \\
					\bottomrule
				\end{tabular}
			\end{sc}
		\end{small}
	\end{center}
	\vskip -0.1in
\end{table}
\begin{table}[!htb]
	\caption{Convolution Network}
	\label{Convolution Network}
	\vskip 2mm
	\begin{center}
		\begin{small}
			\begin{sc}
				\begin{tabular}{lcccc}
					\toprule
					Layer & Channel& Kernel & output & stride\\\midrule
					conv1 & $6\times k$ & $5\times5$ & $28\times28$ &$1$\\
					pooling & $6\times k$ & $2\times 2$ & $14\times14$ & $2$\\
					conv2 & $16\times k$ & $5\times5$ & $10\times10$ &$1$\\
					pooling & $16\times k$ & $2\times 2$ & $5\times5$ & $2$\\
					fc1 & $-$ & $-$ & $120\times k$ &$-$\\
					fc2 & $-$ & $-$ & $84\times k$ &$-$\\
					fc3 & $-$ & $-$ & $10$ &$-$\\
					\bottomrule
				\end{tabular}
			\end{sc}
		\end{small}
	\end{center}
	\vskip -0.1in
\end{table}
\begin{table}[!htb]
	\caption{Residual Network}
	\label{Residual Network}
	\begin{center}
		\begin{small}
			\begin{sc}
				\begin{tabular}{lcc}
					\toprule
					layer & output & block type \\
					\midrule
					conv1    & $32\times32$ & $\left[3 \times 3, \text{channel size} \times k \right]$ \\
					conv2    & $32\times32$ & 
					$\left[
					\begin{array}{c}
						3 \times 3, \text{channel size} \times k \\
						3 \times 3, \text{channel size} \times k
					\end{array}
					\right] \times \psi$ \\
					conv3    & $16\times16$ & 
					$\left[
					\begin{array}{c}
						3 \times 3, \text{channel size} \times k\\
						3 \times 3, \text{channel size} \times k
					\end{array}
					\right] \times \psi$ \\
					conv4    & $8\times8$ & 
					$\left[
					\begin{array}{c}
						3 \times 3, \text{channel size} \times k \\
						3 \times 3, \text{channel size} \times k
					\end{array}
					\right] \times \psi$ \\
					avg-pool & $1\times1$ & $\left[8 \times 8\right]$ \\
					\bottomrule
				\end{tabular}
			\end{sc}
		\end{small}
	\end{center}
	\vskip -0.1in
\end{table}

\end{document}